%%%%%%%% ICML 2025 EXAMPLE LATEX SUBMISSION FILE %%%%%%%%%%%%%%%%%

\documentclass{article}

% Recommended, but optional, packages for figures and better typesetting:
\usepackage{microtype}
\usepackage{graphicx}
\usepackage{subfigure}
\usepackage{booktabs} % for professional tables

\usepackage{url}
\usepackage{graphicx}
\usepackage{multirow}
\usepackage{csquotes}
\usepackage{wrapfig}
\usepackage{physics}
\usepackage{amsmath}
\usepackage{listings}
\usepackage{color}
\usepackage{adjustbox}
\usepackage{amsthm}
\usepackage{bbding}
\usepackage{array}
\usepackage{float}

\usepackage{listings}
\usepackage{xcolor}
\definecolor{codegreen}{rgb}{0,0.6,0}
\definecolor{codegray}{rgb}{0.5,0.5,0.5}
\definecolor{codepurple}{rgb}{0.58,0,0.82}
\definecolor{backcolour}{rgb}{0.95,0.95,0.92}

\lstdefinestyle{mystyle}{
    backgroundcolor=\color{backcolour},   
    commentstyle=\color{codegreen},
    keywordstyle=\color{magenta},
    % numberstyle=\tiny\color{codegray},
    stringstyle=\color{codepurple},
    basicstyle=\ttfamily\footnotesize,
    breakatwhitespace=false,         
    breaklines=true,                 
    captionpos=b,                    
    keepspaces=true,                 
    % numbers=left,                    
    % numbersep=5pt,                  
    showspaces=false,                
    showstringspaces=false,
    showtabs=false,                  
    tabsize=2
}

\lstset{style=mystyle}

% hyperref makes hyperlinks in the resulting PDF.
% If your build breaks (sometimes temporarily if a hyperlink spans a page)
% please comment out the following usepackage line and replace
% \usepackage{icml2025} with \usepackage[nohyperref]{icml2025} above.
\usepackage{hyperref}

% Attempt to make hyperref and algorithmic work together better:

% Use the following line for the initial blind version submitted for review:
% \usepackage{icml2025}

% If accepted, instead use the following line for the camera-ready submission:
\usepackage[accepted]{icml2025}

% For theorems and such
\usepackage{amsmath}
\usepackage{amssymb}
\usepackage{mathtools}
\usepackage{amsthm}

% if you use cleveref..
\usepackage[capitalize,noabbrev]{cleveref}

%%%%%%%%%%%%%%%%%%%%%%%%%%%%%%%%
% THEOREMS
%%%%%%%%%%%%%%%%%%%%%%%%%%%%%%%%
\theoremstyle{plain}
\newtheorem{theorem}{Theorem}[section]

\theoremstyle{definition}

\theoremstyle{remark}

% Todonotes is useful during development; simply uncomment the next line
%    and comment out the line below the next line to turn off comments
%\usepackage[disable,textsize=tiny]{todonotes}
\usepackage[textsize=tiny]{todonotes}

% The \icmltitle you define below is probably too long as a header.
% Therefore, a short form for the running title is supplied here:
\icmltitlerunning{Calibrated Physics-Informed UQ}

\begin{document}

\twocolumn[
\icmltitle{Calibrated Physics-Informed Uncertainty Quantification}

% It is OKAY to include author information, even for blind
% submissions: the style file will automatically remove it for you
% unless you've provided the [accepted] option to the icml2025
% package.

% List of affiliations: The first argument should be a (short)
% identifier you will use later to specify author affiliations
% Academic affiliations should list Department, University, City, Region, Country
% Industry affiliations should list Company, City, Region, Country

% You can specify symbols, otherwise they are numbered in order.
% Ideally, you should not use this facility. Affiliations will be numbered
% in order of appearance and this is the preferred way.
\icmlsetsymbol{equal}{*}

\begin{icmlauthorlist}
\icmlauthor{Vignesh Gopakumar}{ucl,ukaea}
\icmlauthor{Ander Gray}{utc}
\icmlauthor{Lorenzo Zanisi}{ukaea}
\icmlauthor{Timothy Nunn}{ukaea}
\icmlauthor{Daniel Giles}{ucl}
\icmlauthor{Matt J. Kusner}{poly,mila}
\icmlauthor{Stanislas Pamela}{ukaea}
\icmlauthor{Marc Peter Deisenroth}{ucl}

\end{icmlauthorlist}

\icmlaffiliation{ucl}{Centre for Artificial Intelligence, University College London}
\icmlaffiliation{ukaea}{Computing Division, UK Atomic Energy Authority}
\icmlaffiliation{utc}{Heudiasyc Laboratory}
\icmlaffiliation{poly}{Polytechnique Montr\'{e}al}
\icmlaffiliation{mila}{Mila - Quebec AI Institute}

\icmlcorrespondingauthor{Vignesh Gopakumar}{v.gopakumar@ucl.ac.uk}

% You may provide any keywords that you
% find helpful for describing your paper; these are used to populate
% the "keywords" metadata in the PDF but will not be shown in the document
\icmlkeywords{Machine Learning, ICML}

\vskip 0.3in
]

% this must go after the closing bracket ] following \twocolumn[ ...

% This command actually creates the footnote in the first column
% listing the affiliations and the copyright notice.
% The command takes one argument, which is text to display at the start of the footnote.
% The \icmlEqualContribution command is standard text for equal contribution.
% Remove it (just {}) if you do not need this facility.

\printAffiliationsAndNotice{}  % leave blank if no need to mention equal contribution
% \printAffiliationsAndNotice{\icmlEqualContribution} % otherwise use the standard text.

\begin{abstract}
Simulating complex physical systems is crucial for understanding and predicting phenomena across diverse fields, such as fluid dynamics and heat transfer, as well as plasma physics and structural mechanics. Traditional approaches rely on solving partial differential equations (PDEs) using numerical methods, which are computationally expensive and often prohibitively slow for real-time applications or large-scale simulations. Neural PDEs have emerged as efficient alternatives to these costly numerical solvers, offering significant computational speed-ups. However, their lack of robust uncertainty quantification (UQ) limits deployment in critical applications. We introduce a model-agnostic, physics-informed conformal prediction (CP) framework that provides guaranteed uncertainty estimates without requiring labelled data. By utilising a physics-based approach, we can quantify and calibrate the model's inconsistencies with the physics rather than the uncertainty arising from the data. Our approach utilises convolutional layers as finite-difference stencils and leverages physics residual errors as nonconformity scores, enabling data-free UQ with marginal and joint coverage guarantees across prediction domains for a range of complex PDEs. We further validate the efficacy of our method on neural PDE models for plasma modelling and shot design in fusion reactors.
\end{abstract}

\section{Introduction}
Numerical PDE solvers are essential tools in scientific and engineering simulations \citep{cesm2, giudicelli2024moose}, but their computational demands and environmental impact pose significant challenges \citep{carbonfootprint_CFD}. Machine learning approaches have emerged as efficient alternatives for modelling/emulating PDEs \citep{Bertone2019,PIML}, successfully deployed across weather forecasting \citep{Kochkov2024neural,MeyerMLclouds2022, Giles2024}, fluid dynamics \citep{jiang2020meshfreeflownet,pfaff2021learning}, and nuclear fusion applications \citep{Poels_2023,carey2024dataefficiencylongterm,Gopakumar_2020}. Neural PDE solvers provide rapid approximations, but present a critical cost-accuracy trade-off. While generating outputs consistently, their solutions may violate physical constraints or produce misleading results with high confidence \citep{GOPAKUMAR2023100464}. A typical neural-PDE framework (see \cref{fig:SM_framework}) trains surrogate models on numerical simulation data to predict the evolution of PDE under various conditions, with uncertainty quantification (UQ) methods reverting to numerical solvers when predictions fail coverage thresholds. However, current UQ methods lack statistical guarantees \citep{zou2022neuraluq}, require extensive simulation data \citep{gopakumar2024uncertaintyquantificationsurrogatemodels}, or require architectural modifications \citep{ABDAR2021243}.

\begin{figure}
    \centering
    \includegraphics[width=\columnwidth]{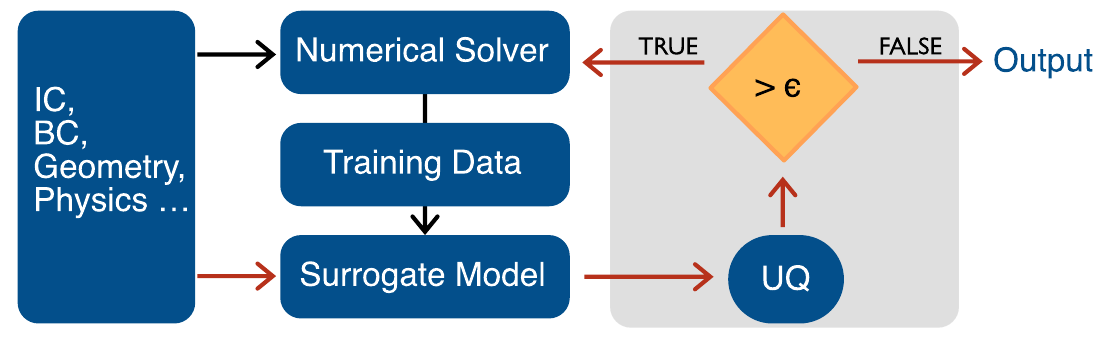}
    \caption{\textbf{Neural PDE framework:} Neural PDE solvers use data from traditional numerical solvers to quickly approximate PDEs across various conditions (shown by black arrows). To ensure reliability, these models incorporate uncertainty quantification (UQ) methods. If the predicted error exceeds a coverage threshold $\epsilon$, the numerical solver is utilised, further adding to the training data; otherwise, predictions are used as output (shown by red arrows). This paper focuses on developing a new UQ method for assessing confidence in neural PDE models, emphasising the grey-shaded region of the framework.
    }
    \label{fig:SM_framework}
    \vspace{-5pt}
\end{figure}

To address these limitations, we propose a framework combining PDE residuals over neural PDEs with Conformal Prediction (CP) to provide uncertainty estimates that guarantee coverage. Our approach evaluates Physics Residual Errors (PRE) from neural PDE solver predictions and performs calibration using marginal and joint CP formulations. Our method provides statistically valid coverage within the residual space \citep{vovk2005algorithmic}, offering error bounds against the violation of physical conservation laws. The framework is model-agnostic (applicable to all types of surrogate models) and does not require additional data generated from expensive numerical simulations. It yields interpretable uncertainty bounds indicating the model's physics-guided inconsistencies, addressing the (over)confidence issue of neural PDE solvers \citep{zou2022neuraluq}. Our contributions:

\textbf{Calibrated physics-informed UQ:} A novel physics-informed nonconformity metric using PDE residuals. This quantifies uncertainty through physical inconsistency rather than training data variation, providing input-independent prediction sets while relaxing exchangeability restrictions.

\textbf{Marginal and Joint CP:} Our approach guarantees coverage bounds both marginally (univariate per dimension) and jointly (multivariate across the entire prediction domain), enabling the identification of volatile predictions and creating a rejection-acceptance criteria.

% \textbf{Motivation}
% Thinking fast and slow. Bring about the idea of the alignment problem for physicists. Data Free CP. All models are wrong but some models are useful, the work of this paper is for a way to highlight the usefulness of those models when they claim to solve for PDEs. 

% \textbf{Problem Definition}:

% The need for UQ in surrogate models, physics-deviation, cheap residuals and their complications and how CP can help salvage that. Data-Free approaches. Just a qualitative description is all that we need. 

% Also the mathematical formulation of the problem with respect to the NN, trained to solve the PDE - solved when the residual is brought to zero. 

% Without being able to quantify the uncertainty, i.e. the reliability of these models, its utility as a surrogate model is questionable. With the provision of valid error bars, the neural PDE predictions can be treated with appropriate caution in downstream applications.

\section{Related Work}
Recently, CP, as a method of performing UQ, has been gaining popularity for usage with spatio-temporal data \citep{sun2022conformal}. Several works have explored the inductive CP framework for spatial and sequential data \citep{conformaltimeserires,cp_dynamic_timeseries,CP_Wildfire}, including in the operator space \citep{ma2024calibrated}. \citet{gopakumar2024uncertaintyquantificationsurrogatemodels} extends the marginal-CP framework to pre-trained as well as to fine-tuned surrogate models for physical system modelling across an infinite-dimensional setting. Alternatively, error bounds for PDE surrogates have been devised by \citet{gray2025guaranteedconfidencebandenclosurespde} using set propagation to project the singular value decomposition of the prediction error onto the prediction space. 

\begin{figure}[h!]
    \centering
    \includegraphics[width=\columnwidth]{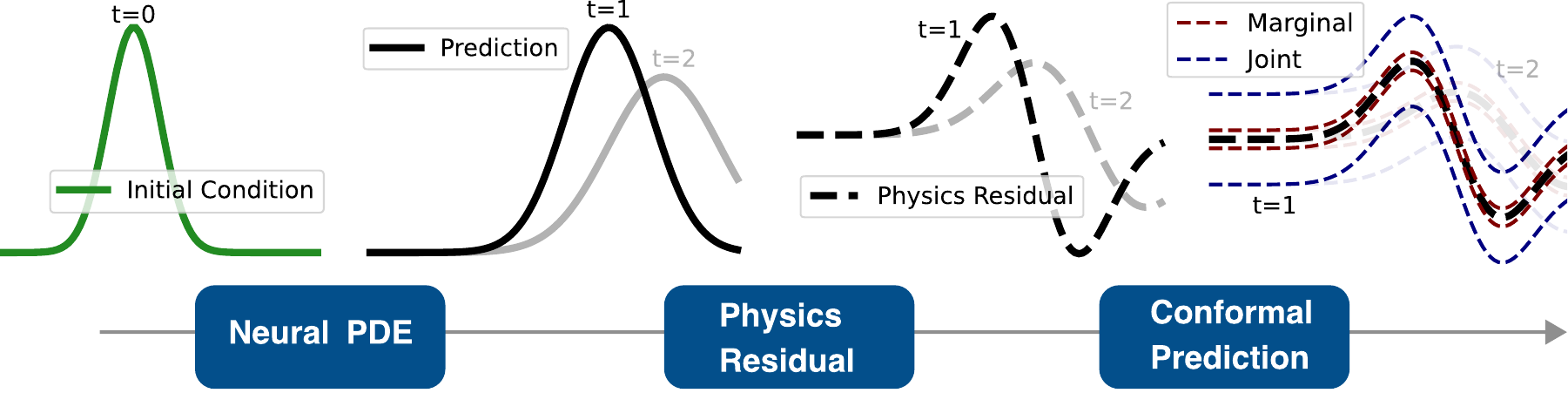}
    \caption{Schematic of physics-informed uncertainty quantification workflow. Initial conditions generate neural PDE predictions autoregressively, over which physics residual errors are estimated. Calibration via marginal and joint conformal prediction yields error bars - pointwise for marginal-CP and domain-wide for joint-CP.}
    \label{fig: layout}
    % \vspace{-10pt}
\end{figure}

The usage of PDE residuals under the guise of Physics-Informed Machine Learning (PIML) \citep{PIML} was made popular as an optimisation strategy for Physics-Informed Neural Networks  (PINNs) \citep{Raissi2019PINNs} and has found application in optimising neural operators \citep{LiPino2024} and soft/hard enforcement of the physical constraints to deep learning models \citep{du2024neural,chalapathi2024scaling}. However, they have rarely been used as a tool for providing UQ to surrogate models, and where they have found application, UQ remained uncalibrated \citep{ZhuPCDLUQ2019}. The majority of the literature in UQ for neural PDE solvers has been looking at Bayesian methods, such as dropout, Bayesian neural networks, and Monte Carlo methods \citep{GENEVA2020109056, zou2022neuraluq, Psaros2023}, which lack guarantees or are computationally expensive. 

\section{Background}

\subsection{Neural PDE Solvers}
Consider the generic formulation of a PDE modelling the spatio-temporal evolution of $n$ field variables $u\in\mathbb{R}^n$ across a range of initial conditions: 
\begin{align}
    D = D_t(u) + D_X(u) &= 0, \quad X\in\Omega,\; t\in[0,T], \label{eqn:pde} \\
    u(X,t) &= g, \quad X\in\partial\Omega, \label{eqn:bc}\\
    u(X,0) &= a(\lambda, X) \label{eqn:ic}.
\end{align}
Here, $X$ defines the spatial domain bounded by $\Omega$, $[0,T]$ the temporal domain, and $D_X$ and $D_t$ the composite operators of the associated spatial and temporal derivatives. The PDE is further defined by the boundary condition $g$ and initial condition $a$, which can be parameterised by $\lambda$. The set of solutions of field variables is expressed as $u \in \mathcal{U}$. 

Neural PDE solvers as surrogate models learn the behaviour governed by \cref{eqn:pde} using a parameterised neural network $\mathcal{NN}_\theta$. Starting from the initial conditions, the network is trained to solve the spatio-temporal evolution of the fields given by $\Omega\, \cup \, [0, T]$. Neural operators $\mathcal{NO}_\theta$ are a special class of neural networks that learn the operator mapping from the function space of the PDE initial conditions $a \in \mathcal{A}$ to the function space of solutions $u \in \mathcal{U}$. A neural operator for solving an initial-value problem can be expressed as 
\begin{align}
    \mathcal{U} &= \mathcal{NO_\theta}(\mathcal{A}), \nonumber \\
    u(X,t) &= \mathcal{NO_\theta}\Big(u(X,0), t\Big), \nonumber \\
    &\quad X\in\Omega,\; t\in[0,T] .
    \label{eq:NO_ivp}
\end{align}
A \textbf{Fourier Neural Operator} (FNO) is an autoregressive neural operator that learns the spatio-temporal evolution of PDE solutions by leveraging the Fourier transform as the kernel integrator \citep{li2021fourier}. The field evolution is learned using tunable weight matrices of the network, parameterised directly in the Fourier space of the PDE solutions. 

Since CP and our extension of it provide a post-hoc measure of quantifying the uncertainty of a neural PDE, it remains agnostic to model choice and training conditions. Considering the model independence of our approach, we restrict our experiments to modelling PDEs with an FNO. The FNO is chosen due to its cost-accuracy trade-off and efficiency as demonstrated by \citet{dehoop2022costaccuracytradeoffoperatorlearning} and \citet{gopakumar2023fourierneuraloperatorplasma}. CP over a range of neural-PDE solvers has been applied by \citet{gopakumar2024uncertaintyquantificationsurrogatemodels}, who also demonstrate that the coverage guarantees are upheld irrespective of the model choice, further necessitating us to not experiment with various model architectures.

% % The necessary background on surrogate modelling and UQ work. \\
% % \textbf{U-Net}\\
% % \textbf{DeepONet}\\
% Mention that from the broad world of neural PDE solvers we have chosen the most widely utilised NOs, but that this approach extends to any neural PDE solver. 

\subsection{Conformal Prediction}
\label{sec: conformal_prediction}
Conformal prediction (CP) \citep{vovk2005algorithmic,shafer2008tutorial} is a statistical framework that addresses the accuracy of a predictive model. Consider a machine learning model $\hat{f}:\mathcal X\to \mathcal Y$ trained on a dataset $(X_i, Y_i)_{i=1}^N$, that can be used to predict the next true label $Y_{n+1}$ at query point $X_{n+1}$. CP extends the point prediction $\mathcal{P}:\tilde{Y}_{n+1}$ to a prediction set $\mathbb{C}^{\alpha}$, ensuring that
\begin{equation}
\label{eq:coverage}
\mathbb{P}(Y_{n+1}\in \mathbb{C}^{\alpha}) \geq 1 - \alpha.
\end{equation}
This coverage guarantee, a function of the user-defined confidence level $\alpha$,  holds irrespective of the chosen model and training dataset. The only condition is that the calibration samples and the prediction samples are exchangeable. Traditional inductive CP partitions the labelled data into training and calibration sets \citep{papadopoulos2008inductive}. The performance of the model on the latter, measured using a \textit{nonconformity score}, is used to calibrate the model and obtain prediction sets. 

Conventionally, nonconformity scores act on the model predictions and a labelled dataset \citep{Kato_ncf_review_2024}. For deterministic models, they are often formulated as the Absolute Error Residual (AER) of the model predictions $\hat{f}(X)$ and targets $Y$. For probabilistic models, the score function (STD) is the absolute value of the z-score (with the prediction mean, standard deviation and target given as   $\hat{f_\mu}(X), \hat{f_\sigma}(X), Y$ respectively. Having obtained a distribution of nonconformity scores $\hat{s}$ of the calibration dataset $(X_i, Y_i)_{i=1}^n$, a quantile $\hat{q}$ corresponding to the desired coverage $1-\alpha$ is estimated from its cumulative distribution function $F_{\hat{s}}$ \citep{papadopoulos2008inductive}: 
\begin{equation}
    \hat{q^\alpha} = F^{-1}_{\hat{s}}\bigg(\frac{\lceil(n+1)(1-\alpha)\rceil}{n}\bigg). 
    \label{eq:qhat}
\end{equation}
The quantile estimates the error bar associated with the desired coverage and is combined with the new prediction to obtain the prediction sets. The nonconformity score functions and their prediction sets for AER and STD are given in \cref{eq:ncf_aer}. Both AER and STD are data-intensive CP methods, requiring calibration data that converges to a beta distribution \citep{gentle_introduction_CP}. This data dependence restricts their application to domains where sufficient data exists a priori or can be easily obtained

\section{Physics Residual Error (PRE)}
\label{sec:PRE}
We introduce a novel \textit{data-free nonconformity score} based directly on the PDE for surrogate models. The Physics Residual Error (PRE) is defined as the PDE residual \citep{Youcef_GMRES_1986} estimated over the discretised PDE solution obtained from the surrogate model. For an abstract PDE as in \cref{eqn:pde}, the PDE residual is the evaluation of the composite differential operator $D$ over a field(s) $u$ under the influence of an external force $b$
\begin{equation}
\label{eq:D_PRE}
\mathcal{D}(u)-b=0,
\end{equation}
The PDE residual is treated as a score function by taking its L1 norm as indicated in \cref{eq:ncf_pre}. While well-defined PDEs have solutions obeying \cref{eqn:pde,eqn:bc,eqn:ic}, numerical solutions often fail to converge to the true solution \citep{Pinder2018}. Neural PDEs, trained on approximate numerical data, are further prone to non-convergent predictions. In numerical analysis, the norm of the PDE residual is often used as a criterion for stability, convergence, and accuracy \citep{iserles2009first}. The PRE typically represents the violation of conservation laws associated with the physical system. Using the residual error as a nonconformity score quantifies the neural PDE solver's non-convergence to the physical ground truth of the PDE. By further using conformal prediction, we obtain coverage bounds over this conservative, residual space with guaranteed coverage. 

% \begin{table}
%   \caption{Nonconformity score functions and their corresponding prediction sets}
%   \vspace{1em}
%   \label{table: ncf_scores}
%   \resizebox{\columnwidth}{!}{
%   \centering   
%   \begin{tabular}{lll}
%     \hline
%     \textbf{Nonconformity}   & \textbf{Score Function}  &  \textbf{Prediction Sets }\\
%     \hline \\
%     AER & $\hat{s} = \Big(|\hat{f}(X_i)- Y_i|\Big)_{i=1}^n$  & $\mathbb{C}^{\alpha}(X_{n+1}) = \hat{f}(X_{n+1}) \pm \hat{q}^\alpha$ \label{eq:ncf_aer} \\ [4ex]
%     STD & $\hat{s} = \bigg(\frac{|\hat{f_\mu}(X_i)- Y_i|}{f_\sigma(X_i)}\bigg)_{i=1}^n$ & $\mathbb{C}^{\alpha}(X_{n+1}) = \hat{f_\mu}(X_{n+1}) \pm \hat{q}^\alpha \, \hat{f_\sigma}(X_{n+1})$ \label{eq:ncf_std} \\ [4ex]
    
%     PRE & $\hat{s} = \Big(|D(\hat{f}(X_i))|\Big)_{i=1}^n$ &  $\mathbb{C}^{\alpha}\Big(D\big(\hat{f}(X_{n+1})\big)\Big)  = \pm \hat{q}^\alpha $ \label{eq:ncf_pre} \\ [4ex]
%     \hline
%   \end{tabular}
%   }
    % \end{table}

\begin{table}
  \caption{Overview of nonconformity metrics AER, STD, and PRE and their corresponding score functions and prediction sets.}
  \vspace{1em}
  \label{table: ncf_scores}
  \centering   
  \scalebox{0.9}{
  \begin{tabular}{l|ll}
    \hline
    & \textbf{Score Function} $(\hat{s})$ &  \textbf{Prediction Sets} $(\mathbb{C}^{\alpha})$\\
    \hline \\
    AER & $\Big(|\hat{f}(X_i)- Y_i|\Big)_{i=1}^n$  & $\hat{f}(X_{n+1}) \pm \hat{q}^\alpha$ \label{eq:ncf_aer} \\ [4ex]
    STD & $\bigg(\frac{|\hat{f_\mu}(X_i)- Y_i|}{f_\sigma(X_i)}\bigg)_{i=1}^n$ & $\hat{f_\mu}(X_{n+1}) \pm \hat{q}^\alpha \, \hat{f_\sigma}(X_{n+1})$ \label{eq:ncf_std} \\ [4ex]
    
    PRE & $ \Big(|D(\hat{f}(X_i))|\Big)_{i=1}^n$ &  $\pm \hat{q}^\alpha $ \label{eq:ncf_pre} \\ [4ex]
    \hline
  \end{tabular}
  }
\end{table}

The norm $\left| D(\mathcal{NO}_\theta(u)) \right|$ of the residual operator itself provides a measure of UQ for the neural PDE. However, it is limited by the accuracy of the gradient estimation method and can become computationally expensive when exploring a vast solution space \citep{TOLSMA1998475}. By using the residual norm as a nonconformity score, we further calibrate the approximate physics residual error that an inexpensive and coarse differential operator obtains. CP using PRE provides statistically valid and guaranteed error bars across the PDE's residual space, incorporating physical information into the calibration procedure and providing a calibrated measure of the physical misalignment of the surrogate model. 

PRE as a nonconformity score enables \textit{data-free conformal prediction}. The estimated scores rely only on the neural PDE predictions and not on the target as in AER and STD. The only criterion is that the calibration and prediction domains arise from exchangeable PDE conditions. As shown in \cref{eq:ncf_pre}, PRE gives \enquote{prediction sets} independent of the prediction inputs (see \cref{appendix:pre_formulation} for formalism). Traditional CP  methods rely on calibration using labelled data to construct prediction intervals that contain the true values with a specified confidence level $\alpha$. These methods guarantee that the true solution will lie within the estimated error bounds based on this empirical calibration. Contrastingly, our CP-PRE formulation takes a fundamentally different approach. Instead of requiring target data for calibration, we leverage the unique property of PDEs where the true solution in the residual space exists at zero. This eliminates the need for empirical calibration data. Our method focuses on ensuring that predictions themselves fall within coverage bounds $\mathbb{C}^\alpha$, rather than guaranteeing that the true solution lies within these bounds. This allows us to validate prediction sets without access to ground truth data---a significant advantage over traditional CP approaches. We formalise this novel property in our theoretical framework presented in \cref{theorems}.

% Though, we demonstrate the utility of PRE as a nonconformity score for PDE solvers, in theory (see \cref{appendix:pre_formulation}, they could be extended to ODEs or any case where a residual can be derived from the model output alone. 

\subsection{Marginal-CP} 

The CP formulation was initially conceptualised for calibrating univariate functions with single-point outputs \citep{vovk2005algorithmic}. It has recently been extended to spatio-temporal data, with multi-dimensional outputs with an immutable tensor structure \citep{gopakumar2024uncertaintyquantificationsurrogatemodels}. Within such spatio-temporal settings, CP has been implemented to provide marginal coverage, i.e. the calibration procedure provides independent error bars for each cell within the spatio-temporal domain. For an output tensor $ Y \in \mathbb{R}^{N_x \times N_y \times N_t}$, where $N_x, N_y, N_t$ represent the spatio-temporal discretisation of the domain, marginal-CP uses the non-conformity scores outlined in \cref{eq:ncf_aer,eq:ncf_std,eq:ncf_pre} across each cell of $Y$ to obtain error bars, which will be compliant with \cref{eq:coverage} for each cell. Marginal-CP using PRE helps indicate spatio-temporal regions within individual predictions that lie outside the calibrated bounds of physics violation and require specific attention, treating those prediction regions with caution. 

\subsection{Joint-CP}
The joint-CP formulation constructs a calibration procedure that provides coverage bands for multivariate functions. These coverage bands expand across the entire simulation domain $\Omega\times [0,T]$ (discretised as $\mathbb{R}^{N_x \times N_y \times N_t}$) rather than an individual cell within it. For a coverage band $\mathbb{C}^\alpha$, the joint-CP formulation ensures that $1-\alpha$ predictions/solutions lie within the bounds. For performing joint-CP, the non-conformity scores are modified to reflect the supremum of the score functions $\hat{s}$ in \cref{eq:ncf_aer,eq:ncf_std,eq:ncf_pre}. They are modulated by the standard deviation $\sigma$ of the calibration scores \citep{diquigiovanni2021importancebandfinitesampleexact} to obtain prediction bands with varying widths based on local behaviour \citep{DiquigiovanniCP_MV2022}. The modifications of the score functions and prediction sets to perform CP are given by
 \begin{align}
    \mathcal{S}_{\text{joint}} &= \sup_{X \in \Omega,\,t \in [0,T]}\bigg(\frac{\hat{s}}{\sigma(\hat{s})}\bigg)\label{eq:scores_joint}, \\
    \mathbb{C}^\alpha_{\text{joint}} &= p(X_{n+1}) \pm \; \hat{q}^\alpha \cdot \sigma(\hat{s} \label{eq:pred_joint}),
\end{align}
where $\hat{s}$ and $p(X_{n+1})$ are the formulations of the nonconformity scores and the prediction at $X_{n+1}$ used for marginal-CP as shown in \cref{eq:ncf_aer,eq:ncf_std,eq:ncf_pre}. Joint-CP becomes particularly useful in identifying predictions that fail to fall within coverage, allowing us to accept or reject a prediction based on a predetermined probability. Similar to that demonstrated by \citet{Casella_acceptrejectsampling_2004}, our framework can perform acceptance-rejection using a CP-based criterion. The acceptance probability is based on confidence level $\alpha$. If a prediction is rejected, the PDE parameters that led to those predictions could be provided to the expensive physics-based numerical PDE solver for further evaluation as indicated in \cref{fig:SM_framework}.

\subsection{Differential Operator: Finite-Difference Stencils as Convolutional Kernels}
\label{sec:fd_ck}
Calibrating neural PDEs using PRE nonconformity scores requires frequent evaluations of the composite differential operator $D$ in \cref{eqn:pde}. For PDEs, this involves estimating spatio-temporal gradients across the discretised domain, ranging from millions in simple cases to billions of gradient operations for complex physics. To address this computational challenge, we developed a scalable gradient estimation method for evaluating PRE.

We use convolution operations with Finite Difference (FD) stencils as convolutional kernels for gradient estimation \citep{Actor2020-ng,CHEN2024116974,chen2024usingailibrariesincompressible}. For instance, the 2D Laplacian operator $\nabla^2$, using a central difference scheme with discretisation $h$, can be approximated by  
% \begin{wrapfigure}{r}{0.5\textwidth}
% \centering
% \vspace{-10pt}
\begin{equation}
\nabla^2 \approx \frac{1}{h^2}
\begin{bmatrix}
0 & 1 & 0 \\
1 & -4 & 1 \\
0 & 1 & 0
\end{bmatrix} 
\label{eq:fd_ck_laplace}
\end{equation}
% \caption{Central difference stencil of a 2D  Laplacian as a 3x3 convolutional kernel.}
% \vspace{-10pt}
% \label{fig:fd_ck_laplace}
% \end{wrapfigure}
and used as a kernel. 
This approach is justified by the mathematical equivalence of FD approximations and discrete convolutions. Both represent matrix-vector multiplications of a block Toeplitz matrix with a field vector \citep{Gilbert_Topelitz_1986, Fiorentino1991}. The efficiency of this method stems from the optimised implementation of convolution operations in machine learning libraries like PyTorch \citep{paszke2019pytorchimperativestylehighperformance} and TensorFlow \citep{tensorflow2015-whitepaper}. \footnote{The Basic Linear Algebra Subroutines (BLAS) in these libraries leverage vectorisation and efficient memory access for significant performance gains. Our experiments demonstrate a 1000x speedup using \textit{torch.nn.functional.conv3d} versus an equivalent \textit{numpy} finite difference implementation on standard CPU.}

The FD approximation offers several advantages over Automatic Differentiation (AD) for our application. It is compatible with CP as a post-hoc measure, requires no architectural modifications, and is model-agnostic. Furthermore, FD implemented via convolutions is more memory-efficient than AD, which requires storing the entire computational graph. As we focus on the (mis)alignment of neural PDEs with \cref{eqn:pde}, we disregard boundary conditions, but demonstrate that they can be accounted for with convolutional padding in \cref{appendix: boundary_conditions} \citep{AntonioBCConv2021}. Utilising finite difference schemes, the residuals are estimated up to the truncation errors of the Taylor approximation \citep{taylor_truncation}. Although discretisation plays a role in the width of the error bars, CP-PRE still guarantees coverage, and this is further explored in \cref{appendix:discretisation}.  

\section{Experiments}
\label{sec:experiments}

\begin{figure}[ht]
    \centering
    \includegraphics[width=\columnwidth]{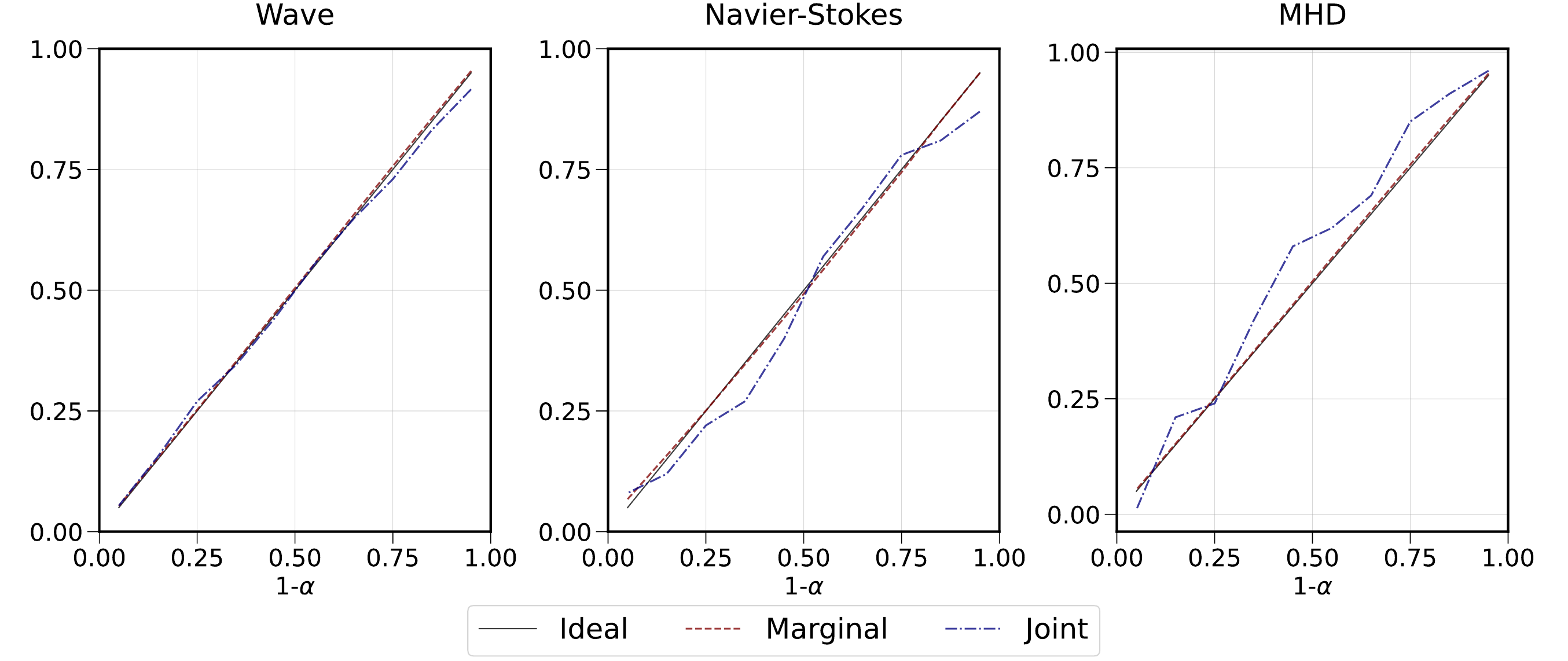}
    \caption{\textbf{Validation plots demonstrating coverage guarantee} detailed in \cref{eq:coverage} obtained by performing CP using PRE across experiments. The average empirical coverage obtained experimentally is given on the $y$-axis (ranging from 0 to 1, with 1 representing 100\% coverage), while the theoretical coverage is represented on the $x$-axis. We obtain guaranteed coverage while using marginal-CP formulation and near-to-ideal coverage for the joint-CP formulation.}
    \label{fig:coverage_plots}
\end{figure}

CP-PRE experiments comprise two campaigns. First, we benchmark CP-PRE within standard neural PDEs (\cref{sec:wave} to \ref{sec:mhd}). The calibration process (\cref{fig: layout}) involves: (a) sampling inputs to the model to generate predictions, (b) calculating PRE(s) scores, and (c) calibrating physical error using marginal and joint-CP formulations. Validation uses the same PDE condition bounds as calibration. Within this campaign, we compare our method (CP-PRE) with other methods of providing uncertainty estimation for neural-PDEs. We compare various Bayesian methods along with multivariate inductive conformal prediction to our method as shown in \cref{tab:uq_comparison}. We demonstrate that our method is capable of providing valid coverage guarantees for both in and out-of-distribution testing without a significant increase in evaluation times (including time taken for sampling in Bayesian methods, data generation in data-driven CP and subsequent calibration). Methods that attain the required coverage are emboldened. \cref{tab:uq_comparison} provides a qualitative comparison of our method against the other benchmarks and highlights that our method is data-free, requires no modification or sampling and provides guaranteed coverage in a physics-informed manner. We confine our comparison studies in \cref{table: uq_wave}, \ref{table: uq_ns}, and \ref{table: uq_mhd} to the Wave, Navier-Stokes and Magnetohydrodynamic equations. 
\begin{table*}[ht]
    \vspace{-1ex}
    \resizebox{\textwidth}{!}{
    \begin{tabular}{l|c|c|c|c|c}
        \textbf{Method} & \textbf{Data-Free}& \textbf{Modification-Free} & \textbf{Sampling-Free} & \textbf{Guaranteed Coverage} & \textbf{Physics-Informed}  \\
        \hline
        MC Dropout \citep{gal2016dropout} &  \Checkmark & \XSolidBrush & \XSolidBrush & \XSolidBrush & \XSolidBrush\\
        Deep Ensemble \citep{lakshminarayanan2017simple} & \Checkmark & \XSolidBrush & \XSolidBrush & \XSolidBrush& \XSolidBrush \\
        BNN \citep{Mackay1992BNN} & \Checkmark & \XSolidBrush & \XSolidBrush & \XSolidBrush & \XSolidBrush\\
        SWA-G \citep{2019MaddoxSWAG}& \Checkmark & \XSolidBrush & \XSolidBrush & \XSolidBrush & \XSolidBrush \\
        CP-AER \citep{gopakumar2024uncertaintyquantificationsurrogatemodels} & \XSolidBrush & \Checkmark & \Checkmark & \Checkmark & \XSolidBrush\\
        CP-PRE (Ours) & \Checkmark & \Checkmark & \Checkmark & \Checkmark & \Checkmark
    \end{tabular}
    }
    \caption{Comparing features across various UQ measures. Our method is data-free, does not require any modifications or sampling, and helps obtain guaranteed coverage bounds in a physics-informed manner.}
    \label{tab:uq_comparison}
\end{table*}

The second campaign (\cref{sec:plasma}, \ref{sec:grad-shafranov}) applies CP-PRE to fusion applications. We enhance tokamak plasma behaviour surrogate models to identify erroneous dispersion regions (\cref{sec:plasma}) and integrate CP-PRE with tokamak design surrogates to identify viable designs and areas needing additional simulations (\cref{sec:grad-shafranov}). This campaign demonstrates the utility of CP-PRE in complex, practical applications. One-dimensional PDE experiments demonstrating CP-PRE are demonstrated in \cref{sec: 1d_cases} and can be reproduced using code in the supplementary material.

\subsection{Wave Equation}
\label{sec:wave}
\begin{figure*}[h]
    \centering
    \includegraphics[width=\linewidth]{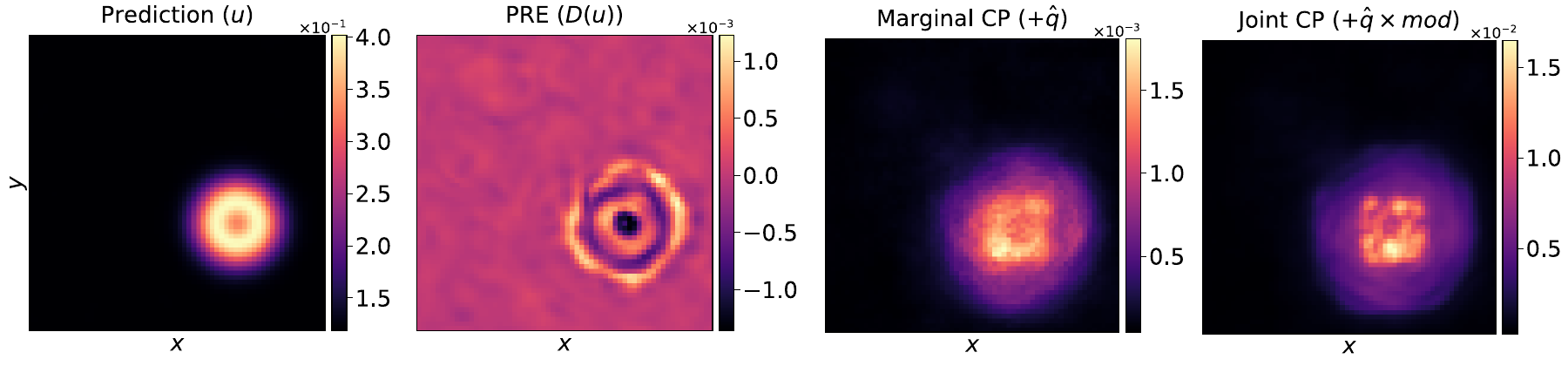}
    \caption{\textbf{Wave:} (From left to right) neural PDE (FNO) prediction at the last time instance, physics residual error of the prediction, Upper error bars obtained by performing marginal-CP and joint-CP respectively (90\% coverage). For brevity, we have only shown the upper error bars of the symmetric prediction sets. $mod$ represents the modulation function in \cref{eq:ncf_pre}. The physical inconsistencies within the residual space of the prediction are calibrated and bounded using CP-PRE.}
    \label{fig:wave_cp}
\end{figure*}

The two-dimensional wave equation is given by
\begin{equation}
\pdv[2]{u}{t}  = c^2 \Bigg(\pdv[2]{u}{x} + \pdv[2]{u}{y} \Bigg).
\label{eqn:wave}
\end{equation}
We solve the field $u$ given in \cref{eqn:wave} within the domain $x,y \in [-1,1]$, $t \in [0, 1.0]$ with wave velocity $c=1.0$ using a spectral solver with periodic boundary conditions \citep{canuto2007spectral}. The initial conditions are parameterised by the amplitude and position of a Gaussian field. A 2D FNO is trained on this data to predict 20-time steps autoregressively from a given initial state. \Cref{fig:wave_cp} compares the model predictions against ground truth, showing the PRE and confidence bounds from marginal and joint-CP at $90\%$ coverage. The PRE reveals noise artefacts in regions where the field should be zero, highlighting physical inconsistencies despite apparently accurate predictions. Joint-CP bounds are necessarily larger than marginal-CP bounds as they span across the spatio-temporal domain as opposed to being cell-wise. As demonstrated in \cref{fig:coverage_plots}, both CP approaches achieve the expected coverage guarantees. \Cref{table: uq_wave} highlights the superior performance of CP-PRE over other methods of UQ, where it guarantees coverage when evaluated for in and out-of-distribution without requiring any additional data, drastically reducing the evaluation time. Marginal-CP shows linear coverage due to cell-wise averaging. At the same time, joint-CP exhibits coverage variations that depend on the calibration dataset (see \cref{table: uq_wave} for stability analysis across multiple calibration sets). Additional experimental details are provided in \cref{appendix:wave}.
\begin{table*}[ht]
% \small
\resizebox{\textwidth}{!}{
% \begin{minipage}[b]{0.9\textwidth}
\begin{tabular}{lllllll}
\toprule
& \multicolumn{2}{c}{in-distribution} & \multicolumn{2}{c}{out-of-distribution} & \multicolumn{2}{c}{Time}\\
\cmidrule(lr){2-3} \cmidrule(lr){4-5} \cmidrule(lr){6-7}
UQ & MSE & Coverage $(95\%)$ & MSE & Coverage $(95\%)$ & Train (hr) & Eval (s)\\
\midrule
Deterministic & 1.77e-05 $\pm$ 3.69e-07 & - & 2.46e-03 $\pm$ 2.00e-05 & -& 0:38 & 22\\
MC Dropout & 1.44e-04 $\pm$ 3.26e-06 & \textbf{97.31 $\pm$ 0.03} & 2.12e-03 $\pm$ 2.60e-05 & 89.83 $\pm$ 0.07 & 0:52 & 120 \\
Deep Ensemble & 8.76e-06 $\pm$ 2.43e-07 & \textbf{98.02 $\pm$ 0.04} & 2.42e-03 $\pm$ 1.58e-05 &  83.44 $\pm$ 0.12  & 3:10 & 112 \\
BNN & 1.92e-04 $\pm$ 1.92e-06 & \textbf{97.10 $\pm$ 0.09} & 2.67e-03 $\pm$ 1.26e-05 & 91.76 $\pm$ 0.10 & 0:53 &  118 \\
SWA-G & 1.41e-05 $\pm$ 1.74e-06 & \textbf{94.55 $\pm$ 3.25} & 2.55e-03 $\pm$ 2.82e-05 & 81.90 $\pm$ 3.31 & 0:47 & 113\\
CP-AER & 1.76e-05 $\pm$ 4.40e-07 & \textbf{95.70 $\pm$ 0.21} & 2.46e-03 $\pm$ 1.41e-05 & \textbf{95.59 $\pm$ 0.14} & 0:38 & 2022\\
CP-PRE (Ours) & 1.78e-05 $\pm$ 4.61e-07 & \textbf{95.52 $\pm$ 0.21} & 2.46e-03 $\pm$ 1.25e-05 & \textbf{95.39 $\pm$ 0.12} & 0:38 &  \textbf{32}\\
\bottomrule
\end{tabular}
     % \end{minipage}
}
\caption{Wave Equation --- CP-PRE guarantees coverage across distributions while providing the quickest evaluation time.}
\label{table: uq_wave}
\end{table*}

\subsection{Navier-Stokes Equation} 
\label{sec:ns}
Consider the two-dimensional Navier-Stokes equations
\begin{align}
    \va{\nabla} \cdot \va{v} &= 0,  \label{eqn:ns_cont} \\
    & \hfill \text{(Continuity equation)} \nonumber \\[1ex]
    \pdv{\va{v}}{t} + (\va{v} \cdot \va{\nabla}) \va{v}  &= \nu \nabla^2 \va{v} - \nabla P, \label{eqn:ns_mom} \\
    & \hfill \text{(Momentum equation)} \nonumber
\end{align}
where we are interested in modelling the evolution of the velocity vector $(\va{v}=[u,v])$ and pressure $(P)$ field of an incompressible fluid with kinematic viscosity $(\nu)$. For data generation, \cref{eqn:ns_mom,eqn:ns_cont} are solved on a domain $x \in [0,1], \; y \in [0,1], \; t \in [0, 0.5]$ using a spectral-based solver \citep{canuto2007spectral}. A 2D multi-variable FNO \citep{Gopakumar_2024} is trained to model the evolution of velocity and pressure autoregressively up until the $20^{th}$ time instance.

\begin{table*}[ht]
% \small
\resizebox{\textwidth}{!}{
\begin{tabular}{lllllll}
\toprule
& \multicolumn{2}{c}{in-distribution} & \multicolumn{2}{c}{out-of-distribution} & \multicolumn{2}{c}{Time}\\
\cmidrule(lr){2-3} \cmidrule(lr){4-5} \cmidrule(lr){6-7}
UQ & MSE & Coverage $(95\%)$ & MSE & Coverage $(95\%)$ & Train (hr) & Eval (s)\\
\midrule
Deterministic & 1.05e-04 $\pm$ 6.91e-06 & - & 3.67e-03 $\pm$ 5.30e-05 & -& 3:22 & 25 \\
MC Dropout & 5.96e-04 $\pm$ 2.30e-05 & 82.21 $\pm$ 0.22 & 4.30e-03 $\pm$ 8.05e-05 & 44.05 $\pm$ 0.26 & 3:34 & 153 \\
Deep Ensemble & 1.22e-04 $\pm$ 3.95e-06 & 91.31 $\pm$ 0.08 & 3.67e-03 $\pm$ 3.52e-05 &  30.74 $\pm$ 0.19  & 16:22 & 147 \\
BNN & 6.90e-03 $\pm$ 1.31e-04  &  89.91 $\pm$ 0.20 & 6.95e-03 $\pm$  1.31e-04 & 85.19 $\pm$ 0.23& 3:39 & 152 \\
SWA-G & 1.96e-04 $\pm$ 1.15e-05 & 84.22 $\pm$ 2.37 & 3.63e-03 $\pm$ 1.37e-04 & 31.00 $\pm$ 2.85 & 3:28 & 146\\
CP-AER &  1.05e-04 $\pm$ 6.58e-06 & \textbf{95.56 $\pm$ 0.40 }& 3.66e-03 $\pm$ 2.81e-05 & \textbf{95.54 $\pm$ 0.15} & 3:22 & 20026\\
CP-PRE (Ours) &  1.07e-04 $\pm$ 5.18e-06  & \textbf{95.44 $\pm$ 0.22} & 3.70e-03 $\pm$ 4.23e-05 &\textbf{95.57 $\pm$ 0.14} & 3:22 &  \textbf{134}\\
\bottomrule
\end{tabular}
}
\caption{Navier-Stokes Equations --- CP-PRE guarantees coverage across distributions while providing the quickest evaluation time.}
\label{table: uq_ns}
\end{table*}

Unlike the previous example, the Navier-Stokes case has two equations with corresponding PRE estimates: continuity (\cref{eqn:ns_cont}) and momentum (\cref{eqn:ns_mom}) for mass and momentum conservation. Our CP-PRE method calibrates model deviation from physical ground truth for each equation. \Cref{table: uq_ns} shows that Bayesian methods fail to provide coverage for FNO modelling, while multivariate CP (CP-AER) provides coverage but requires substantially higher evaluation time arising from the generation of simulation data for calibration. \Cref{fig:cp_ns_mom,fig:cp_ns_cont} show PRE bounds from both equations. Having two PDE residuals enables easier rejection of predictions violating both bounds. Physics details, parameterisation, and training are in \cref{appendix:ns}. Within the scope of this paper, we limit ourselves to measuring the deviation of the model with the PDE residual. The CP-PRE formulation can be extended to obtain bounds for both the initial and boundary conditions, further explored within \cref{appendix: boundary_conditions}.

\subsection{Magnetohydrodynamics} 
\label{sec:mhd}
Consider the magnetohydrodynamic (MHD) equations
\begin{align}
    &\pdv{\rho}{t} + \va{\nabla} \cdot (\rho \va{v}) = 0, \label{eqn:mass_cont} \quad \text{(Continuity equation)}  \\
    &\rho \bigg( \pdv{\va{v}}{t} + \va{v} \cdot \nabla \va{v} \bigg )  = \frac{1}{\mu_0}\va{B} \times (\va{\nabla} \times \va{B}) -  \nabla P, \label{eqn:momentum} \\
    & \hfill \text{(Momentum equation)} \nonumber \\
    &\dv{t} \Bigg( \frac{P}{\rho^\gamma} \Bigg) = 0, \label{eqn:energy} \quad\text{(Energy equation)}  \\
    &\pdv{\va{B}}{t} = \va{\nabla} \times (\va{v} \times \va{B}), \label{eqn:induction}\quad \text{(Induction equation)} \\
    &\va{\nabla} \cdot \va{B} = 0, \label{eqn:divB} \quad \text{(Gau{\ss} law for magnetism)} 
\end{align}
where the density $(\rho)$, velocity vector $(\va{v}=[u,v])$ and the pressure of plasma is modelled under a magnetic field $(\va{B} = [B_x, B_y])$ across a spatio-temporal domain $x,y \in [0,1]^2, \; t \in [0,5]$. $\mu_0$ is the magnetic permeability of free space. \Cref{eqn:mass_cont,eqn:momentum,eqn:energy,eqn:induction,eqn:divB} represent the ideal MHD equations as a combination of the Navier-Stokes equations for fluid flow with Maxwell's equations of electromagnetism \citep{ALFVÉN1942,Gruber1985,Mocz_MHD_2014}. The equations assume perfect conductivity (no magnetic diffusivity) and no viscosity.  We focus our experiment on the modelling of the Orszag-Tang vortex of a turbulent plasma \citep{Orszag_Tang_1979} with the data being generated using a finite volume method \cite{eymard2000finite}. A 2D FNO is trained to model the evolution of all 6 variables over a dataset generated by parameterised initial conditions.

\begin{table*}[ht]
% \small
\resizebox{\textwidth}{!}{
\begin{tabular}{lllllll}
\toprule
& \multicolumn{2}{c}{in-distribution} & \multicolumn{2}{c}{out-of-distribution} & \multicolumn{2}{c}{Time}\\
\cmidrule(lr){2-3} \cmidrule(lr){4-5} \cmidrule(lr){6-7}
UQ & MSE & Coverage $(95\%)$ & MSE & Coverage $(95\%)$ & Train (hr) & Eval (s)\\
\midrule
Deterministic & 2.20e-03 $\pm$ 5.20e-03 & - & 4.71e-02 $\pm$ 1.06e-03 & -& 5:00 & 40 \\
MC Dropout & 3.29e-02 $\pm$ 5.86e-04 & 41.13 $\pm$ 0.19 & 2.09e-01 $\pm$ 1.38e-03  & 16.91 $\pm$ 0.06 & 5:30 & 240 \\
Deep Ensemble & 3.59e-03 $\pm$ 3.51e-04 & 78.15 $\pm$ 0.16 &  3.41e-01 $\pm$ 3.15e-02 &  39.63 $\pm$ 0.31  & 26:25 & \textbf{235} \\
BNN & 4.20e-03 $\pm$  4.08e-05 & 90.24 $\pm$ 0.10 & 4.63e-02 $\pm$ 8.98e-04 &  62.37 $\pm$ 0.46 & 5:40 & 240 \\
SWA-G & 2.61e-03 $\pm$ 9.68e-05 & 48.50 $\pm$ 3.81 & 4.53e-02 $\pm$ 6.64e-04 & 14.22 $\pm$ 1.35 & 5:22 & 236 \\
CP-AER & 2.20e-03 $\pm$ 4.38e-05 & \textbf{95.61 $\pm$ 0.26} & 4.69e-02 $\pm$ 8.18e-04 &\textbf{95.60 $\pm$ 0.27} & 5:00 & 40042 \\
CP-PRE (Ours) & 2.20e-03 $\pm$ 4.96e-03 & \textbf{95.54 $\pm$ 0.18} & 4.71e-02 $\pm$ 1.06e-03 &\textbf{95.67 $\pm$ 0.22} & 5:00 & 482 \\
\bottomrule
\end{tabular}
}
\caption{Magnetohydrodynamic Equations --- CP-PRE guarantees coverage across distributions with only a marginal increase in evaluation time, arising from residual evaluation over a larger family of PDEs (see \cref{eqn:mass_cont,eqn:momentum,eqn:energy,eqn:induction,eqn:divB}).}
\label{table: uq_mhd}
\end{table*}
\Cref{eqn:mass_cont,eqn:momentum,eqn:energy,eqn:induction,eqn:divB} provide us with five measures of estimating the PRE of the MHD surrogate model. Each PRE estimate depends on a different set of variables associated with the system, allowing us to infer errors contributed to each variable accordingly. \Cref{table: uq_mhd} demonstrates that the Bayesian methods fail miserably in providing valid error bars over the MHD case. The CP-AER provides guaranteed coverage but is heavily dependent on simulation data, adding to the computational expense. CP-PRE using induction (\cref{eqn:induction}) and energy (\cref{eqn:energy}) are shown for $90\%$ coverage $(\alpha  = 0.1)$, sliced at $y=0.5m$. The plots show PRE along the x-axis with marginal and joint bounds at a specific simulation time. Marginal bounds are significantly tighter, while joint bounds are wider but useful for identifying predictions that violate conservation equations. Additional CP plots using other residuals (\cref{fig:cp_mhd_cont,fig:cp_mhd_div}) and physics/surrogate model details are in \cref{appendix:mhd}.

\subsection{Plasma Modelling within a Tokamak}
\label{sec:plasma}
In \citep{Gopakumar_2024}, the authors model the evolution of plasma blobs within a fusion reactor (known as a tokamak) using an FNO. They explore the case of electrostatic modelling of reduced magnetohydrodynamics with data obtained from the JOREK code \cite{Hoelzl2021jorek}. In the absence of magnetic pressure to confine it, the plasma, driven by kinetic pressure, moves radially outward and collides with the wall of the reactor. The plasma is characterised by density $\rho$, electric potential $\phi$ and Temperature $T$, and the FNO models their spatio-temporal evolution autoregressively. Borrowing that pre-trained model and utilising the reduced-MHD equations within the toroidal domain, we demonstrate obtaining calibrated error bars using CP-PRE at scale. The FNO demonstrated in \citep{Gopakumar_2024} can model the plasma six orders of magnitude faster than traditional numerical solvers, and by providing calibrated error bars over the predictions, a wider range of plasma configurations can be validated.

\begin{figure}[ht]
    \centering
    \subfigure[Abs. Error]{
        \includegraphics[width=0.45\columnwidth]{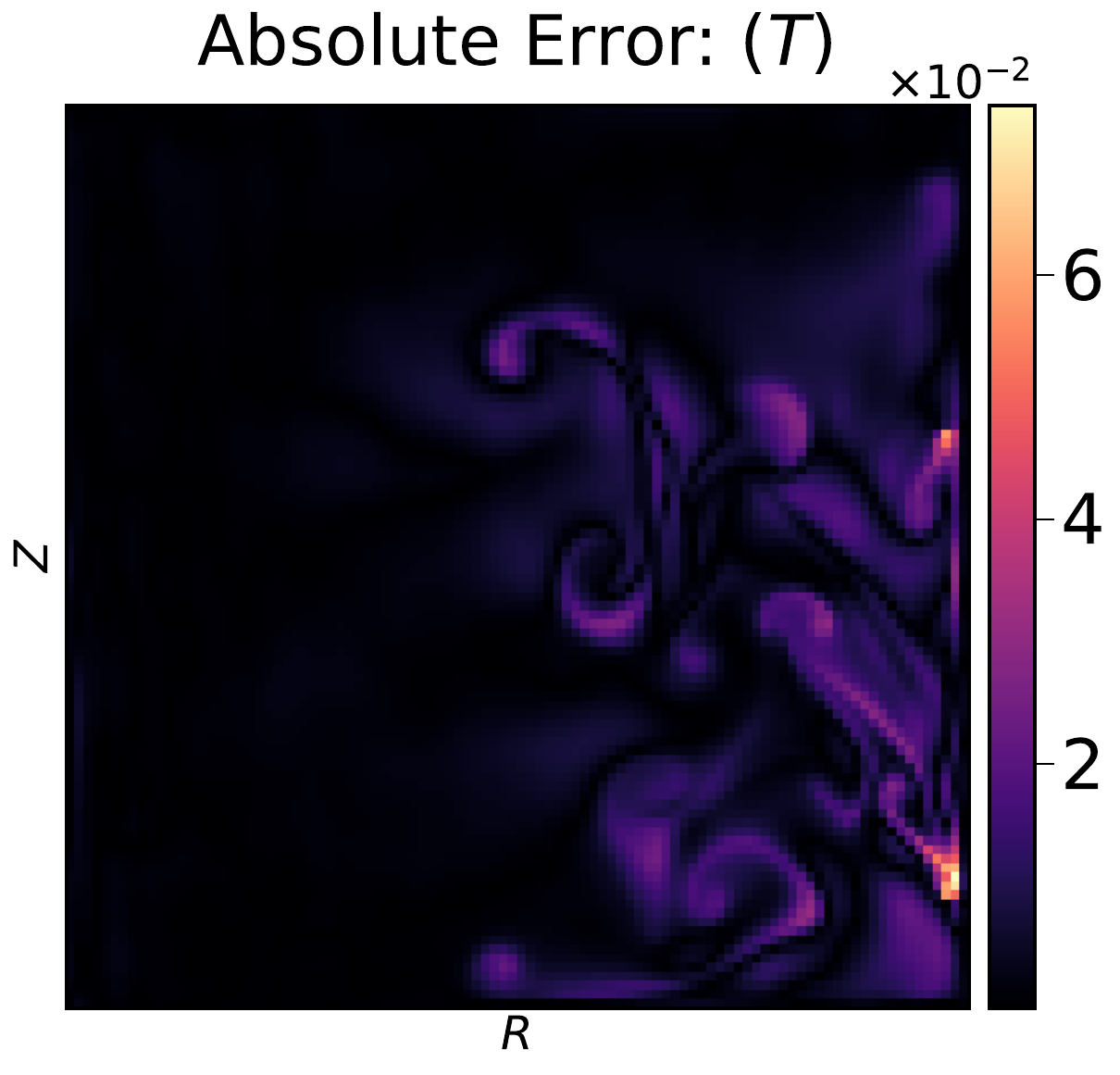}
        \label{fig:jorek_abs_err}
    }
    \subfigure[PRE]{
        \includegraphics[width=0.47\columnwidth]{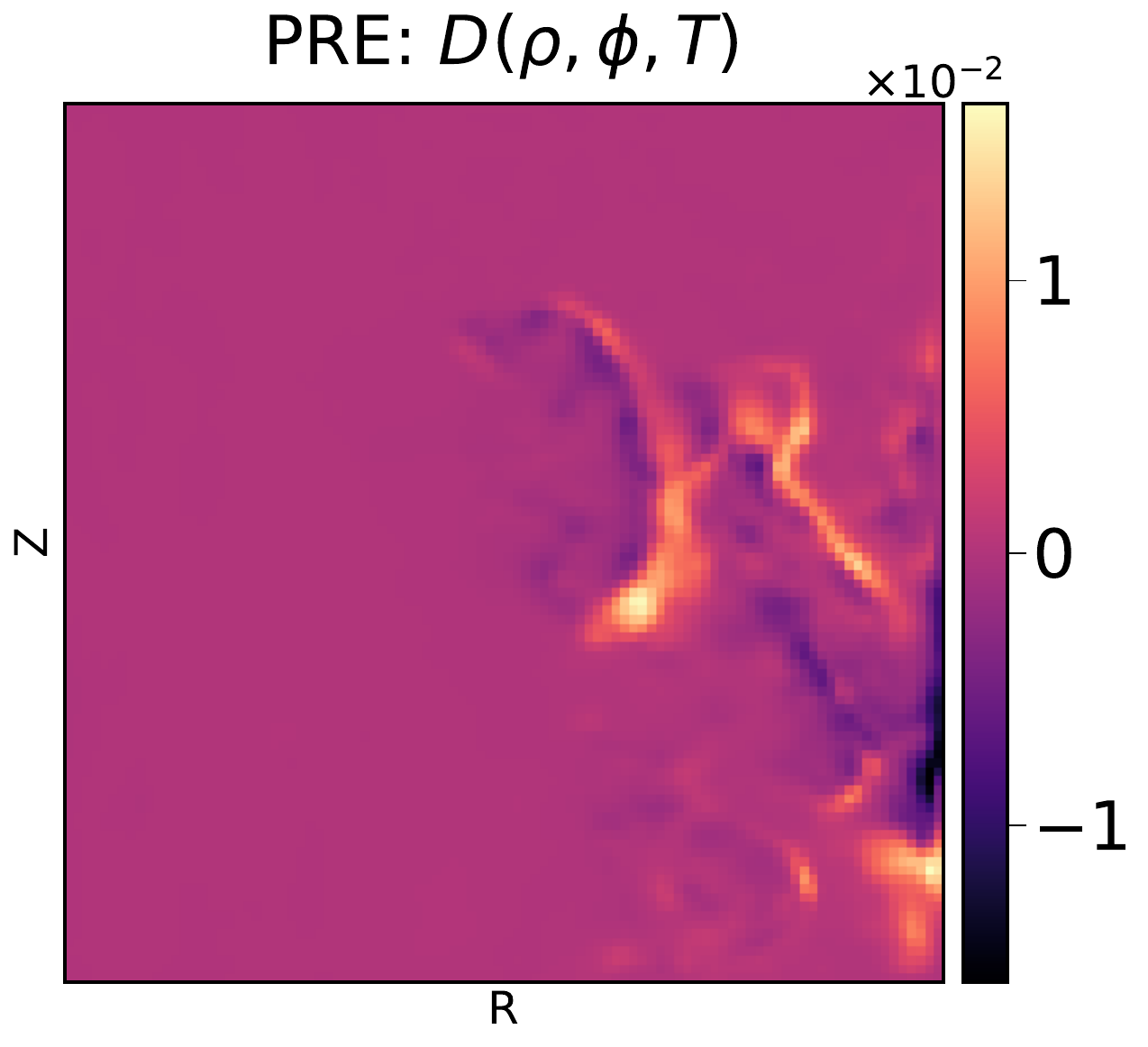}
        \label{fig:jorek_res}
    }
    % \subfigure[Coverage]{
    %     \includegraphics[width=0.28\columnwidth]{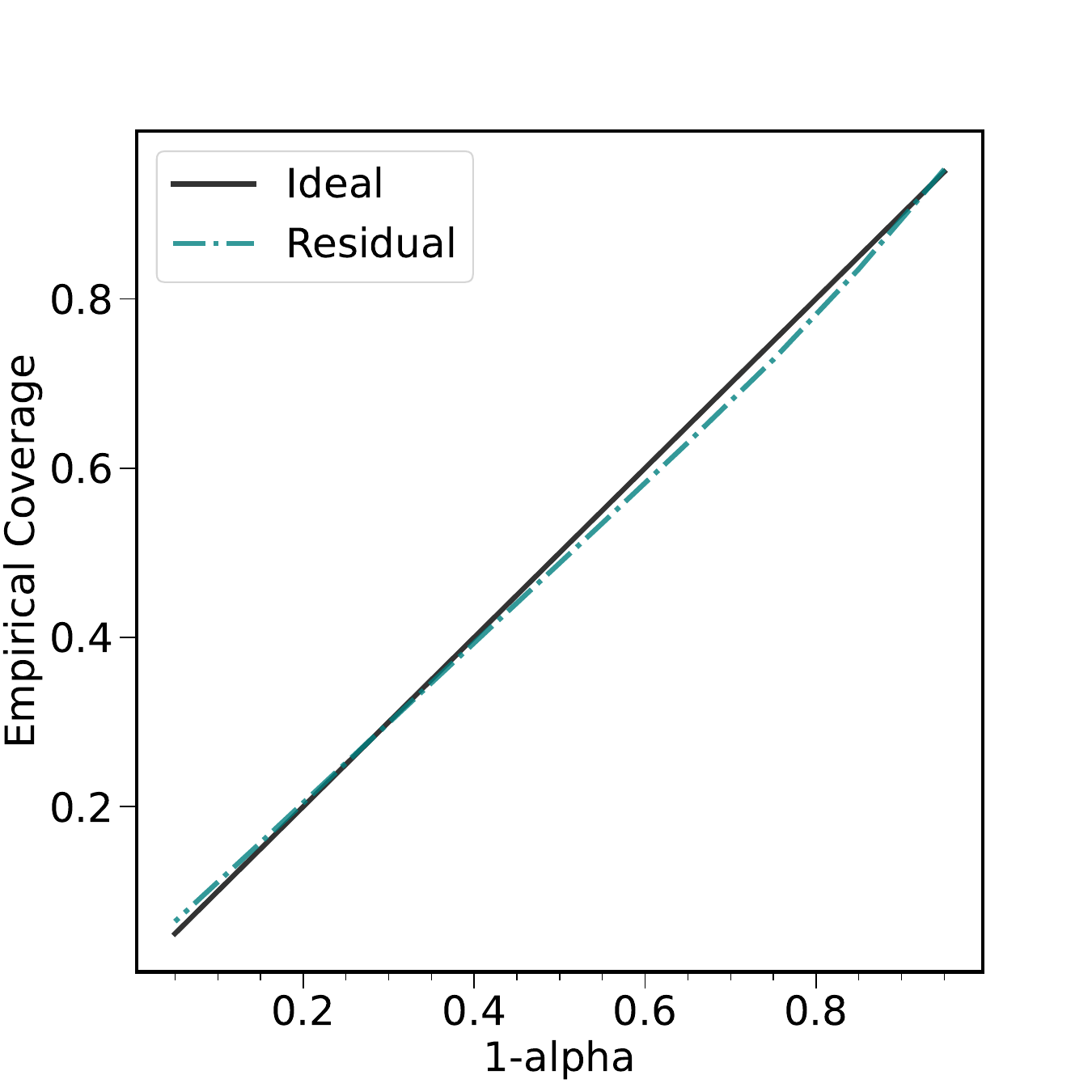}
    %     \label{fig:jorek_coverage}
    % }
    \caption{\textbf{Reduced MHD:} CP-PRE using the Temperature equation (Eqn. 3 in \citep{Gopakumar_2024}) of reduced-MHD to bound the plasma surrogate models. The PRE captures the model error relatively well, allowing us to provide lower and upper error bars corresponding to our required coverage.}
    \label{fig:cp_jorek}
\end{figure}

We focus on the temperature equation within reduced-MHD (equation 3 within \citep{Gopakumar_2024}) as it comprises all the variables associated with the plasma. As shown in \cref{fig:cp_jorek}, our method can capture the model error across a range of predictions and devise error bars that provide guaranteed coverage without additional data. In figure \ref{fig:jorek_abs_err}, we demonstrate the absolute error in the model prediction of the temperature evolution, correlating that with the PRE over the temperature equations in figure \ref{fig:jorek_res}. By obtaining bounds on the PRE, we can determine the efficacy of the surrogate model in evaluating plasma evolution and identify conditions under which it fails, running the MHD code JOREK under those failed conditions. 

\subsection{Magnetic Equilibrium in a Tokamak}
\label{sec:grad-shafranov}
Tokamaks confine plasma within a toroidal vessel using magnetic fields to achieve nuclear fusion. The plasma at high temperatures is contained by magnetic fields that counterbalance its kinetic pressure. This equilibrium state, a function of magnetic coil configurations and plasma parameters, is governed by the Grad-Shafranov (GS) equation \citep{Somov2012}
\begin{equation}
    \frac{\partial^2 \psi}{\partial r^2} - \frac{1}{r} \frac{\partial \psi}{\partial r} + \frac{\partial^2 \psi}{\partial z^2} = -\mu_0 r^2 \frac{dp}{d\psi} - \frac{1}{2} \frac{dF^2}{d\psi}.\label{en: GS}
\end{equation}
where $\psi$ represents the poloidal magnetic flux, $p$ the kinetic pressure, $F = rB$ the toroidal magnetic field, and $\mu_0$ the magnetic permeability given in a 2D toroidal coordinate system characterised by $r$ and $z$. While traditional numerical solvers like EFIT++ and FreeGSNKE \citep{Lao_1985, Amorisco2024} are used for equilibrium reconstruction, their computational cost has motivated neural network alternatives \citep{Joung2023, Jang2024}. However, these surrogate models lack UQ capabilities, making their deployment within the control room challenging. 

\begin{figure}[h!]
    \centering  
    \includegraphics[width=\columnwidth]{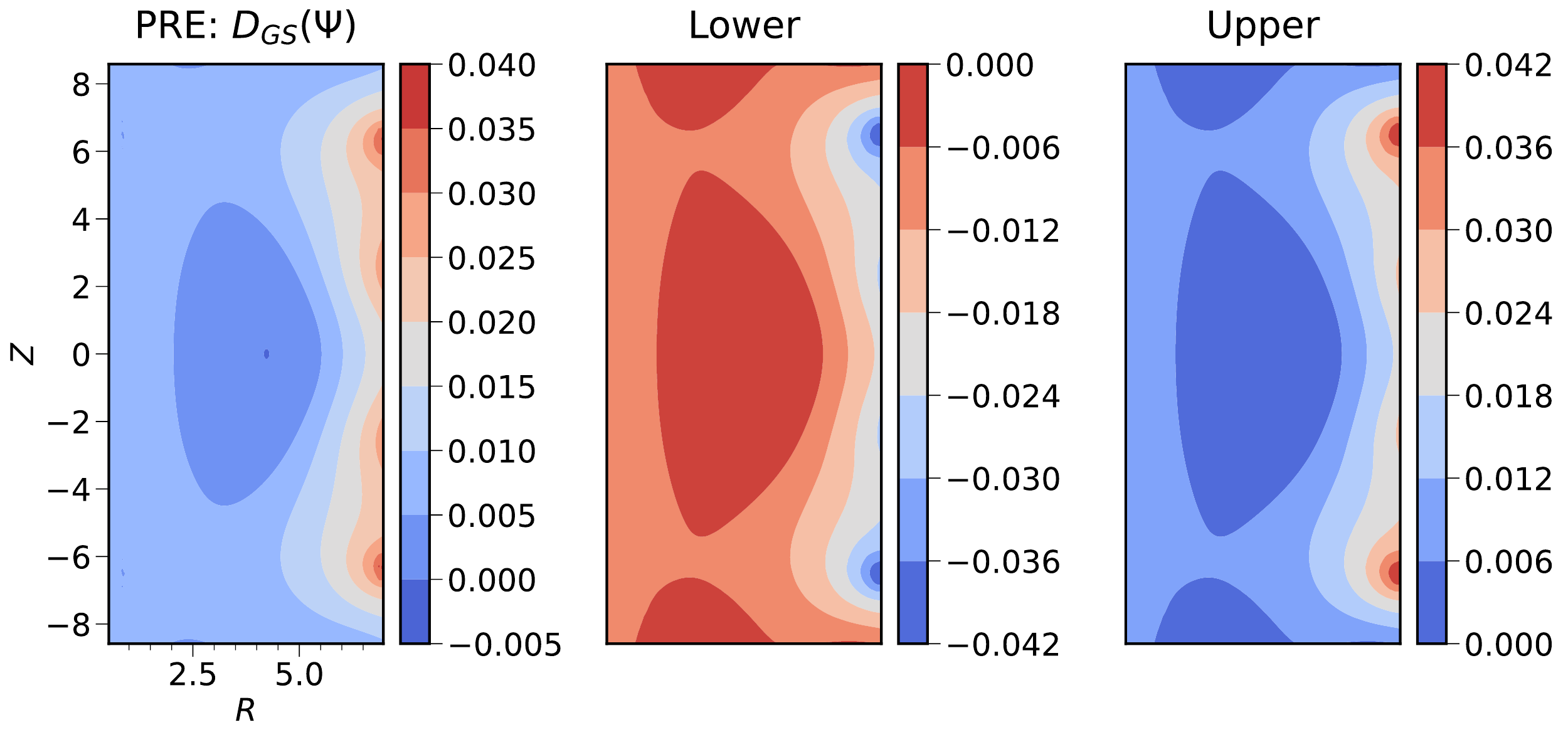}
    \caption{\textbf{Grad-Shafranov:} The PRE for a specific poloidal field coil configuration is indicated on the left, and the lower and upper bars for $50\%$ are displayed adjacent to it. Aside from guaranteeing coverage, the CP-PRE framework allows us to discard physically inconsistent equilibria predicted by the surrogate model.}
    \label{fig:grad-shafranov}
\end{figure}

We implement an auto-encoder that maps poloidal magnetic flux across the poloidal cross-section for given tokamak architectures, conditioned on poloidal field coil locations under constant plasma current. While this accelerates simulation by 4 orders of magnitude, it lacks physical guarantees. By incorporating \cref{en: GS} within the CP-PRE framework, we identify physically stable equilibria and obtain statistically valid error bounds. \cref{fig:grad-shafranov} shows the PRE over a surrogate model prediction with lower and upper error bars for $50\%$ coverage. Further details about the problem setting and the model can be found in \cref{appendix: mag_eqbm}. 

\section{Extensions of CP-PRE beyond PDEs}

Our framework applies to any prediction case where the forward problem can be formulated as a residual with an equality constraint (\cref{theorems,appendix:pre_formulation}). The framework works in any scenario where the forward model can be expressed in the standard canonical form 
\begin{equation}
Ax - b = 0,
\end{equation}
where $x$ is the model prediction, $A$ is a differential or algebraic operator governing the dynamics, and $b$ is a non-homogeneous term such as a function or a constant.

This extends our applications beyond PDEs to ODEs and algebraic equations found in control problems \citep{jiang2014robust}, chemical reactions \cite{thöni2025modellingchemicalreactionnetworks}, biological systems \cite{wang2019massive}, and financial scenarios \cite{Liurisks2019}.

We deliberately focus on PDEs as they represent the most comprehensive and challenging case---multi-dimensional domains with complex spatio-temporal dependencies and unique computational challenges. Success with PDEs implicitly validates applicability to simpler systems.

\section{Discussion}
If \enquote{All models are wrong, but some are useful} \citep{box1976science}, through this work, we explore a novel framework for providing a measure of \textit{usefulness} of neural PDEs. We deploy a principled method of evaluating the accuracy of the solution, i.e. its (calibrated) obedience to the known physics of the system under study. As opposed to other methods of UQ for neural PDEs, our method has the unique advantage of being physics-informed. This allows us to study the physical inconsistencies of the model predictions with coverage guarantees provided by conformal prediction. We conclude with a discussion of the strengths, limitations and potential improvements. 

\paragraph{Strengths}
PRE estimates the violation of conservation laws in neural PDE predictions, guaranteed error bounds over the physics deviation. This post-hoc uncertainty quantification is model- and physics-agnostic, scaling linearly with model complexity and quasi-linearly with PDE complexity due to the additive nature of differential operators. Our framework reformulates CP to be data-free, expressing model inaccuracy solely through PRE, not requiring a labelled dataset. This approach reduces calibration costs and loosens exchangeability restrictions as we can modify the calibration and, hence, the prediction domain by simply reformulating the PRE accordingly. The PRE formulation (\cref{sec:PRE}, \cref{appendix:pre_formulation}) yields input-independent prediction sets, allowing for the identification of weak prediction regions within single simulations (marginal-CP) and across multiple predictions (joint-CP). The latter enables a rejection criterion for a set of predictions, potentially serving as an active-learning pipeline for neural PDE solvers \citep{musekamp2024activelearningneuralpde}. CP-PRE provides guaranteed coverage irrespective of the model, chosen discretisation, or the PDE of interest; however, the width of the error bar indicates quantitative features in the model quality. A well-trained model will exhibit tighter error bars as opposed to a poorer fit model; see \cref{appendix: model_quality}. 

\paragraph{Limitations}
Our method's coverage bounds exist in the PDE residual space rather than the Euclidean space of physical variables. Transforming to physical space involves challenging set propagation through integral operations, which may require approximations \citep{teng2023predictive} or expensive Monte Carlo sampling \citep{AndrieuMCMC2003}. The data-free approach lacks a grounding target for calibration, though we argue that a large sample of model outputs provides a statistically significant overview of uncertainty. The sampling cost from the neural-PDE solver for calibration involves intensive gradient evaluations. PRE estimation using finite-difference stencils also introduces the errors associated with Taylor expansion. The current formulation is limited to regular grids with fixed spacing, though extensions to unstructured grids via graph convolutions are possible \citep{eliasof2020diffgcngraphconvolutionalnetworks}. CP-PRE does not help differentiate between aleatoric and epistemic uncertainty. It aligns with conformal prediction's characterisation of predictive uncertainty. From one perspective, this could be viewed as aleatoric uncertainty since we construct confidence intervals relative to a specific probability distribution (the distribution from which calibration data, i.e. initial conditions/PDE coefficients, are sampled). Alternatively, it could be considered epistemic uncertainty since we model the neural network's error through confidence intervals (typically used for unknown but fixed quantities). While we believe the latter interpretation is more appropriate, we acknowledge that the traditional aleatoric/epistemic dichotomy may not be directly applicable to our framework. This distinction is most valuable when both uncertainty types coexist and require separate treatment \citep{Ferson1996}.

\section{Conclusion}
We address the problem of the reliability of neural-PDE solvers by proposing CP-PRE, a novel conformal prediction framework. Our method provides guaranteed and physics-informed uncertainty estimates for each cell within a prediction, identifying erroneous regions while discerning physically inconsistent predictions across the entire spatio-temporal domain. Our work enhances the reliability of neural PDE solvers, potentially broadening their applicability in science and engineering domains where robust uncertainty quantification is crucial.

\clearpage
\section*{Impact Statement}
This paper presents work whose goal is to advance the field of Machine Learning. There are many potential societal consequences of our work, none of which we feel must be specifically highlighted here.

\clearpage
\bibliography{paper}

\begin{thebibliography}{88}
\providecommand{\natexlab}[1]{#1}
\providecommand{\url}[1]{\texttt{#1}}
\expandafter\ifx\csname urlstyle\endcsname\relax
  \providecommand{\doi}[1]{doi: #1}\else
  \providecommand{\doi}{doi: \begingroup \urlstyle{rm}\Url}\fi

\bibitem[Abadi et~al.(2015)Abadi, Agarwal, Barham, Brevdo, Chen, Citro, Corrado, Davis, Dean, Devin, Ghemawat, Goodfellow, Harp, Irving, Isard, Jia, Jozefowicz, Kaiser, Kudlur, Levenberg, Man\'{e}, Monga, Moore, Murray, Olah, Schuster, Shlens, Steiner, Sutskever, Talwar, Tucker, Vanhoucke, Vasudevan, Vi\'{e}gas, Vinyals, Warden, Wattenberg, Wicke, Yu, and Zheng]{tensorflow2015-whitepaper}
Abadi, M., Agarwal, A., Barham, P., Brevdo, E., Chen, Z., Citro, C., Corrado, G.~S., Davis, A., Dean, J., Devin, M., Ghemawat, S., Goodfellow, I., Harp, A., Irving, G., Isard, M., Jia, Y., Jozefowicz, R., Kaiser, L., Kudlur, M., Levenberg, J., Man\'{e}, D., Monga, R., Moore, S., Murray, D., Olah, C., Schuster, M., Shlens, J., Steiner, B., Sutskever, I., Talwar, K., Tucker, P., Vanhoucke, V., Vasudevan, V., Vi\'{e}gas, F., Vinyals, O., Warden, P., Wattenberg, M., Wicke, M., Yu, Y., and Zheng, X.
\newblock {TensorFlow}: Large-scale machine learning on heterogeneous systems, 2015.
\newblock URL \url{https://www.tensorflow.org/}.
\newblock Software available from tensorflow.org.

\bibitem[Abdar et~al.(2021)Abdar, Pourpanah, Hussain, Rezazadegan, Liu, Ghavamzadeh, Fieguth, Cao, Khosravi, Acharya, Makarenkov, and Nahavandi]{ABDAR2021243}
Abdar, M., Pourpanah, F., Hussain, S., Rezazadegan, D., Liu, L., Ghavamzadeh, M., Fieguth, P., Cao, X., Khosravi, A., Acharya, U.~R., Makarenkov, V., and Nahavandi, S.
\newblock A review of uncertainty quantification in deep learning: Techniques, applications and challenges.
\newblock \emph{Information Fusion}, 76:\penalty0 243--297, 2021.
\newblock ISSN 1566-2535.
\newblock \doi{https://doi.org/10.1016/j.inffus.2021.05.008}.
\newblock URL \url{https://www.sciencedirect.com/science/article/pii/S1566253521001081}.

\bibitem[Actor et~al.(2020)Actor, Fuentes, and Riviere]{Actor2020-ng}
Actor, J., Fuentes, D.~T., and Riviere, B.
\newblock Identification of kernels in a convolutional neural network: connections between the level set equation and deep learning for image segmentation.
\newblock In Landman, B.~A. and I{\v s}gum, I. (eds.), \emph{Medical Imaging 2020: Image Processing}. SPIE, March 2020.

\bibitem[Alfv{\'e}n(1942)]{ALFVÉN1942}
Alfv{\'e}n, H.
\newblock Existence of electromagnetic-hydrodynamic waves.
\newblock \emph{Nature}, 150\penalty0 (3805):\penalty0 405--406, Oct 1942.
\newblock ISSN 1476-4687.
\newblock \doi{10.1038/150405d0}.
\newblock URL \url{https://doi.org/10.1038/150405d0}.

\bibitem[Alguacil et~al.(2021)Alguacil, Pinto, Bauerheim, Jacob, and Moreau]{AntonioBCConv2021}
Alguacil, A., Pinto, W.~G., Bauerheim, M., Jacob, M.~C., and Moreau, S.
\newblock Effects of boundary conditions in fully convolutional networks for learning spatio-temporal dynamics.
\newblock In Dong, Y., Kourtellis, N., Hammer, B., and Lozano, J.~A. (eds.), \emph{Machine Learning and Knowledge Discovery in Databases. Applied Data Science Track}, pp.\  102--117, Cham, 2021. Springer International Publishing.
\newblock ISBN 978-3-030-86517-7.

\bibitem[Amorisco et~al.(2024)Amorisco, Agnello, Holt, Mars, Buchanan, and Pamela]{Amorisco2024}
Amorisco, N.~C., Agnello, A., Holt, G., Mars, M., Buchanan, J., and Pamela, S.
\newblock Freegsnke: A python-based dynamic free-boundary toroidal plasma equilibrium solver.
\newblock \emph{Physics of Plasmas}, 31\penalty0 (4):\penalty0 042517, Apr 2024.
\newblock ISSN 1070-664X.
\newblock \doi{10.1063/5.0188467}.
\newblock URL \url{https://doi.org/10.1063/5.0188467}.

\bibitem[Andrieu et~al.(2003)Andrieu, de~Freitas, Doucet, and Jordan]{AndrieuMCMC2003}
Andrieu, C., de~Freitas, N., Doucet, A., and Jordan, M.~I.
\newblock An introduction to mcmc for machine learning.
\newblock \emph{Machine Learning}, 50\penalty0 (1):\penalty0 5--43, Jan 2003.
\newblock ISSN 1573-0565.
\newblock \doi{10.1023/A:1020281327116}.
\newblock URL \url{https://doi.org/10.1023/A:1020281327116}.

\bibitem[Angelopoulos \& Bates(2023)Angelopoulos and Bates]{gentle_introduction_CP}
Angelopoulos, A.~N. and Bates, S.
\newblock Conformal prediction: A gentle introduction.
\newblock \emph{Found. Trends Mach. Learn.}, 16\penalty0 (4):\penalty0 494–591, mar 2023.
\newblock ISSN 1935-8237.
\newblock \doi{10.1561/2200000101}.
\newblock URL \url{https://doi.org/10.1561/2200000101}.

\bibitem[Bartolucci et~al.(2023)Bartolucci, de~Bezenac, Raonic, Molinaro, Mishra, and Alaifari]{bartolucci2023representation}
Bartolucci, F., de~Bezenac, E., Raonic, B., Molinaro, R., Mishra, S., and Alaifari, R.
\newblock Representation equivalent neural operators: a framework for alias-free operator learning.
\newblock In \emph{Thirty-seventh Conference on Neural Information Processing Systems}, 2023.
\newblock URL \url{https://openreview.net/forum?id=7LSEkvEGCM}.

\bibitem[Bertone et~al.(2019)Bertone, Deisenroth, Kim, Liem, {Ruiz de Austri}, and Welling]{Bertone2019}
Bertone, G., Deisenroth, M.~P., Kim, J.~S., Liem, S., {Ruiz de Austri}, R., and Welling, M.
\newblock Accelerating the bsm interpretation of lhc data with machine learning.
\newblock \emph{Physics of the Dark Universe}, 24:\penalty0 100293, 2019.

\bibitem[Box(1976)]{box1976science}
Box, G. E.~P.
\newblock Science and statistics.
\newblock \emph{Journal of the American Statistical Association}, 71\penalty0 (356):\penalty0 791--799, 1976.
\newblock \doi{10.1080/01621459.1976.10480949}.

\bibitem[Canuto et~al.(2007)Canuto, Hussaini, Quarteroni, and Zang]{canuto2007spectral}
Canuto, C., Hussaini, M., Quarteroni, A., and Zang, T.
\newblock \emph{Spectral Methods: Evolution to Complex Geometries and Applications to Fluid Dynamics}.
\newblock Scientific Computation. Springer Berlin Heidelberg, 2007.
\newblock ISBN 9783540307280.
\newblock URL \url{https://books.google.co.uk/books?id=7COgEw5_EBQC}.

\bibitem[Carey et~al.(2024)Carey, Zanisi, Pamela, Gopakumar, Omotani, Buchanan, and Brandstetter]{carey2024dataefficiencylongterm}
Carey, N., Zanisi, L., Pamela, S., Gopakumar, V., Omotani, J., Buchanan, J., and Brandstetter, J.
\newblock Data efficiency and long term prediction capabilities for neural operator surrogate models of core and edge plasma codes, 2024.
\newblock URL \url{https://arxiv.org/abs/2402.08561}.

\bibitem[Casella et~al.(2004)Casella, Robert, and Wells]{Casella_acceptrejectsampling_2004}
Casella, G., Robert, C.~P., and Wells, M.~T.
\newblock Generalized accept-reject sampling schemes.
\newblock \emph{Lecture Notes-Monograph Series}, 45:\penalty0 342--347, 2004.
\newblock ISSN 07492170.
\newblock URL \url{http://www.jstor.org/stable/4356322}.

\bibitem[Chalapathi et~al.(2024)Chalapathi, Du, and Krishnapriyan]{chalapathi2024scaling}
Chalapathi, N., Du, Y., and Krishnapriyan, A.~S.
\newblock Scaling physics-informed hard constraints with mixture-of-experts.
\newblock In \emph{The Twelfth International Conference on Learning Representations}, 2024.
\newblock URL \url{https://openreview.net/forum?id=u3dX2CEIZb}.

\bibitem[Chen et~al.(2024{\natexlab{a}})Chen, Heaney, Gomes, Matar, and Pain]{CHEN2024116974}
Chen, B., Heaney, C.~E., Gomes, J.~L., Matar, O.~K., and Pain, C.~C.
\newblock Solving the discretised multiphase flow equations with interface capturing on structured grids using machine learning libraries.
\newblock \emph{Computer Methods in Applied Mechanics and Engineering}, 426:\penalty0 116974, 2024{\natexlab{a}}.
\newblock ISSN 0045-7825.
\newblock \doi{https://doi.org/10.1016/j.cma.2024.116974}.
\newblock URL \url{https://www.sciencedirect.com/science/article/pii/S0045782524002305}.

\bibitem[Chen et~al.(2024{\natexlab{b}})Chen, Heaney, and Pain]{chen2024usingailibrariesincompressible}
Chen, B., Heaney, C.~E., and Pain, C.~C.
\newblock Using ai libraries for incompressible computational fluid dynamics, 2024{\natexlab{b}}.
\newblock URL \url{https://arxiv.org/abs/2402.17913}.

\bibitem[Crank \& Nicolson(1947)Crank and Nicolson]{Crank_Nicolson_1947}
Crank, J. and Nicolson, P.
\newblock A practical method for numerical evaluation of solutions of partial differential equations of the heat-conduction type.
\newblock \emph{Mathematical Proceedings of the Cambridge Philosophical Society}, 43\penalty0 (1):\penalty0 50–67, 1947.
\newblock \doi{10.1017/S0305004100023197}.

\bibitem[Danabasoglu et~al.(2020)Danabasoglu, Lamarque, Bacmeister, Bailey, DuVivier, Edwards, Emmons, Fasullo, Garcia, Gettelman, Hannay, Holland, Large, Lauritzen, Lawrence, Lenaerts, Lindsay, Lipscomb, Mills, Neale, Oleson, Otto-Bliesner, Phillips, Sacks, Tilmes, van Kampenhout, Vertenstein, Bertini, Dennis, Deser, Fischer, Fox-Kemper, Kay, Kinnison, Kushner, Larson, Long, Mickelson, Moore, Nienhouse, Polvani, Rasch, and Strand]{cesm2}
Danabasoglu, G., Lamarque, J.-F., Bacmeister, J., Bailey, D.~A., DuVivier, A.~K., Edwards, J., Emmons, L.~K., Fasullo, J., Garcia, R., Gettelman, A., Hannay, C., Holland, M.~M., Large, W.~G., Lauritzen, P.~H., Lawrence, D.~M., Lenaerts, J. T.~M., Lindsay, K., Lipscomb, W.~H., Mills, M.~J., Neale, R., Oleson, K.~W., Otto-Bliesner, B., Phillips, A.~S., Sacks, W., Tilmes, S., van Kampenhout, L., Vertenstein, M., Bertini, A., Dennis, J., Deser, C., Fischer, C., Fox-Kemper, B., Kay, J.~E., Kinnison, D., Kushner, P.~J., Larson, V.~E., Long, M.~C., Mickelson, S., Moore, J.~K., Nienhouse, E., Polvani, L., Rasch, P.~J., and Strand, W.~G.
\newblock The community earth system model version 2 (cesm2).
\newblock \emph{Journal of Advances in Modeling Earth Systems}, 12\penalty0 (2):\penalty0 e2019MS001916, 2020.
\newblock \doi{https://doi.org/10.1029/2019MS001916}.
\newblock URL \url{https://agupubs.onlinelibrary.wiley.com/doi/abs/10.1029/2019MS001916}.
\newblock e2019MS001916 2019MS001916.

\bibitem[de~Hoop et~al.(2022)de~Hoop, Huang, Qian, and Stuart]{dehoop2022costaccuracytradeoffoperatorlearning}
de~Hoop, M.~V., Huang, D.~Z., Qian, E., and Stuart, A.~M.
\newblock The cost-accuracy trade-off in operator learning with neural networks, 2022.
\newblock URL \url{https://arxiv.org/abs/2203.13181}.

\bibitem[Diquigiovanni et~al.(2021)Diquigiovanni, Fontana, and Vantini]{diquigiovanni2021importancebandfinitesampleexact}
Diquigiovanni, J., Fontana, M., and Vantini, S.
\newblock The importance of being a band: Finite-sample exact distribution-free prediction sets for functional data, 2021.
\newblock URL \url{https://arxiv.org/abs/2102.06746}.

\bibitem[Diquigiovanni et~al.(2022)Diquigiovanni, Fontana, and Vantini]{DiquigiovanniCP_MV2022}
Diquigiovanni, J., Fontana, M., and Vantini, S.
\newblock Conformal prediction bands for multivariate functional data.
\newblock \emph{Journal of Multivariate Analysis}, 189:\penalty0 104879, 2022.
\newblock ISSN 0047-259X.
\newblock \doi{https://doi.org/10.1016/j.jmva.2021.104879}.
\newblock URL \url{https://www.sciencedirect.com/science/article/pii/S0047259X21001573}.

\bibitem[Du et~al.(2024)Du, Chalapathi, and Krishnapriyan]{du2024neural}
Du, Y., Chalapathi, N., and Krishnapriyan, A.~S.
\newblock Neural spectral methods: Self-supervised learning in the spectral domain.
\newblock In \emph{The Twelfth International Conference on Learning Representations}, 2024.
\newblock URL \url{https://openreview.net/forum?id=2DbVeuoa6a}.

\bibitem[Eliasof \& Treister(2020)Eliasof and Treister]{eliasof2020diffgcngraphconvolutionalnetworks}
Eliasof, M. and Treister, E.
\newblock Diffgcn: Graph convolutional networks via differential operators and algebraic multigrid pooling.
\newblock In Larochelle, H., Ranzato, M., Hadsell, R., Balcan, M., and Lin, H. (eds.), \emph{Advances in Neural Information Processing Systems}, volume~33, pp.\  18016--18027. Curran Associates, Inc., 2020.
\newblock URL \url{https://proceedings.neurips.cc/paper_files/paper/2020/file/d16a974d4d6d0d71b29bfbfe045f1da7-Paper.pdf}.

\bibitem[Eymard et~al.(2000)Eymard, Gallouët, and Herbin]{eymard2000finite}
Eymard, R., Gallouët, T., and Herbin, R.
\newblock Finite volume methods.
\newblock In Lions, J.~L. and Ciarlet, P. (eds.), \emph{Solution of Equation in Rn (Part 3), Techniques of Scientific Computing (Part 3)}, volume~7 of \emph{Handbook of Numerical Analysis}, pp.\  713--1020. Elsevier, 2000.
\newblock ISBN 9780444503503.
\newblock \doi{10.1016/S1570-8659(00)070058}.

\bibitem[Ferson \& Ginzburg(1996)Ferson and Ginzburg]{Ferson1996}
Ferson, S. and Ginzburg, L.~R.
\newblock Different methods are needed to propagate ignorance and variability.
\newblock \emph{Reliability Engineering \& System Safety}, 54\penalty0 (2):\penalty0 133--144, 1996.
\newblock ISSN 0951-8320.
\newblock \doi{https://doi.org/10.1016/S0951-8320(96)00071-3}.
\newblock URL \url{https://www.sciencedirect.com/science/article/pii/S0951832096000713}.
\newblock Treatment of Aleatory and Epistemic Uncertainty.

\bibitem[Fiorentino \& Serra(1991)Fiorentino and Serra]{Fiorentino1991}
Fiorentino, G. and Serra, S.
\newblock Multigrid methods for toeplitz matrices.
\newblock \emph{CALCOLO}, 28\penalty0 (3):\penalty0 283--305, Sep 1991.
\newblock ISSN 1126-5434.
\newblock \doi{10.1007/BF02575816}.
\newblock URL \url{https://doi.org/10.1007/BF02575816}.

\bibitem[Gal \& Ghahramani(2016)Gal and Ghahramani]{gal2016dropout}
Gal, Y. and Ghahramani, Z.
\newblock Dropout as a bayesian approximation: Representing model uncertainty in deep learning.
\newblock In \emph{International Conference on Machine Learning}, 2016.

\bibitem[Geneva \& Zabaras(2020)Geneva and Zabaras]{GENEVA2020109056}
Geneva, N. and Zabaras, N.
\newblock Modeling the dynamics of pde systems with physics-constrained deep auto-regressive networks.
\newblock \emph{Journal of Computational Physics}, 403:\penalty0 109056, 2020.
\newblock ISSN 0021-9991.
\newblock \doi{https://doi.org/10.1016/j.jcp.2019.109056}.
\newblock URL \url{https://www.sciencedirect.com/science/article/pii/S0021999119307612}.

\bibitem[Giles et~al.(2024)Giles, Briant, Morcrette, and Guillas]{Giles2024}
Giles, D., Briant, J., Morcrette, C.~J., and Guillas, S.
\newblock Embedding machine-learnt sub-grid variability improves climate model precipitation patterns.
\newblock \emph{Communications Earth {\&} Environment}, 5\penalty0 (1):\penalty0 712, Nov 2024.
\newblock ISSN 2662-4435.
\newblock \doi{10.1038/s43247-024-01885-8}.
\newblock URL \url{https://doi.org/10.1038/s43247-024-01885-8}.

\bibitem[Giudicelli et~al.(2024)Giudicelli, Lindsay, Harbour, Icenhour, Li, Hansel, German, Behne, Marin, Stogner, Miller, Schwen, Wang, Munday, Schunert, Spencer, Yushu, Recuero, Prince, Nezdyur, Hu, Miao, Jung, Matthews, Novak, Langley, Truster, Nobre, Alger, Andr{\v{s}}, Kong, Carlsen, Slaughter, Peterson, Gaston, and Permann]{giudicelli2024moose}
Giudicelli, G., Lindsay, A., Harbour, L., Icenhour, C., Li, M., Hansel, J.~E., German, P., Behne, P., Marin, O., Stogner, R.~H., Miller, J.~M., Schwen, D., Wang, Y., Munday, L., Schunert, S., Spencer, B.~W., Yushu, D., Recuero, A., Prince, Z.~M., Nezdyur, M., Hu, T., Miao, Y., Jung, Y.~S., Matthews, C., Novak, A., Langley, B., Truster, T., Nobre, N., Alger, B., Andr{\v{s}}, D., Kong, F., Carlsen, R., Slaughter, A.~E., Peterson, J.~W., Gaston, D., and Permann, C.
\newblock 3.0 - {MOOSE}: Enabling massively parallel multiphysics simulations.
\newblock \emph{{SoftwareX}}, 26:\penalty0 101690, 2024.
\newblock ISSN 2352-7110.
\newblock \doi{https://doi.org/10.1016/j.softx.2024.101690}.
\newblock URL \url{https://www.sciencedirect.com/science/article/pii/S235271102400061X}.

\bibitem[Gopakumar \& Samaddar(2020)Gopakumar and Samaddar]{Gopakumar_2020}
Gopakumar, V. and Samaddar, D.
\newblock Image mapping the temporal evolution of edge characteristics in tokamaks using neural networks.
\newblock \emph{Machine Learning: Science and Technology}, 1\penalty0 (1):\penalty0 015006, 2020.
\newblock \doi{10.1088/2632-2153/ab5639}.
\newblock URL \url{https://dx.doi.org/10.1088/2632-2153/ab5639}.

\bibitem[Gopakumar et~al.(2023{\natexlab{a}})Gopakumar, Pamela, and Samaddar]{GOPAKUMAR2023100464}
Gopakumar, V., Pamela, S., and Samaddar, D.
\newblock Loss landscape engineering via data regulation on {PINN}s.
\newblock \emph{Machine Learning with Applications}, 12:\penalty0 100464, 2023{\natexlab{a}}.
\newblock ISSN 2666-8270.
\newblock \doi{https://doi.org/10.1016/j.mlwa.2023.100464}.
\newblock URL \url{https://www.sciencedirect.com/science/article/pii/S2666827023000178}.

\bibitem[Gopakumar et~al.(2023{\natexlab{b}})Gopakumar, Pamela, Zanisi, Li, Anandkumar, and Team]{gopakumar2023fourierneuraloperatorplasma}
Gopakumar, V., Pamela, S., Zanisi, L., Li, Z., Anandkumar, A., and Team, M.
\newblock Fourier neural operator for plasma modelling, 2023{\natexlab{b}}.
\newblock URL \url{https://arxiv.org/abs/2302.06542}.

\bibitem[Gopakumar et~al.(2024{\natexlab{a}})Gopakumar, Gray, Oskarsson, Zanisi, Pamela, Giles, Kusner, and Deisenroth]{gopakumar2024uncertaintyquantificationsurrogatemodels}
Gopakumar, V., Gray, A., Oskarsson, J., Zanisi, L., Pamela, S., Giles, D., Kusner, M., and Deisenroth, M.~P.
\newblock Uncertainty quantification of surrogate models using conformal prediction, 2024{\natexlab{a}}.
\newblock URL \url{https://arxiv.org/abs/2408.09881}.

\bibitem[Gopakumar et~al.(2024{\natexlab{b}})Gopakumar, Pamela, Zanisi, Li, Gray, Brennand, Bhatia, Stathopoulos, Kusner, Deisenroth, Anandkumar, the JOREK~Team, and Team]{Gopakumar_2024}
Gopakumar, V., Pamela, S., Zanisi, L., Li, Z., Gray, A., Brennand, D., Bhatia, N., Stathopoulos, G., Kusner, M., Deisenroth, M.~P., Anandkumar, A., the JOREK~Team, and Team, M.
\newblock Plasma surrogate modelling using fourier neural operators.
\newblock \emph{Nuclear Fusion}, 64\penalty0 (5):\penalty0 056025, 4 2024{\natexlab{b}}.
\newblock \doi{10.1088/1741-4326/ad313a}.
\newblock URL \url{https://dx.doi.org/10.1088/1741-4326/ad313a}.

\bibitem[Gray et~al.(2025)Gray, Gopakumar, Rousseau, and Destercke]{gray2025guaranteedconfidencebandenclosurespde}
Gray, A., Gopakumar, V., Rousseau, S., and Destercke, S.
\newblock Guaranteed confidence-band enclosures for pde surrogates, 2025.
\newblock URL \url{https://arxiv.org/abs/2501.18426}.

\bibitem[Gruber \& Rappaz(1985)Gruber and Rappaz]{Gruber1985}
Gruber, R. and Rappaz, J.
\newblock \emph{The Ideal MHD Model}, pp.\  34--41.
\newblock Springer Berlin Heidelberg, Berlin, Heidelberg, 1985.
\newblock ISBN 978-3-642-86708-8.
\newblock \doi{10.1007/978-3-642-86708-8_3}.
\newblock URL \url{https://doi.org/10.1007/978-3-642-86708-8_3}.

\bibitem[Hoelzl et~al.(2021)Hoelzl, Huijsmans, Pamela, Bécoulet, Nardon, Artola, Nkonga, Atanasiu, Bandaru, Bhole, Bonfiglio, Cathey, Czarny, Dvornova, Fehér, Fil, Franck, Futatani, Gruca, Guillard, Haverkort, Holod, Hu, Kim, Korving, Kos, Krebs, Kripner, Latu, Liu, Merkel, Meshcheriakov, Mitterauer, Mochalskyy, Morales, Nies, Nikulsin, Orain, Pratt, Ramasamy, Ramet, Reux, Särkimäki, Schwarz, Verma, Smith, Sommariva, Strumberger, van Vugt, Verbeek, Westerhof, Wieschollek, and Zielinski]{Hoelzl2021jorek}
Hoelzl, M., Huijsmans, G. T.~A., Pamela, S. J.~P., Bécoulet, M., Nardon, E., Artola, F., Nkonga, B., Atanasiu, C.~V., Bandaru, V., Bhole, A., Bonfiglio, D., Cathey, A., Czarny, O., Dvornova, A., Fehér, T., Fil, A., Franck, E., Futatani, S., Gruca, M., Guillard, H., Haverkort, J.~W., Holod, I., Hu, D., Kim, S.~K., Korving, S.~Q., Kos, L., Krebs, I., Kripner, L., Latu, G., Liu, F., Merkel, P., Meshcheriakov, D., Mitterauer, V., Mochalskyy, S., Morales, J.~A., Nies, R., Nikulsin, N., Orain, F., Pratt, J., Ramasamy, R., Ramet, P., Reux, C., Särkimäki, K., Schwarz, N., Verma, P.~S., Smith, S.~F., Sommariva, C., Strumberger, E., van Vugt, D.~C., Verbeek, M., Westerhof, E., Wieschollek, F., and Zielinski, J.
\newblock The jorek non-linear extended mhd code and applications to large-scale instabilities and their control in magnetically confined fusion plasmas.
\newblock \emph{Nuclear Fusion}, 61\penalty0 (6):\penalty0 065001, 2021.
\newblock \doi{10.1088/1741-4326/abf99f}.
\newblock URL \url{https://dx.doi.org/10.1088/1741-4326/abf99f}.

\bibitem[Horwitz(2024)]{carbonfootprint_CFD}
Horwitz, J. A.~K.
\newblock {Estimating the carbon footprint of computational fluid dynamics}.
\newblock \emph{Physics of Fluids}, 36\penalty0 (4):\penalty0 045109, 04 2024.
\newblock ISSN 1070-6631.
\newblock \doi{10.1063/5.0199350}.
\newblock URL \url{https://doi.org/10.1063/5.0199350}.

\bibitem[Iserles(2009)]{iserles2009first}
Iserles, A.
\newblock \emph{A first course in the numerical analysis of differential equations}.
\newblock Cambridge university press, 2009.

\bibitem[Jang et~al.(2024)Jang, Kaptanoglu, Gaur, Pan, Landreman, and Dorland]{Jang2024}
Jang, B., Kaptanoglu, A.~A., Gaur, R., Pan, S., Landreman, M., and Dorland, W.
\newblock Grad--shafranov equilibria via data-free physics informed neural networks.
\newblock \emph{Physics of Plasmas}, 31\penalty0 (3):\penalty0 032510, Mar 2024.
\newblock ISSN 1070-664X.
\newblock \doi{10.1063/5.0188634}.
\newblock URL \url{https://doi.org/10.1063/5.0188634}.

\bibitem[Jiang et~al.(2020)Jiang, Esmaeilzadeh, Azizzadenesheli, Kashinath, Mustafa, Tchelepi, Marcus, Prabhat, and Anandkumar]{jiang2020meshfreeflownet}
Jiang, C.~M., Esmaeilzadeh, S., Azizzadenesheli, K., Kashinath, K., Mustafa, M., Tchelepi, H.~A., Marcus, P., Prabhat, and Anandkumar, A.
\newblock Meshfreeflownet: A physics-constrained deep continuous space-time super-resolution framework.
\newblock \emph{arXiv:2005.01463}, 2020.

\bibitem[Jiang \& Jiang(2014)Jiang and Jiang]{jiang2014robust}
Jiang, Y. and Jiang, Z.~P.
\newblock Robust adaptive dynamic programming and feedback stabilization of nonlinear systems.
\newblock \emph{IEEE Transactions on Neural Networks and Learning Systems}, 25\penalty0 (5):\penalty0 882--893, 2014.
\newblock \doi{10.1109/TNNLS.2013.2294968}.

\bibitem[Jordà et~al.(2019)Jordà, Valero-Lara, and Peña]{conv_correlation}
Jordà, M., Valero-Lara, P., and Peña, A.~J.
\newblock Performance evaluation of cudnn convolution algorithms on nvidia volta gpus.
\newblock \emph{IEEE Access}, 7:\penalty0 70461--70473, 2019.
\newblock \doi{10.1109/ACCESS.2019.2918851}.

\bibitem[Joung et~al.(2023)Joung, Ghim, Kim, Kwak, Kwon, Sung, Kim, Kim, Bak, and Yoon]{Joung2023}
Joung, S., Ghim, Y.-C., Kim, J., Kwak, S., Kwon, D., Sung, C., Kim, D., Kim, H.-S., Bak, J.~G., and Yoon, S.~W.
\newblock Gs-deepnet: mastering tokamak plasma equilibria with deep neural networks and the grad--shafranov equation.
\newblock \emph{Scientific Reports}, 13\penalty0 (1):\penalty0 15799, Sep 2023.
\newblock ISSN 2045-2322.
\newblock \doi{10.1038/s41598-023-42991-5}.
\newblock URL \url{https://doi.org/10.1038/s41598-023-42991-5}.

\bibitem[Karniadakis et~al.(2021)Karniadakis, Kevrekidis, Lu, Perdikaris, Wang, and Yang]{PIML}
Karniadakis, G.~E., Kevrekidis, I.~G., Lu, L., Perdikaris, P., Wang, S., and Yang, L.
\newblock Physics-informed machine learning.
\newblock \emph{Nature Reviews Physics}, 3\penalty0 (6):\penalty0 422--440, 2021.
\newblock ISSN 2522-5820.
\newblock \doi{10.1038/s42254-021-00314-5}.
\newblock URL \url{https://doi.org/10.1038/s42254-021-00314-5}.

\bibitem[Kato et~al.(2023)Kato, Tax, and Loog]{Kato_ncf_review_2024}
Kato, Y., Tax, D.~M., and Loog, M.
\newblock A review of nonconformity measures for conformal prediction in regression.
\newblock In Papadopoulos, H., Nguyen, K.~A., Boström, H., and Carlsson, L. (eds.), \emph{Proceedings of the Twelfth Symposium on Conformal and Probabilistic Prediction with Applications}, volume 204 of \emph{Proceedings of Machine Learning Research}, pp.\  369--383. PMLR, 13--15 Sep 2023.
\newblock URL \url{https://proceedings.mlr.press/v204/kato23a.html}.

\bibitem[Kingma \& Ba(2015)Kingma and Ba]{adam}
Kingma, D.~P. and Ba, J.
\newblock Adam: {A} method for stochastic optimization.
\newblock In Bengio, Y. and LeCun, Y. (eds.), \emph{3rd International Conference on Learning Representations, {ICLR} 2015, San Diego, CA, USA, May 7-9, 2015, Conference Track Proceedings}, 2015.
\newblock URL \url{http://arxiv.org/abs/1412.6980}.

\bibitem[Kochkov et~al.(2024)Kochkov, Yuval, Langmore, Norgaard, Smith, Mooers, Kl{\"o}wer, Lottes, Rasp, D{\"u}ben, Hatfield, Battaglia, Sanchez-Gonzalez, Willson, Brenner, and Hoyer]{Kochkov2024neural}
Kochkov, D., Yuval, J., Langmore, I., Norgaard, P., Smith, J., Mooers, G., Kl{\"o}wer, M., Lottes, J., Rasp, S., D{\"u}ben, P., Hatfield, S., Battaglia, P., Sanchez-Gonzalez, A., Willson, M., Brenner, M.~P., and Hoyer, S.
\newblock Neural general circulation models for weather and climate.
\newblock \emph{Nature}, 632\penalty0 (8027):\penalty0 1060--1066, Aug 2024.
\newblock ISSN 1476-4687.
\newblock \doi{10.1038/s41586-024-07744-y}.
\newblock URL \url{https://doi.org/10.1038/s41586-024-07744-y}.

\bibitem[Lakshminarayanan et~al.(2017)Lakshminarayanan, Pritzel, and Blundell]{lakshminarayanan2017simple}
Lakshminarayanan, B., Pritzel, A., and Blundell, C.
\newblock Simple and scalable predictive uncertainty estimation using deep ensembles, 2017.

\bibitem[Lao et~al.(1985)Lao, St.~John, Stambaugh, Kellman, and Pfeiffer]{Lao_1985}
Lao, L., St.~John, H., Stambaugh, R., Kellman, A., and Pfeiffer, W.
\newblock Reconstruction of current profile parameters and plasma shapes in tokamaks.
\newblock \emph{Nuclear Fusion}, 25\penalty0 (11):\penalty0 1611, nov 1985.
\newblock \doi{10.1088/0029-5515/25/11/007}.
\newblock URL \url{https://dx.doi.org/10.1088/0029-5515/25/11/007}.

\bibitem[Li et~al.(2014)Li, Jiang, Ren, and Xu]{optimal_tracking_tokamak}
Li, S., Jiang, H., Ren, Z., and Xu, C.
\newblock Optimal tracking for a divergent-type parabolic pde system in current profile control.
\newblock \emph{Abstract and Applied Analysis}, 2014:\penalty0 1--8, 06 2014.
\newblock \doi{10.1155/2014/940965}.

\bibitem[Li et~al.(2021)Li, Kovachki, Azizzadenesheli, liu, Bhattacharya, Stuart, and Anandkumar]{li2021fourier}
Li, Z., Kovachki, N.~B., Azizzadenesheli, K., liu, B., Bhattacharya, K., Stuart, A., and Anandkumar, A.
\newblock Fourier neural operator for parametric partial differential equations.
\newblock In \emph{International Conference on Learning Representations}, 2021.
\newblock URL \url{https://openreview.net/forum?id=c8P9NQVtmnO}.

\bibitem[Li et~al.(2024)Li, Zheng, Kovachki, Jin, Chen, Liu, Azizzadenesheli, and Anandkumar]{LiPino2024}
Li, Z., Zheng, H., Kovachki, N., Jin, D., Chen, H., Liu, B., Azizzadenesheli, K., and Anandkumar, A.
\newblock Physics-informed neural operator for learning partial differential equations.
\newblock \emph{ACM / IMS J. Data Sci.}, 1\penalty0 (3), may 2024.
\newblock \doi{10.1145/3648506}.
\newblock URL \url{https://doi.org/10.1145/3648506}.

\bibitem[Liu et~al.(2019)Liu, Oosterlee, and Bohte]{Liurisks2019}
Liu, S., Oosterlee, C.~W., and Bohte, S.~M.
\newblock Pricing options and computing implied volatilities using neural networks.
\newblock \emph{Risks}, 7\penalty0 (1), 2019.
\newblock ISSN 2227-9091.
\newblock \doi{10.3390/risks7010016}.
\newblock URL \url{https://www.mdpi.com/2227-9091/7/1/16}.

\bibitem[Ma et~al.(2024)Ma, Azizzadenesheli, and Anandkumar]{ma2024calibrated}
Ma, Z., Azizzadenesheli, K., and Anandkumar, A.
\newblock Calibrated uncertainty quantification for operator learning via conformal prediction.
\newblock \emph{arXiv preprint arXiv:2402.01960}, 2024.

\bibitem[MacKay(1992)]{Mackay1992BNN}
MacKay, D. J.~C.
\newblock A practical bayesian framework for backpropagation networks.
\newblock \emph{Neural Computation}, 4\penalty0 (3):\penalty0 448--472, 05 1992.
\newblock ISSN 0899-7667.
\newblock \doi{10.1162/neco.1992.4.3.448}.
\newblock URL \url{https://doi.org/10.1162/neco.1992.4.3.448}.

\bibitem[MacKinnon \& Johnson(1991)MacKinnon and Johnson]{taylor_truncation}
MacKinnon, R.~J. and Johnson, R.~W.
\newblock Differential-equation-based representation of truncation errors for accurate numerical simulation.
\newblock \emph{International Journal for Numerical Methods in Fluids}, 13\penalty0 (6):\penalty0 739--757, 1991.
\newblock \doi{https://doi.org/10.1002/fld.1650130606}.
\newblock URL \url{https://onlinelibrary.wiley.com/doi/abs/10.1002/fld.1650130606}.

\bibitem[Maddox et~al.(2019)Maddox, Izmailov, Garipov, Vetrov, and Wilson]{2019MaddoxSWAG}
Maddox, W.~J., Izmailov, P., Garipov, T., Vetrov, D.~P., and Wilson, A.~G.
\newblock A simple baseline for bayesian uncertainty in deep learning.
\newblock In Wallach, H., Larochelle, H., Beygelzimer, A., d\textquotesingle Alch\'{e}-Buc, F., Fox, E., and Garnett, R. (eds.), \emph{Advances in Neural Information Processing Systems}, volume~32. Curran Associates, Inc., 2019.
\newblock URL \url{https://proceedings.neurips.cc/paper_files/paper/2019/file/118921efba23fc329e6560b27861f0c2-Paper.pdf}.

\bibitem[McCabe et~al.(2023)McCabe, Harrington, Subramanian, and Brown]{mccabe2023towards}
McCabe, M., Harrington, P., Subramanian, S., and Brown, J.
\newblock Towards stability of autoregressive neural operators.
\newblock \emph{Transactions on Machine Learning Research}, 2023.
\newblock ISSN 2835-8856.
\newblock URL \url{https://openreview.net/forum?id=RFfUUtKYOG}.

\bibitem[Meyer et~al.(2022)Meyer, Hogan, Dueben, and Mason]{MeyerMLclouds2022}
Meyer, D., Hogan, R.~J., Dueben, P.~D., and Mason, S.~L.
\newblock Machine learning emulation of 3d cloud radiative effects.
\newblock \emph{Journal of Advances in Modeling Earth Systems}, 14\penalty0 (3):\penalty0 e2021MS002550, 2022.
\newblock \doi{https://doi.org/10.1029/2021MS002550}.
\newblock URL \url{https://agupubs.onlinelibrary.wiley.com/doi/abs/10.1029/2021MS002550}.
\newblock e2021MS002550 2021MS002550.

\bibitem[Meyer \& Team(2024)Meyer and Team]{Hendrick2024PlasmaBurn}
Meyer, H. and Team, S.
\newblock Plasma burn—mind the gap.
\newblock \emph{Philosophical Transactions of the Royal Society A: Mathematical, Physical and Engineering Sciences}, 382\penalty0 (2280):\penalty0 20230406, 2024.
\newblock \doi{10.1098/rsta.2023.0406}.
\newblock URL \url{https://royalsocietypublishing.org/doi/abs/10.1098/rsta.2023.0406}.

\bibitem[Mocz et~al.(2014)Mocz, Vogelsberger, and Hernquist]{Mocz_MHD_2014}
Mocz, P., Vogelsberger, M., and Hernquist, L.
\newblock {A constrained transport scheme for MHD on unstructured static and moving meshes}.
\newblock \emph{Monthly Notices of the Royal Astronomical Society}, 442\penalty0 (1):\penalty0 43--55, 06 2014.
\newblock ISSN 0035-8711.
\newblock \doi{10.1093/mnras/stu865}.
\newblock URL \url{https://doi.org/10.1093/mnras/stu865}.

\bibitem[Musekamp et~al.(2025)Musekamp, Kalimuthu, Holzm{\"u}ller, Takamoto, and Niepert]{musekamp2024activelearningneuralpde}
Musekamp, D., Kalimuthu, M., Holzm{\"u}ller, D., Takamoto, M., and Niepert, M.
\newblock Active learning for neural {PDE} solvers.
\newblock In \emph{The Thirteenth International Conference on Learning Representations}, 2025.
\newblock URL \url{https://openreview.net/forum?id=x4ZmQaumRg}.

\bibitem[Orszag \& Tang(1979)Orszag and Tang]{Orszag_Tang_1979}
Orszag, S.~A. and Tang, C.-M.
\newblock Small-scale structure of two-dimensional magnetohydrodynamic turbulence.
\newblock \emph{Journal of Fluid Mechanics}, 90\penalty0 (1):\penalty0 129–143, 1979.
\newblock \doi{10.1017/S002211207900210X}.

\bibitem[Papadopoulos(2008)]{papadopoulos2008inductive}
Papadopoulos, H.
\newblock Inductive conformal prediction: Theory and application to neural networks.
\newblock In Fritzsche, P. (ed.), \emph{Tools in Artificial Intelligence}, chapter~18. IntechOpen, Rijeka, 2008.
\newblock \doi{10.5772/6078}.

\bibitem[Paszke et~al.(2019)Paszke, Gross, Massa, Lerer, Bradbury, Chanan, Killeen, Lin, Gimelshein, Antiga, Desmaison, Köpf, Yang, DeVito, Raison, Tejani, Chilamkurthy, Steiner, Fang, Bai, and Chintala]{paszke2019pytorchimperativestylehighperformance}
Paszke, A., Gross, S., Massa, F., Lerer, A., Bradbury, J., Chanan, G., Killeen, T., Lin, Z., Gimelshein, N., Antiga, L., Desmaison, A., Köpf, A., Yang, E., DeVito, Z., Raison, M., Tejani, A., Chilamkurthy, S., Steiner, B., Fang, L., Bai, J., and Chintala, S.
\newblock Pytorch: An imperative style, high-performance deep learning library, 2019.
\newblock URL \url{https://arxiv.org/abs/1912.01703}.

\bibitem[Pfaff et~al.(2021)Pfaff, Fortunato, Sanchez-Gonzalez, and Battaglia]{pfaff2021learning}
Pfaff, T., Fortunato, M., Sanchez-Gonzalez, A., and Battaglia, P.
\newblock Learning mesh-based simulation with graph networks.
\newblock In \emph{International Conference on Learning Representations}, 2021.
\newblock URL \url{https://openreview.net/forum?id=roNqYL0_XP}.

\bibitem[Pinder(2018)]{Pinder2018}
Pinder, G.~F.
\newblock \emph{Numerical methods for solving partial differential equations : a comprehensive introduction for scientists and engineers}.
\newblock John Wiley and Sons, Inc. : Wiley, Hoboken, NJ, 2018.

\bibitem[Poels et~al.(2023)Poels, Derks, Westerhof, Minartz, Wiesen, and Menkovski]{Poels_2023}
Poels, Y., Derks, G., Westerhof, E., Minartz, K., Wiesen, S., and Menkovski, V.
\newblock Fast dynamic 1d simulation of divertor plasmas with neural pde surrogates.
\newblock \emph{Nuclear Fusion}, 63\penalty0 (12):\penalty0 126012, sep 2023.
\newblock \doi{10.1088/1741-4326/acf70d}.
\newblock URL \url{https://dx.doi.org/10.1088/1741-4326/acf70d}.

\bibitem[Psaros et~al.(2023)Psaros, Meng, Zou, Guo, and Karniadakis]{Psaros2023}
Psaros, A.~F., Meng, X., Zou, Z., Guo, L., and Karniadakis, G.~E.
\newblock Uncertainty quantification in scientific machine learning: Methods, metrics, and comparisons.
\newblock \emph{Journal of Computational Physics}, 477:\penalty0 111902, 2023.
\newblock ISSN 0021-9991.
\newblock \doi{https://doi.org/10.1016/j.jcp.2022.111902}.
\newblock URL \url{https://www.sciencedirect.com/science/article/pii/S0021999122009652}.

\bibitem[Raissi et~al.(2019)Raissi, Perdikaris, and Karniadakis]{Raissi2019PINNs}
Raissi, M., Perdikaris, P., and Karniadakis, G.
\newblock Physics-informed neural networks: A deep learning framework for solving forward and inverse problems involving nonlinear partial differential equations.
\newblock \emph{Journal of Computational Physics}, 378:\penalty0 686--707, 2019.
\newblock ISSN 0021-9991.
\newblock \doi{https://doi.org/10.1016/j.jcp.2018.10.045}.
\newblock URL \url{https://www.sciencedirect.com/science/article/pii/S0021999118307125}.

\bibitem[Saad \& Schultz(1986)Saad and Schultz]{Youcef_GMRES_1986}
Saad, Y. and Schultz, M.~H.
\newblock Gmres: A generalized minimal residual algorithm for solving nonsymmetric linear systems.
\newblock \emph{SIAM Journal on Scientific and Statistical Computing}, 7\penalty0 (3):\penalty0 856--869, 1986.
\newblock \doi{10.1137/0907058}.
\newblock URL \url{https://doi.org/10.1137/0907058}.

\bibitem[Shafer \& Vovk(2008)Shafer and Vovk]{shafer2008tutorial}
Shafer, G. and Vovk, V.
\newblock A tutorial on conformal prediction.
\newblock \emph{Journal of Machine Learning Research}, 9\penalty0 (3), 2008.

\bibitem[Somov(2012)]{Somov2012}
Somov, B.~V.
\newblock \emph{Plasma Equilibrium in Magnetic Field}, pp.\  403--427.
\newblock Springer New York, New York, NY, 2012.
\newblock ISBN 978-1-4614-4283-7.
\newblock \doi{10.1007/978-1-4614-4283-7_19}.
\newblock URL \url{https://doi.org/10.1007/978-1-4614-4283-7_19}.

\bibitem[Stankeviciute et~al.(2021)Stankeviciute, M.~Alaa, and van~der Schaar]{conformaltimeserires}
Stankeviciute, K., M.~Alaa, A., and van~der Schaar, M.
\newblock Conformal time-series forecasting.
\newblock In Ranzato, M., Beygelzimer, A., Dauphin, Y., Liang, P., and Vaughan, J.~W. (eds.), \emph{Advances in Neural Information Processing Systems}, volume~34, pp.\  6216--6228. Curran Associates, Inc., 2021.
\newblock URL \url{https://proceedings.neurips.cc/paper_files/paper/2021/file/312f1ba2a72318edaaa995a67835fad5-Paper.pdf}.

\bibitem[Strang(1986)]{Gilbert_Topelitz_1986}
Strang, G.
\newblock A proposal for toeplitz matrix calculations.
\newblock \emph{Studies in Applied Mathematics}, 74\penalty0 (2):\penalty0 171--176, 1986.
\newblock \doi{https://doi.org/10.1002/sapm1986742171}.
\newblock URL \url{https://onlinelibrary.wiley.com/doi/abs/10.1002/sapm1986742171}.

\bibitem[Sun(2022)]{sun2022conformal}
Sun, S.
\newblock Conformal methods for quantifying uncertainty in spatiotemporal data: A survey, 2022.

\bibitem[Teng et~al.(2023)Teng, Wen, Zhang, Bengio, Gao, and Yuan]{teng2023predictive}
Teng, J., Wen, C., Zhang, D., Bengio, Y., Gao, Y., and Yuan, Y.
\newblock Predictive inference with feature conformal prediction.
\newblock In \emph{The Eleventh International Conference on Learning Representations}, 2023.
\newblock URL \url{https://openreview.net/forum?id=0uRm1YmFTu}.

\bibitem[Thöni et~al.(2025)Thöni, Robinson, Bachrach, Huck, and Kachman]{thöni2025modellingchemicalreactionnetworks}
Thöni, A. C.~M., Robinson, W.~E., Bachrach, Y., Huck, W. T.~S., and Kachman, T.
\newblock Modelling chemical reaction networks using neural ordinary differential equations, 2025.
\newblock URL \url{https://arxiv.org/abs/2502.19397}.

\bibitem[Tolsma \& Barton(1998)Tolsma and Barton]{TOLSMA1998475}
Tolsma, J.~E. and Barton, P.~I.
\newblock On computational differentiation.
\newblock \emph{Computers and Chemical Engineering}, 22\penalty0 (4):\penalty0 475--490, 1998.
\newblock ISSN 0098-1354.
\newblock \doi{https://doi.org/10.1016/S0098-1354(97)00264-0}.
\newblock URL \url{https://www.sciencedirect.com/science/article/pii/S0098135497002640}.

\bibitem[Vovk et~al.(2005)Vovk, Gammerman, and Shafer]{vovk2005algorithmic}
Vovk, V., Gammerman, A., and Shafer, G.
\newblock \emph{Algorithmic Learning in a Random World}.
\newblock Springer, 2005.

\bibitem[Wang et~al.(2019)Wang, Fan, Luo, et~al.]{wang2019massive}
Wang, S., Fan, K., Luo, N., et~al.
\newblock Massive computational acceleration by using neural networks to emulate mechanism-based biological models.
\newblock \emph{Nature Communications}, 10:\penalty0 4354, 2019.
\newblock \doi{10.1038/s41467-019-12342-y}.
\newblock URL \url{https://doi.org/10.1038/s41467-019-12342-y}.

\bibitem[Xu \& Xie(2021)Xu and Xie]{cp_dynamic_timeseries}
Xu, C. and Xie, Y.
\newblock Conformal prediction interval for dynamic time-series.
\newblock In Meila, M. and Zhang, T. (eds.), \emph{Proceedings of the 38th International Conference on Machine Learning}, volume 139 of \emph{Proceedings of Machine Learning Research}, pp.\  11559--11569. PMLR, 18--24 Jul 2021.
\newblock URL \url{https://proceedings.mlr.press/v139/xu21h.html}.

\bibitem[Xu et~al.(2023)Xu, Xie, Vazquez, Yao, and Qiu]{CP_Wildfire}
Xu, C., Xie, Y., Vazquez, D. A.~Z., Yao, R., and Qiu, F.
\newblock Spatio-temporal wildfire prediction using multi-modal data.
\newblock \emph{IEEE Journal on Selected Areas in Information Theory}, 4:\penalty0 302--313, 2023.
\newblock \doi{10.1109/JSAIT.2023.3276054}.

\bibitem[Zhu et~al.(2019)Zhu, Zabaras, Koutsourelakis, and Perdikaris]{ZhuPCDLUQ2019}
Zhu, Y., Zabaras, N., Koutsourelakis, P.-S., and Perdikaris, P.
\newblock Physics-constrained deep learning for high-dimensional surrogate modeling and uncertainty quantification without labeled data.
\newblock \emph{Journal of Computational Physics}, 394:\penalty0 56--81, 2019.
\newblock ISSN 0021-9991.
\newblock \doi{https://doi.org/10.1016/j.jcp.2019.05.024}.
\newblock URL \url{https://www.sciencedirect.com/science/article/pii/S0021999119303559}.

\bibitem[Zou et~al.(2024)Zou, Meng, Psaros, and Karniadakis]{zou2022neuraluq}
Zou, Z., Meng, X., Psaros, A.~F., and Karniadakis, G.~E.
\newblock Neuraluq: A comprehensive library for uncertainty quantification in neural differential equations and operators.
\newblock \emph{SIAM Review}, 66\penalty0 (1):\penalty0 161--190, 2024.
\newblock \doi{10.1137/22M1518189}.
\newblock URL \url{https://doi.org/10.1137/22M1518189}.

\end{thebibliography}
\bibliographystyle{icml2025}

\appendix
\clearpage
\section{Theorem: Data-Free CP}
\label{theorems}

\paragraph{Preliminaries:}

Let $D: \mathbb{R}^m \rightarrow \mathbb{R}^m$ be a physics residual operator mapping a function to its PDE residual value, where: $\{X_i\}_{i=1}^n$ is the calibration set, $\hat{f}$ is the model, $\hat{q}^\alpha$ is estimated as the $\lceil(n+1)(1-\alpha)\rceil/n$ -quantile of $\{|D(\hat{f}(X_i))|\}_{i=1}^n$ \\

\begin{theorem}
    If the residuals $\{D(\hat{f}(X_i))\}_{i=1}^{n+1}$ are exchangeable random variables, then for any significance level $\alpha \in (0,1)$ and any new input $X_{n+1}$ we have the following coverage guarantee:
        $$\mathbb{P}(|\mathcal{D}(\hat{f}(X_{n+1}))| \in C_\alpha) \geq 1 - \alpha \, ; \; \;  C_\alpha = [-\hat{q}_\alpha, \hat{q}_\alpha]$$ %\\
\end{theorem}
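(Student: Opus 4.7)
The plan is to reduce the statement to the standard inductive conformal-prediction coverage argument, using $s_i := |D(\hat{f}(X_i))|$ in place of the usual residual-based nonconformity score. Since the entire novelty here is the choice of score function (and the fact that it is independent of any target $Y$), no modification of the coverage proof itself is required; what must be checked is that the exchangeability hypothesis carries through to the actual scores that enter the quantile.

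First I would argue that the scores $\{s_i\}_{i=1}^{n+1}$ are exchangeable. By hypothesis, the random vector $(D(\hat{f}(X_1)),\dots,D(\hat{f}(X_{n+1})))$ is exchangeable, and $x\mapsto |x|$ is a fixed measurable map applied coordinatewise. Exchangeability is preserved under such componentwise deterministic transformations, so $(s_1,\dots,s_{n+1})$ is exchangeable as well. Next I would invoke the standard quantile lemma (Vovk et al., 2005): for any exchangeable real-valued $\{s_i\}_{i=1}^{n+1}$, the rank of $s_{n+1}$ among the full collection is sub-uniform on $\{1,\dots,n+1\}$, which yields
\begin{equation*}
\mathbb{P}\bigl(s_{n+1}\le s_{(\lceil (n+1)(1-\alpha)\rceil)}\bigr) \ge 1-\alpha ,
\end{equation*}
where $s_{(k)}$ denotes the $k$-th order statistic of the calibration scores $\{s_i\}_{i=1}^{n}$. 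By the definition of $\hat{q}^\alpha$ in \cref{eq:qhat} this is precisely $\mathbb{P}(s_{n+1}\le \hat{q}^\alpha)\ge 1-\alpha$.

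The final step is to translate this scalar bound into the set statement. Because $s_{n+1}=|D(\hat{f}(X_{n+1}))|\ge 0$ and $\hat{q}^\alpha \ge 0$, the event $\{s_{n+1}\le \hat{q}^\alpha\}$ is exactly $\{|D(\hat{f}(X_{n+1}))|\in [0,\hat{q}^\alpha]\}$, which is contained in $\{|D(\hat{f}(X_{n+1}))|\in [-\hat{q}^\alpha,\hat{q}^\alpha]\} = \{|D(\hat{f}(X_{n+1}))|\in C_\alpha\}$, giving the claimed coverage. Two edge points deserve a line: (i) when $\lceil (n+1)(1-\alpha)\rceil > n$ (i.e.\ $\alpha < 1/(n+1)$) one sets $\hat{q}^\alpha = +\infty$ so the statement holds trivially, and (ii) ties in the score distribution only tighten the rank inequality, so the bound is unaffected.

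The main conceptual subtlety, rather than a technical obstacle, is that exchangeability is assumed at the level of the \emph{residuals} $D(\hat{f}(X_i))$ rather than at the level of input--output pairs $(X_i,Y_i)$ as in classical CP. This is what makes the guarantee genuinely data-free: we never need a true target because the residual operator $D$ plays the role of an implicit label ``$D(u)\equiv 0$'' for any exact solution $u$. The hypothesis is inherited, for instance, whenever the calibration and test initial conditions are drawn i.i.d.\ (or exchangeably) from the same distribution and $\hat{f}$ and $D$ are fixed; verifying this exchangeability in the specific experimental setups is where care is needed, but it falls outside the proof of the theorem itself.
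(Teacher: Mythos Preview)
Your argument is correct and follows essentially the same route as the paper: define the absolute residual scores, use exchangeability to get (sub-)uniformity of the rank of the test score, and convert the rank inequality into the quantile bound $\mathbb{P}(s_{n+1}\le \hat{q}^\alpha)\ge 1-\alpha$. You add a few niceties the paper omits (preservation of exchangeability under the coordinatewise map $|\cdot|$, the $\alpha<1/(n+1)$ edge case, and tie handling), but the core proof is identical.
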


\begin{proof}

Let $R_i = |D(\hat{f}(X_i))|$ for $i=1,\ldots,n+1$. % and $R_{n+1} = |D(\hat{f}(X_{n+1}))|$
% If exchangeability holds for all $\{R_i\}_{i=1}^{n+1}$, 
We have, by assumption, $(R_1,\ldots,R_n,R_{n+1})$ is an exchangeable sequence. Define the rank $\pi$ of $R_{n+1}$ w.r.t. all other residuals:
   $$\pi(R_{n+1}) = |\{i=1,\ldots,n+1: R_i \leq R_{n+1}\}|$$
By exchangeability, the rank $\pi(R_{n+1})$ is uniformly distributed over $\{1,\ldots,n+1\}$. Therefore,
   $$P(\pi(R_{n+1}) \leq \lceil(n+1)(1-\alpha)\rceil) = \frac{\lceil(n+1)(1-\alpha)\rceil}{n} \geq 1-\alpha.$$
By construction of $\hat{q}^\alpha$ we have that,
   $$\{\pi(R_{n+1}) \leq \lceil(n+1)(1-\alpha)\rceil\} \subseteq \{R_{n+1} \leq \hat{q}^\alpha\}.$$
Putting this together,
   $$P(|D(\hat{f}(X_{n+1}))| \leq \hat{q}^\alpha) = P(R_{n+1} \leq \hat{q}^\alpha) \geq 1-\alpha,$$
which completes the proof.
\end{proof}

\newpage
\section{PRE: Score Function and Prediction Sets}
\label{appendix:pre_formulation}
For a general nonconformity score $S$, the prediction set for a new input $X_{n+1}$ is typically defined as:
$$\mathbb{C}^\alpha(X_{n+1}) = \{y : S(X_{n+1}, y) \leq \hat{q}^\alpha\},$$
where $\hat{q}^\alpha$ is the $(1-\alpha)$-quantile of the nonconformity scores on the calibration set.

For AER and STD, the nonconformity scores depend on both the input $X$ and the output (target) $Y$:
\begin{align}
S_{AER}(X, Y) &= |\hat{f}(X) - Y|, \\
S_{STD}(X, Y) &= \frac{|\hat{f}_\mu(X) - Y|}{\hat{f}_\sigma(X)}.
\end{align}

The resulting prediction sets are:
\begin{align}
\mathbb{C}^\alpha_{AER}(X_{n+1}) &= [\hat{f}(X_{n+1}) - \hat{q}^\alpha, \hat{f}(X_{n+1}) + \hat{q}^\alpha], \\
\mathbb{C}^\alpha_{STD}(X_{n+1}) &= [\hat{f}_\mu(X_{n+1}) - \hat{q}^\alpha \hat{f}_\sigma(X_{n+1}), \nonumber \\
&\quad \hat{f}_\mu(X_{n+1}) + \hat{q}^\alpha \hat{f}_\sigma(X_{n+1})].
\end{align}
These prediction sets clearly depend on the input $X_{n+1}$.

For PRE, the nonconformity score depends only on the model output and not on the target:
$$S_{PRE}(\hat{f}(X)) = |D(\hat{f}(X))-0|,$$
where $D$ is the PDE residual operator. The key difference is that the true output $Y$ for PRE, irrespective of the PDE is always 0 and does not depend on the input $X$. PRE is a measure of how well the model output satisfies the physics rather than how it fits certain data. Hence, we can formulate a nonconformity score that is data-free and eventually leads to input-independent prediction sets as given below.

For PRE, we can reframe the prediction set definition:
$$\mathbb{C}^\alpha_{PRE} = \{\hat{f}(X) : |D(\hat{f}(X))| \leq \hat{q}^\alpha\}.$$
This set is not defined in terms of the true $Y$ values but in terms of the allowable model outputs $\hat{f}(X)$ that satisfy the PDE residual constraint. Thus, the prediction set can be expressed as:
$$\mathbb{C}^\alpha_{PRE} = [-\hat{q}^\alpha, \hat{q}^\alpha].$$
This formulation is independent of the input $X$, as it only depends on the quantile $\hat{q}^\alpha$ derived from the calibration set as given in \cref{eq:qhat}.

To validate predictions using PRE:
\begin{enumerate}
    \item For a new input $X_{n+1}$, compute $\hat{f}(X_{n+1})$. 
    \item Calculate the residual: $r = |D(\hat{f}(X_{n+1}))|$.
    \item Check if $r \in [-\hat{q}^\alpha, \hat{q}^\alpha]$ for a given $\alpha$.
\end{enumerate}
If the condition in step 3 is satisfied, the error bounds dictated by $[-\hat{q}^\alpha, \hat{q}^\alpha]$ is considered valid according to the CP framework, regardless of the specific input $X_{n+1}$.

\section{Algorithmic Procedure}
\label{appendix: procedure}

\begin{enumerate}
\item \textbf{Set up the Neural PDE Solver}
   \begin{enumerate}
   \item Define the PDE system of interest with its governing equations in a numerical solver
   \item Train a neural network (e.g., Fourier Neural Operator) to approximate solutions to the PDE
   \item Ensure the model can make predictions on new initial conditions / PDE coefficients
   \end{enumerate}

\item \textbf{Define the Physics Residual Error (PRE)}
   \begin{enumerate}
   \item For a given PDE case with operator D, the physics residual is defined as $|D(\hat{f}(X))|$, where $\hat{f}$ is the neural PDE prediction
   \item Create a differential operator using finite difference stencils implemented as convolutional kernels. This operator evaluates how well the predicted solution satisfies the underlying physics
   \end{enumerate}

\item \textbf{Generate Calibration Set}
   \begin{enumerate}
   \item Sample a set of initial conditions ${X_1, X_2, ..., X_n}$ from the domain of interest
   \item Run the neural PDE solver on these initial conditions to get predictions ${\hat{f}(X_1), \hat{f}(X_2), ..., \hat{f}(X_n)}$
   \item Compute the PRE nonconformity scores ${|D(\hat{f}(X_1))|, |D(\hat{f}(X_2))|, ..., |D(\hat{f}(X_n))|}$ for each prediction
   \end{enumerate}

\item \textbf{Calibration Process}
   \begin{enumerate}
   \item For a desired confidence level 1-$\alpha$ (e.g., $90\%$ for $\alpha=0.1$), compute the empirical quantile:
   \item $\hat{q}^\alpha = \lceil(n+1)(1-\alpha)\rceil/n$ quantile of ${|D(\hat{f}(X_1))|, |D(\hat{f}(X_2))|, ...., |D(\hat{f}(X_n))|}$
   \item This quantile represents the threshold for the physics residual that will provide the desired coverage
   \end{enumerate}

\item \textbf{Apply to New Predictions}
   \begin{enumerate}
   \item For a new initial condition $X_{n+1}$, obtain the neural PDE prediction $\hat{f}(X_{n+1})$
   \item Compute the physics residual $|D(\hat{f}(X_{n+1}))|$
   \item The prediction set is defined as $\mathbb{C}^\alpha = [-\hat{q}^\alpha, \hat{q}^\alpha]$
   \item If $|D(\hat{f}(X_{n+1}))|$ falls within $\mathbb{C}^\alpha$, the prediction is considered valid at the $1-\alpha$ confidence level
   \end{enumerate}

\item \textbf{Interpretation of Results}
   \begin{enumerate}
   \item For marginal conformal prediction: Apply steps 1-5 cell-wise across the spatial-temporal domain to get localized error bounds
   \item For joint conformal prediction: Calculate the supremum of the normalized residuals across the entire domain to obtain global error bounds
   \item The width of the error bounds indicates the model's physical consistency - tighter bounds suggest better alignment with the physics
   \end{enumerate}

\item \textbf{Decision Framework \textit{(Optional)}}
   \begin{enumerate}
   \item Accept predictions where the physics residual falls within the calibrated bounds
   \item Reject and flag predictions where the residual exceeds the bounds, potentially routing these cases to traditional numerical solvers
   \item This approach guarantees that, with probability at least $1-\alpha$, the physics residual of new predictions will fall within the computed bounds, ensuring that the model maintains physical consistency while providing faster solutions than traditional numerical methods.
   \end{enumerate}
\end{enumerate}

\clearpage

\section{ConvOperator: Convolutional Kernels for Gradient Estimation}
\label{appendix:ConvOperator}

Within the code base for this paper, we release a utility function that constructs convolutional layers for gradient estimation based on your choice of order of differentiation and Taylor approximation. This allows for the PRE score function to be easily expressed in a single line of code \footnote{The code and associated utility functions can be found in: \url{https://github.com/gitvicky/CP-PRE}}

This section provides an overview of the code implementation and algorithm for estimating the PRE using Convolution operations. We'll use an arbitrary PDE example with a temporal gradient $\pdv{u}{t}$ and a Laplacian $\Big(\pdv[2]{}{x} +\pdv[2]{}{y} \Big)$ to illustrate the process. 

\begin{equation}
\frac{\partial u}{\partial t} - \alpha\left(\frac{\partial^2 u}{\partial x^2} + \frac{\partial^2 u}{\partial y^2}\right) + \beta u = 0, 
\label{eqn:arb_pde}
\end{equation}

where $u$ is the field variable, $t$ is time, $x$ and $y$ are spatial coordinates, and $\alpha$ and $\beta$ are constants. To estimate the PDE residual given by \cref{eqn:arb_pde}, we need to estimate the associated spatio-temporal gradients. 

First, we use the \texttt{ConvOperator} class from \texttt{Utils/ConvOps\_2d.py} to set up the convolutional layer with kernels \footnote{Convolution functions can be set as cross-correlations as it is default in the PyTorch framework, or they could be set up as convolutions by obtaining the complex conjugate in the frequency space.\citep{conv_correlation}} taken from the appropriate finite difference stencils:

\begin{lstlisting}[language=Python]

    from ConvOps_2d import ConvOperator
    
    # Define each operator within the PDE 
    D_t = ConvOperator(domain='t', order=1) #time-derivative
    D_xx_yy = ConvOperator(domain=('x','y'), order=2) #Laplacian
    D_identity = ConvOperator() #Identity Operator
\end{lstlisting}

The \texttt{ConvOperator} class is used to set up a gradient operation. It takes in \texttt{variable(s) of differentiation} and \texttt{order of differentiation} as arguments to design the appropriate forward difference stencil and then sets up a convolutional layer with the stencil as the kernel. Under the hood, the class will take care of devising a 3D convolutional layer, and setup the kernel so that it acts on a spatio-temporal tensor of dimensionality: \textit{[BS, Nt, Nx, Ny]} which expands to batch size, temporal discretisation and the spatial discretisation in $x$ and $y$.

\begin{lstlisting}[language=Python]
    alpha, beta = 1.0, 0.5  # Example coefficients
    D = ConvOperator() #Additive Kernels
    D.kernel = D_t.kernel - alpha * D_xx_yy.kernel - beta * D_identity.kernel
\end{lstlisting}

The convolutional kernels are additive i.e. in order to estimate the residual in one convolutional operation, they could be added together to form a composite kernel that characterises the entire PDE residual. 

Once having set up the kernels, PRE estimation is as simple as passing the composite class instance $D$ the predictions from the neural PDE surroga te (ensuring that the output is in the same order as the kernel outlined above). 

\begin{lstlisting}[language=Python]
    y_pred = model(X)
    PRE = D(y_pred)
\end{lstlisting}

Only operating on the outputs, this method of PRE estimation is memory efficient, computationally cheap and with the \texttt{ConvOperator} evaluating the PDE residual can be done in a single line of code. 

\newpage
\subsection{Impact of Discretisation}
\label{appendix:discretisation}
As demonstrated in \citep{bartolucci2023representation}, the discretisation of the inputs and hence model outputs plays an important role in the accuracy of the neural-PDE solvers. Though the neural operators are constructed for discretisation-invariant behaviour due to the band-limited nature of the functions, they often exhibit discretisation-convergent behaviour as opposed to discretisation-invariance. This is of particular importance in the temporal dimensions as these neural-PDE models utilise a discrete, autoregressive based time-stepping, baked into the model within its training regime \citep{mccabe2023towards}. The lack of control in teh temporal discretisation $(dt)$, leads to higher numerical errors within the the PRE estimates. In fig. \ref{fig:discretised_convolution}, we visualise the evaluation of finite difference in 2D+time as a 3D convolution. The finite difference stencil i.e. the convolutional kernel has a unit discretisation of $dx$, $dy$ and $dt$ associated with the problem and is applied over the signal i.e. the output from the neural-PDE $u$ spanning the domain $x,y,t$, where $x \in [0, X], \; y \in [0, Y], \; t \in [0, T] $. Though the discretisation deployed within CP-PRE stems from the neural PDE solver, it consistently delivers guaranteed coverage regardless of resolution. Even with coarser discretisation, CP-PRE remains valuable as it allows for: statistical identification of poorer fit regions (marginal formulation), highlights physically inconsistent predictions (joint formulation) and enables relative assessment of physical inconsistencies across predictions. While residuals may be inflated with coarser discretisation, the corresponding bounds reflect this inflation, preserving the relative information about physical inconsistency across a series of predictions.

\begin{figure}[h!]
    \centering
    \includegraphics[width=0.8\columnwidth]{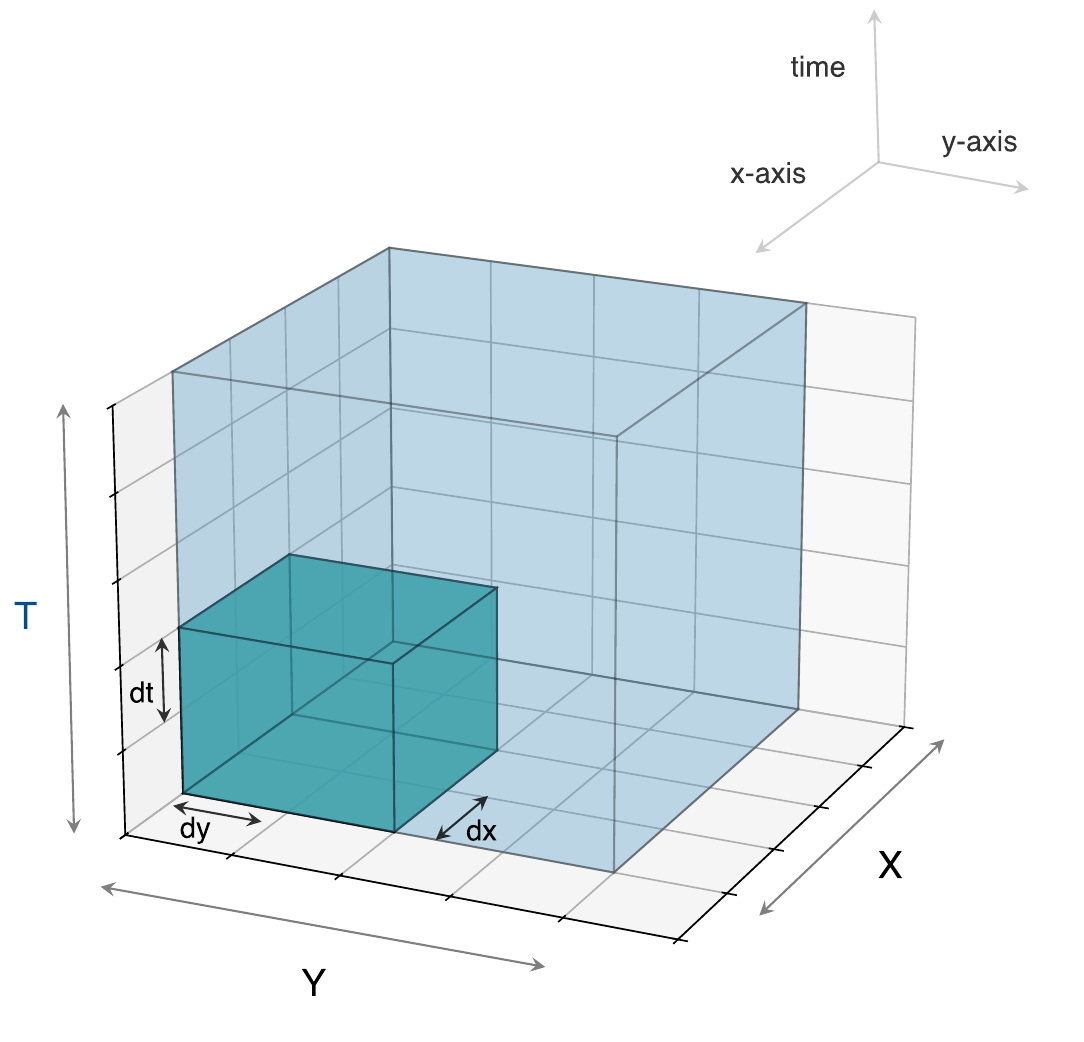}
    \caption{PRE estimation using the 3D convolutions with finite difference stencils as convolutional kernels being applied over the neural-PDE predictions as the signals. }
    \label{fig:discretised_convolution}
\end{figure}

\begin{figure}
    \centering
    \subfigure[CP-PRE over coarser discretisation]{
        \includegraphics[width=0.7\columnwidth]{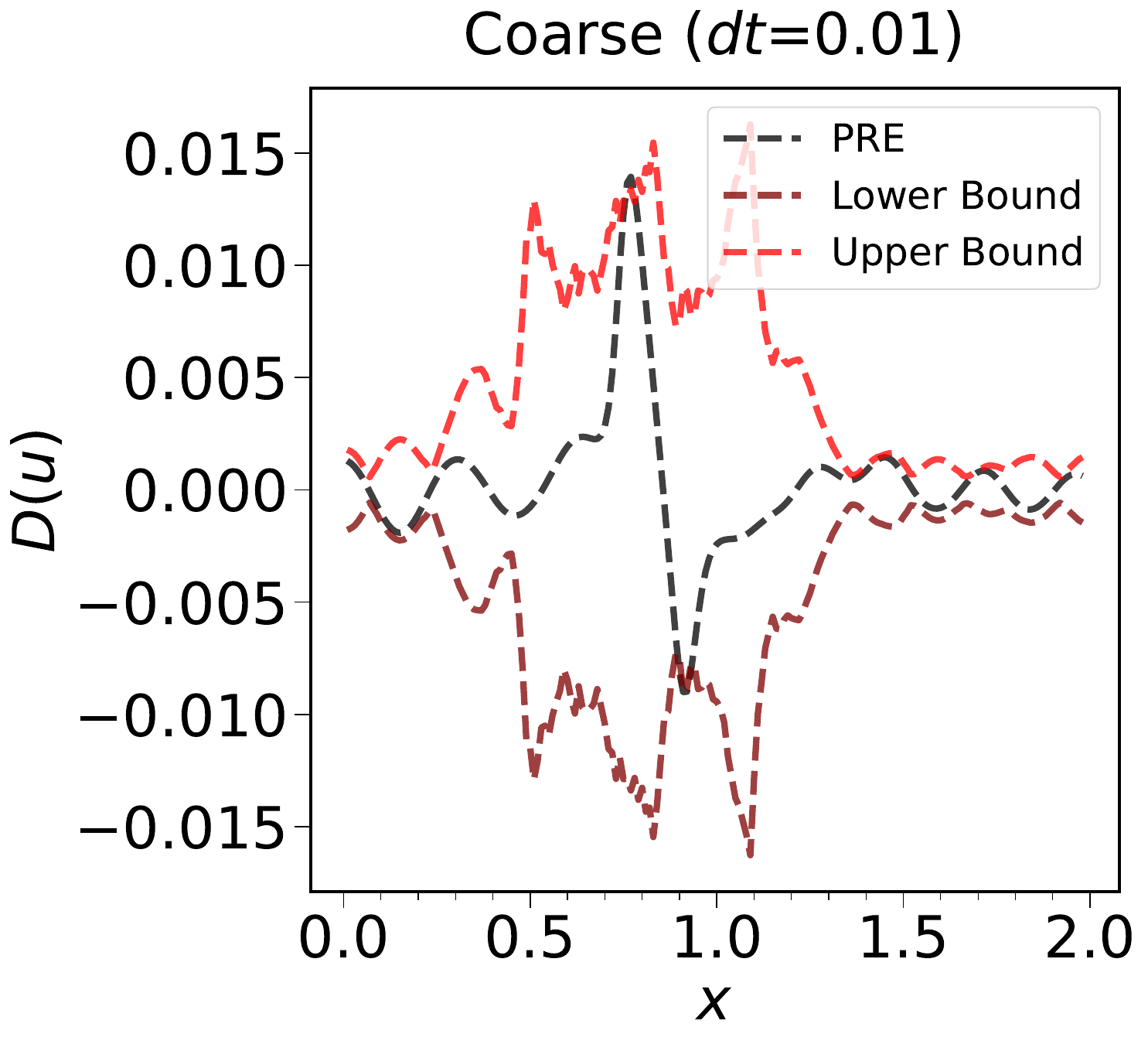}
        \label{fig:coarse}
    }
    \subfigure[CP-PRE over finer discretisation]{
        \includegraphics[width=0.7\columnwidth]{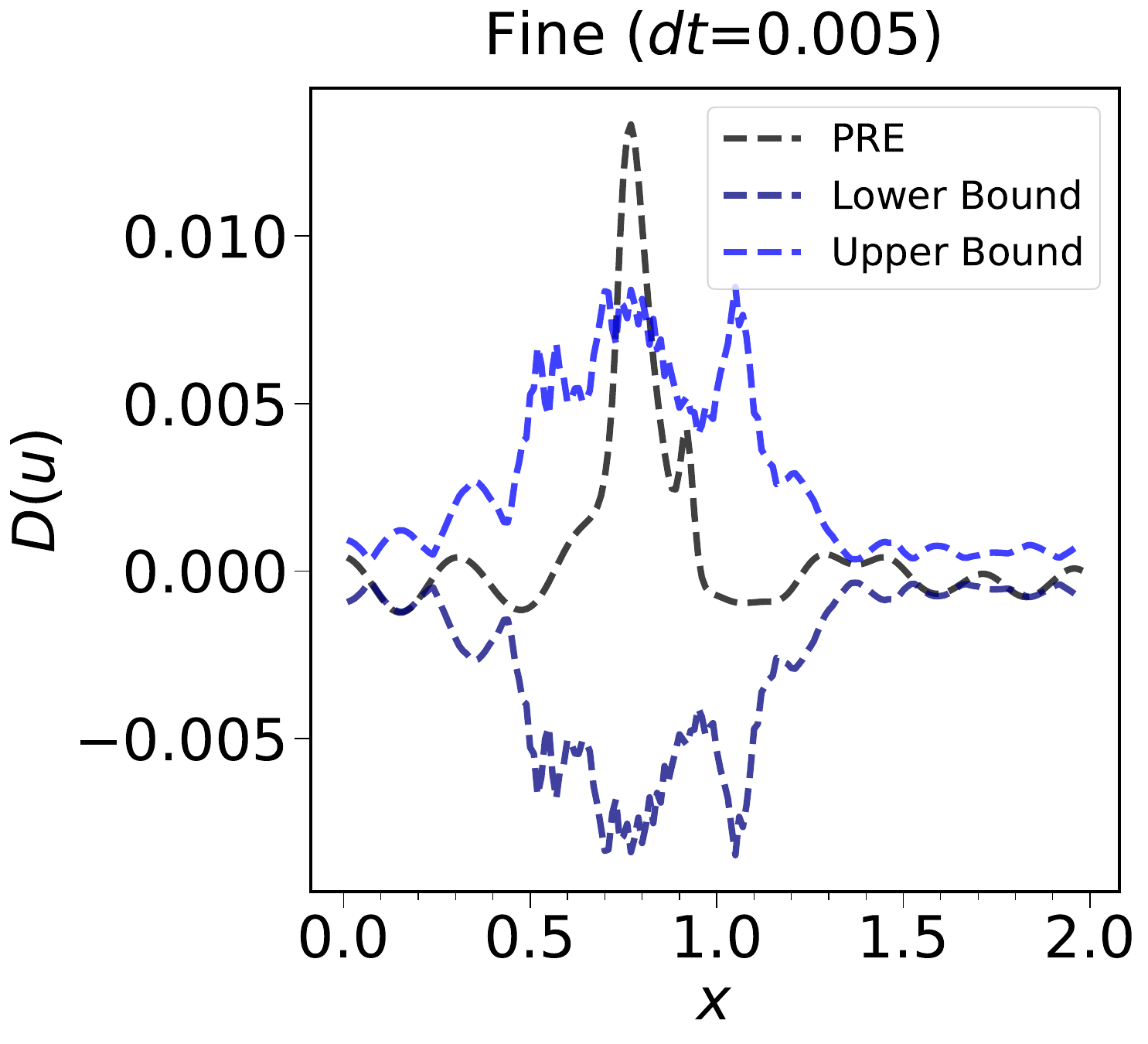}
        \label{fig:fine}
    }
    
    \subfigure[Guaranteed coverage irrespective of discretisation error.]{
        \includegraphics[width=0.6\columnwidth]{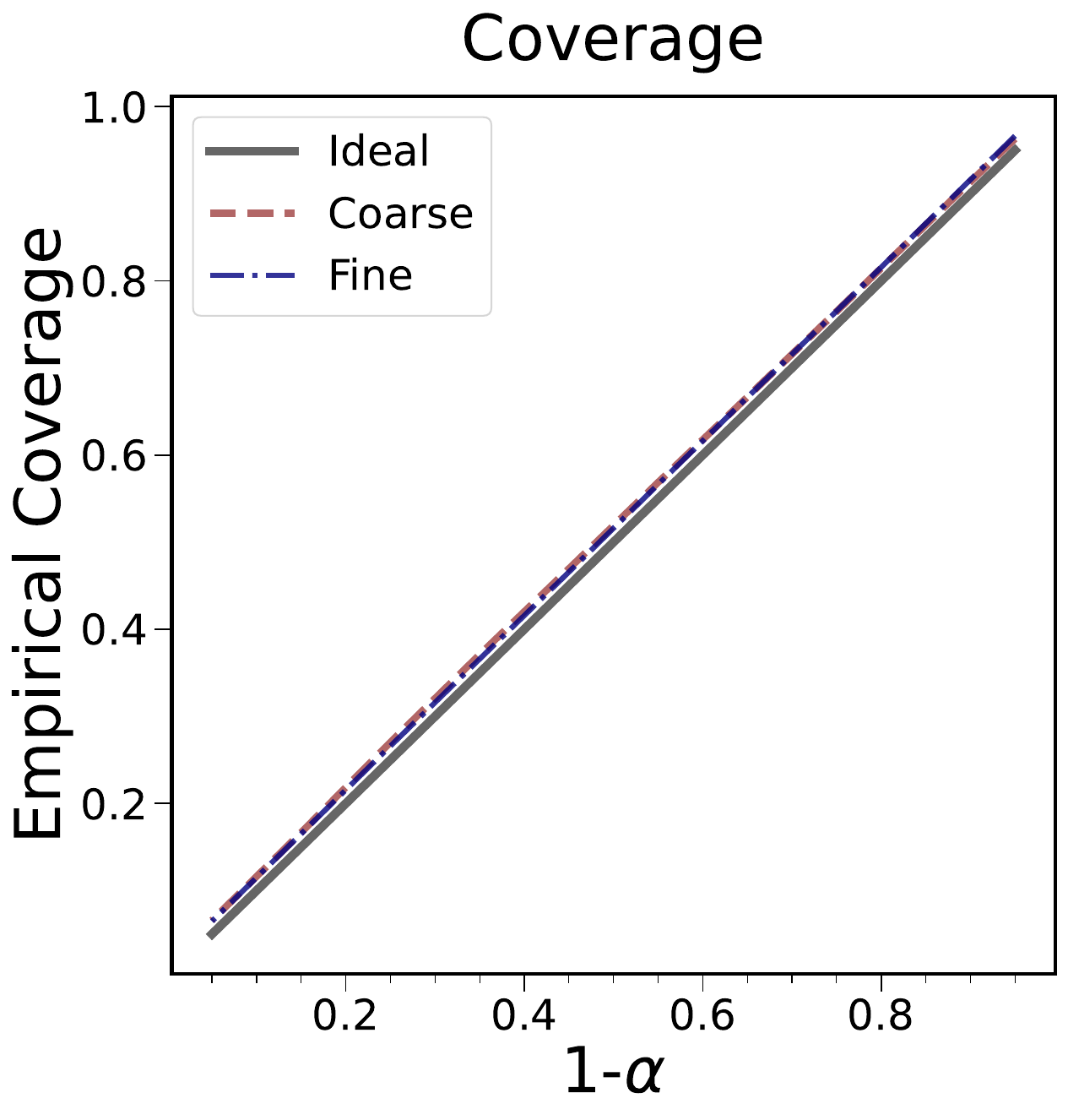}
        \label{fig:disc_coverage}
    }
    \caption{CP-PRE provides guaranteed coverage irrespective of the discretisation associated with the model outputs., however, the width of the obtained coverage bounds indicates the discretisation error associated with the gradient estimation. Coverage taken for $\alpha=0.1 \sim 90\%$ coverage.}
    \label{fig:disc}
\end{figure}

\clearpage
\section{Initial and Boundary Conditions}
\label{appendix: boundary_conditions}
As mentioned in \cref{sec:fd_ck}, the focus of our experiments has been in quantifying the misalignment of the model with the PDE in the domain of the problem. A well-defined PDE is characterised by the PDE on the domain, the initial condition across the domain at $t=0$ and the boundary conditions, reflecting the physics at the boundary. Within a neural-PDE setting, the initial condition does not need to be enforced or measured for as the neural-PDE is set up as an initial-value problem, taking in the initial state to autoregressively evolve the later timesteps and hence does not come under the purview of the neural-PDE's outputs. The boundary conditions, whether Dirichlet, Neumann or periodic, follows a residual structure as outlined in \cref{eqn:bc}, allowing us to use it as a PRE-like nonconformity score for performing 
conformal prediction. In all the problems we have under consideration, the PDEs are modelled under periodic boundary conditions: 
\begin{equation}
    \frac{\partial u }{\partial X} = 0 ; \;  X \in \partial \Omega 
    \label{eq: periodic_bc}
\end{equation}

By deploying the eqn \ref{eq: periodic_bc} as the PRE across the boundary, we can obtain error bars over the boundary conditions as well. Within fig. \ref{fig:NS_BC}, we demonstrate the error bars obtained by using the boundary conditions as the PRE nonconformity scores for the Navier-Stokes equations. 

\begin{figure}[!ht]
    \centering
    \includegraphics[width=\columnwidth]{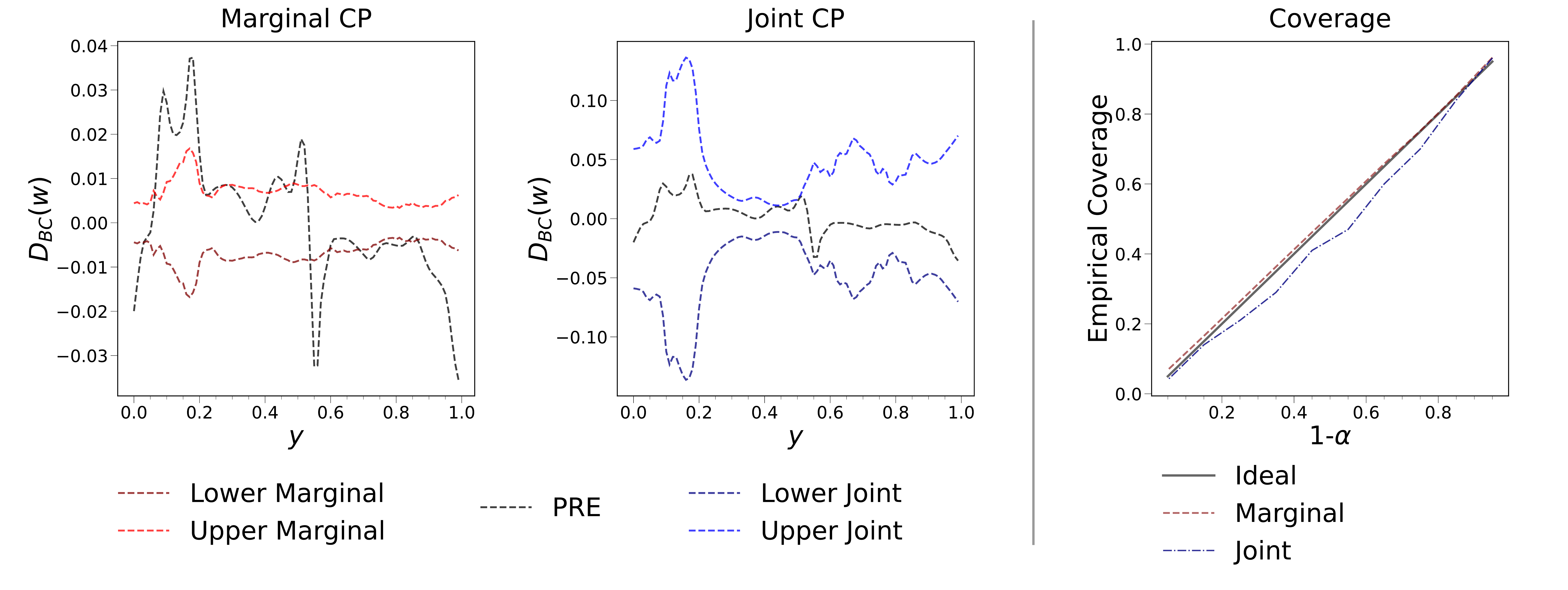}
    \caption{Error bars obtained over the boundary conditions over the right wall of domain of the Navier-Stokes Equation using Marginal and Joint CP. The empirical coverage obtained using the boundary condition as the PRE nonconformity score is also given.}
    \label{fig:NS_BC}
\end{figure}

\clearpage
\section{Toy Problems: 1D cases}
\label{sec: 1d_cases}

\subsection{Advection Equation}
\label{sec:advection}
Consider the one-dimensional advection equation 
\begin{equation}
     \pdv{u}{t} + v \pdv{u}{x} = 0. 
     \label{eqn:adv}
\end{equation}
The state variable of interest $u$ is bounded within the domain $x \in [0,2], \; t \in [0, 0.5]$ and moves within the domain at a constant velocity $v$. Each trajectory is generated by solving \cref{eqn:adv} using a Crank-Nicolson method \citep{Crank_Nicolson_1947}. Data is sampled using a parameterised initial condition that characterises the amplitude and position of the Gaussian field. Generated data is used to train a 1D FNO that takes in the initial condition and autoregressively with a step size of 1, learns to map the next 10 time frames as outlined in \cref{eq: ar_fno}. A reproducible script is attached to the supplementary material. 

\begin{equation}
    u^{n+1} = f(u^n),
    \label{eq: ar_fno}
\end{equation}
$f$ represents the FNO and $u^n$ and $u^{n+1}$ represents the current (input) and the predicted future state of the system (output).

\begin{figure}[!ht]
    \centering
    \includegraphics[width=\linewidth]{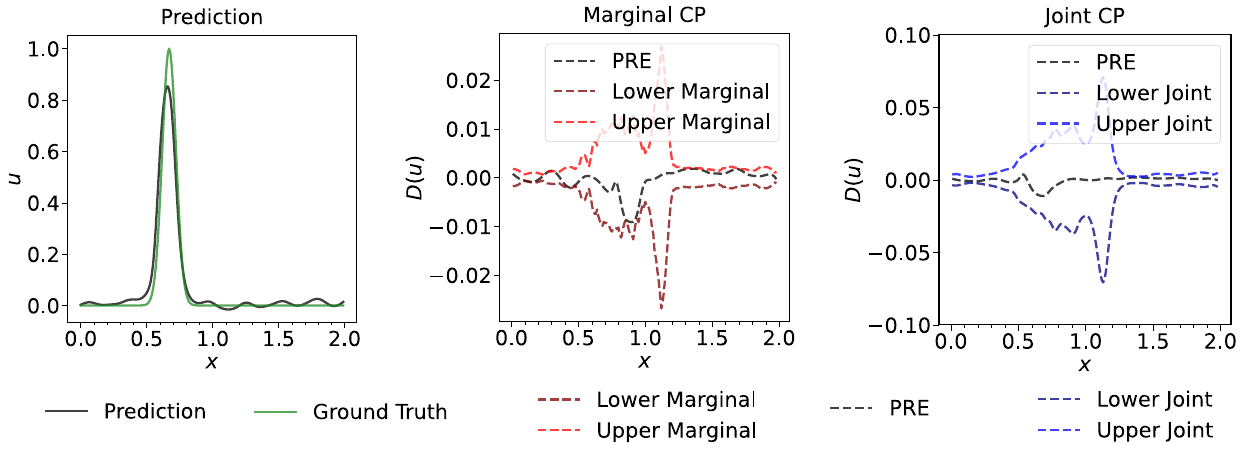}
    \caption{\textbf{Advection Equation:} (Left) Comparing the neural PDE (FNO) performance with that of the physics-based numerical solver at the last time instance. (Middle) Upper and lower bounds for 90$\%$ coverage obtained by performing marginal-CP. (Right) Upper and lower bounds for 90$\%$ coverage obtained by performing joint-CP. Marginal-CP provides tighter bounds for a prediction as opposed to joint-CP, whereas joint-CP provides a method of employing a relative sense of \textit{reliability} of a prediction within a domain.}
    \label{fig:advection_cp}
\end{figure}

\Cref{fig:advection_cp} demonstrates the guaranteed bounds obtained over the residual space of \cref{eqn:adv} with both the marginal and joint-CP formulations. Being cell-wise, marginal-CP guarantees coverage for each discretised point within the spatio-temporal domain. This allows for tighter bounds and error quantification of interested subdomains within regions but does not provide any guarantee across the entire prediction domain. Joint-CP acts across the entire domain and provides a guarantee as to whether a prediction (instead of a single cell) will fall within the domain or not. Larger bounds are observed as they extend over the multivariate nature of the output. Though this comes with bigger bounds, it provides us with a mechanism to perform rejection criteria for predictions. Within joint-CP, bounds dictating $1-\alpha$ coverage suggest that approximately, $\alpha \times 100\%$ predictions from the same domain will fall outside the bounds and can be rejected. Further details about the physics, parameterisation of the initial conditions, model and its training can be found in \cref{appendix:advection}.

\subsection{Burgers Equation}
\label{sec:burgers}

Consider the 1D Burgers' Equation 
\begin{equation}
    \pdv{u}{t} + u\pdv{u}{x} = \nu\pdv[2]{u}{x}. 
    \label{eqn:burgers}
\end{equation}
The state variable of interest $u$ is bounded within the domain $x \in [0, 2], \; t \in [0, 1.25]$. The field is prescribed by a kinematic viscosity $\nu = 0.002$. Data is generated by solving \cref{eqn:burgers} using a spectral method \citep{canuto2007spectral}. Data sampled using a parameterised initial condition is used to train a 1D FNO that takes in the initial distribution of the state and learns to autoregressively predict the PDE evolution for the next 30 time frames.

\begin{figure}[!ht]
    \centering
    \includegraphics[width=\linewidth]{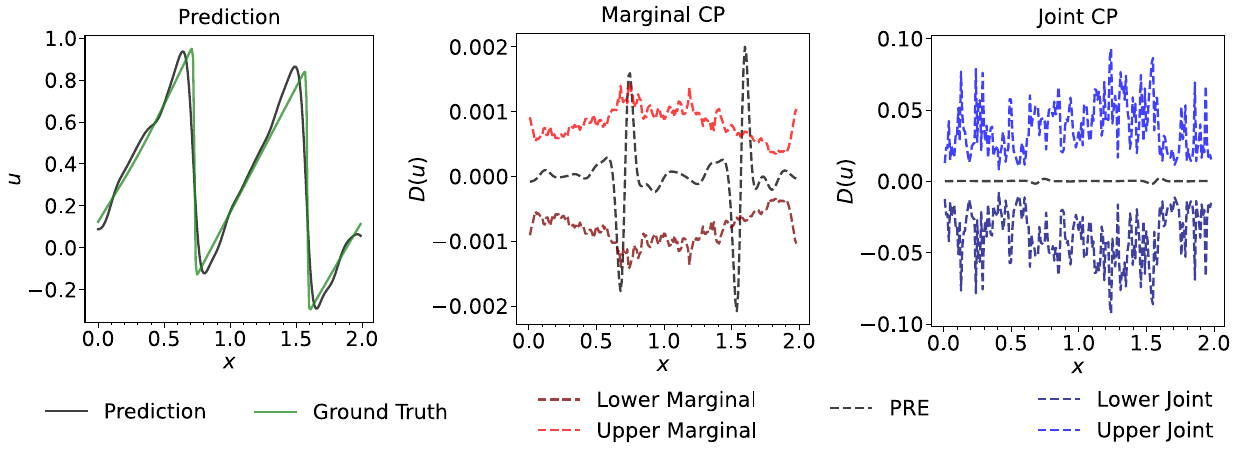}
    \caption{\textbf{Burgers' Equation:} (Left) Comparing the neural PDE (FNO) performance with that of the physics-based numerical solver at the last time instance. (Middle) Upper and lower bounds for 90$\%$ coverage obtained by performing marginal-CP. (Right) Upper and lower bounds for 90$\%$ coverage guaranteed by joint-CP over the residual space.}
    \label{fig:burgers_cp}
\end{figure}

\Cref{fig:burgers_cp} illustrates the guaranteed bounds over the residual space of \cref{eqn:burgers} using marginal and joint-CP formulations for 90\% coverage. Marginal-CP provides cell-wise coverage, yielding tighter bounds for specific subdomains. Joint-CP provides bounds 50 times larger than that of the marginal-CP as it covers the entire prediction domain. Despite the large bounds, approximately $\alpha \times 100\%$ predictions fall outside it as given in \cref{fig:coverage_plots}. For details on physics, initial condition parameterisation, model, and training, see \cref{appendix:burgers}.

\clearpage

\section{1D Advection Equation}
\label{appendix:advection}

\subsection{Physics}
Consider the one-dimensional advection equation, parameterised by the initial condition:

\begin{align}
\label{eqn:appendix_adv}
&\pdv{u}{t} = vD\pdv{u}{x}, \quad x \ \in\ [0,2],\; t\ \in\ [0, 0.5] \nonumber,  \\
&u(x, t=0) = A e^{(x-X)^2}. 
\end{align}

Here $u$ defines the density of the fluid, $x$ the spatial coordinate,  $t$ the temporal coordinate and $v$ the advection speed. initial condition is parameterised by $A$ and $X$, representing the amplitude and position of a Gaussian distribution. A no-flux boundary condition bounds the system.

The numerical solution for the above equation is built using a finite difference solver with a crank-nicolson method implemented in Python. We construct a dataset by performing a Latin hypercube sampling across parameters $A, X$. Each parameter is sampled from within the domain given in \cref{table: data_generation_advection} to generate 100 simulation points, each with its own initial condition. 
Each simulation is run for 50-time iterations with a $\Delta t = 0.01$ across a spatial domain spanning [0,2], uniformly discretised into 200 spatial units in the x-axis.

% lb = np.asarray([0.5, 50]) #pos, amplitude
% ub = np.asarray([1.0, 200])

\begin{table}[h!]
\caption{Domain range of initial condition parameters for the 1D advection equation.}
\label{table: data_generation_advection}
\vspace{0.5cm}
  \centering
  \begin{tabular}{lll}
  \hline 
  Parameter & Domain & Type \\ 
  \hline\\
    Amplitude $(A)$ & $[50, 200]$ & Continuous  \\
    Position $(X)$ & $[0.5, 1.0]$ & Continuous \\
  \hline
  \end{tabular}
\end{table}

\subsection{Model and Training}

We use a one-dimensional FNO to model the evolution of the convection-diffusion equation. The FNO learns to perform the mapping from the initial condition to the next time instance, having a step size of 1. The model autoregressively learns the evolution of the field up until the $10^{th}$ time instance. Each Fourier layer has 8 modes and a width of 16. The FNO architecture can be found in \cref{table: fno_arch_adv}. Considering the field values governing the evolution of the advection equation are relatively small, we avoid normalisations. The model is trained for up to 100 epochs using the Adam optimiser \cite{adam} with a step-decaying learning rate. The learning rate is initially set to 0.005 and scheduled to decrease by half after every 100 epochs. The model was trained using an LP-loss \citep{Gopakumar_2024}.

\begin{table}[ht]
  \caption{Architecture of the 1D FNO deployed for modelling 1D Advection Equation}
  \label{table: fno_arch_adv}
  \resizebox{\columnwidth}{!}{
  \centering
  \begin{tabular}{lll}
    \hline
    Part     & Layer     &  Output Shape \\
    \hline \\
    Input & - & (50, 1, 200, 1) \\    
    Lifting & \texttt{Linear} & (50, 1, 200, 16) \\
    Fourier 1 & \texttt{Fourier1d/Conv1d/Add/GELU} & (50, 1, 16, 200)\\
    Fourier 2 & \texttt{Fourier1d/Conv1d/Add/GELU} & (50, 1, 16, 200)\\
    Fourier 3 & \texttt{Fourier1d/Conv1d/Add/GELU} & (50, 1, 16, 200)\\
    Fourier 4 & \texttt{Fourier1d/Conv1d/Add/GELU} & (50, 1, 16, 200)\\
    Fourier 5 & \texttt{Fourier1d/Conv1d/Add/GELU} & (50, 1, 16, 200)\\
    Fourier 6 & \texttt{Fourier1d/Conv1d/Add/GELU} & (50, 1, 16, 200)\\
    Projection 1 & \texttt{Linear} & (50, 1, 200, 256) \\
    Projection 2 & \texttt{Linear} & (50, 1, 200, 1) \\
    \hline
  \end{tabular}
  }
\end{table}

\subsection{Calibration and Validation}
To perform the calibration as outlined in \cref{sec:experiments}, model predictions are obtained using initial conditions sampled from the domain given in \cref{table: data_generation_advection}. The same bounded domain for the initial condition parameters is used for calibration and validation. 100 initial conditions are sampled and fed to the model to obtain and prediction for both the calibration and the validation.

\begin{figure}
    \centering
    \includegraphics[width=0.7\columnwidth]{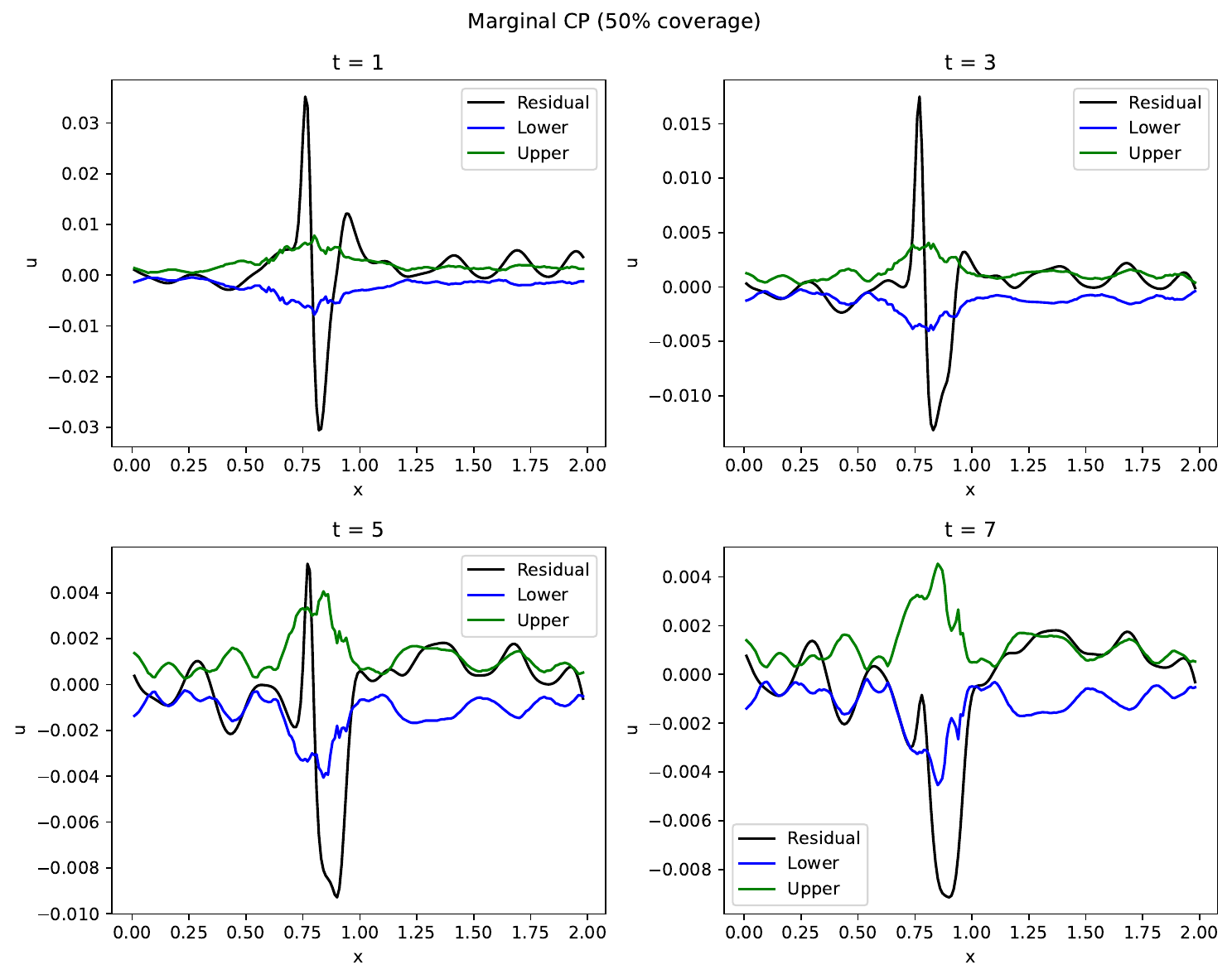}
    \caption{Advection Equation: Marginal-CP with $\alpha=0.5$}
\end{figure}

\begin{figure}
    \centering
    \includegraphics[width=0.7\columnwidth]{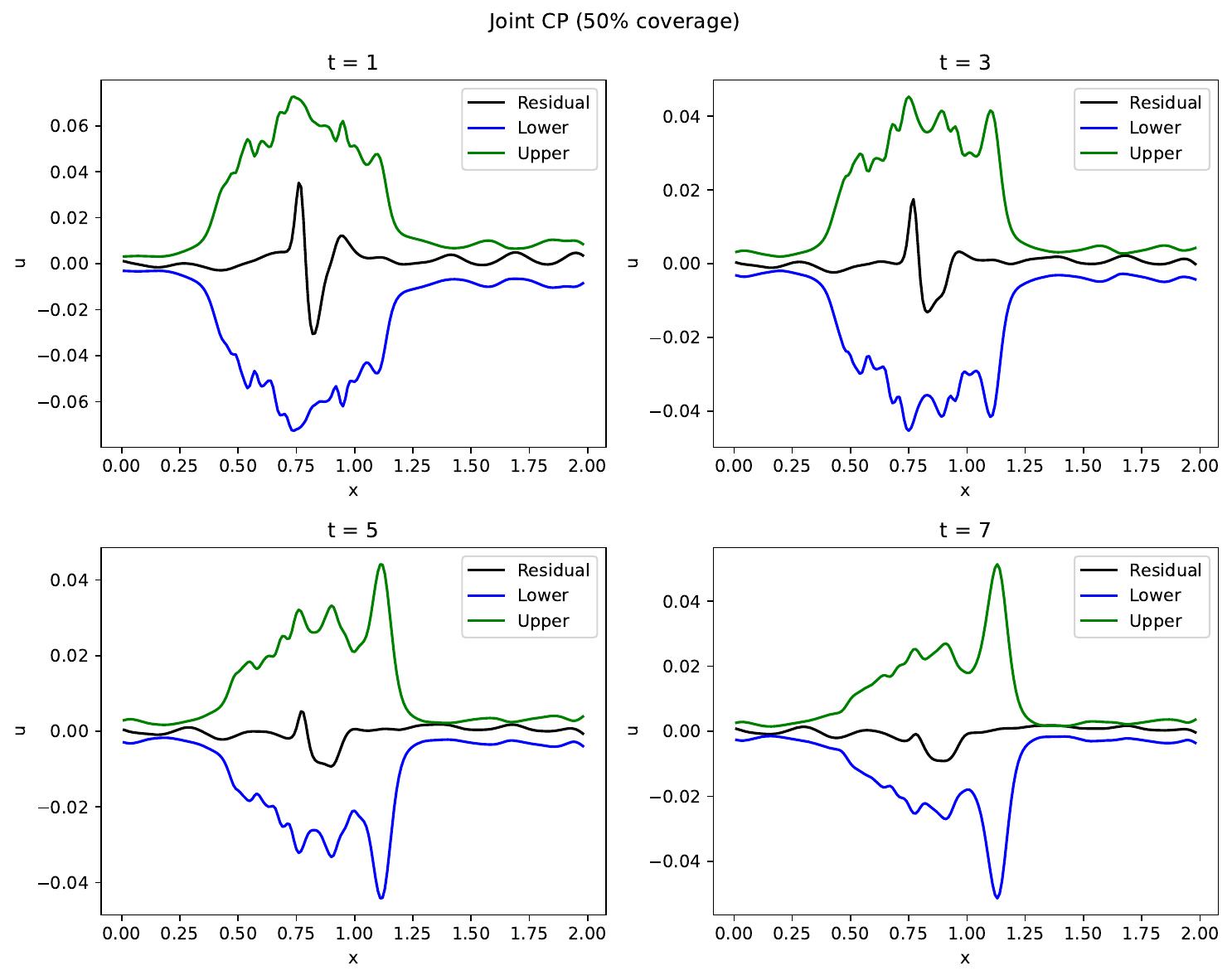}
    \caption{Advection Equation: joint-CP with $\alpha=0.5$}
\end{figure}

\clearpage
\newpage
\section{1D Burgers Equation}
\label{appendix:burgers}

\subsection{Physics}
Consider the one-dimensional Burgers' equation:

\begin{align}
\label{eq: appendix_burgers}
      \pdv{u}{t} + u\pdv{u}{x} &= \nu\pdv[2]{u}{x} \nonumber , \\
        u(x, t=0) &= \sin(\alpha \pi x) + \cos(-\beta \pi x) + \frac{1}{\cosh(\gamma \pi x)}, 
\end{align}
where $u$ defines the field variable, $\nu$ the kinematic viscosity, $x$ the spatial coordinate, $t$ the temporal coordinates. $\alpha, \beta  \text{ and } \gamma$ are variables that parameterise the initial condition of the PDE setup. The system is bounded periodically within the mentioned domain.

The solution for the Burgers' equation is obtained by deploying a spectral solver \citep{canuto2007spectral}. The dataset is built by performing a Latin hypercube scan across the defined domain for the parameters $\alpha, \beta, \gamma$, sampled for each simulation. We generate 1000 simulation points, each one with its initial condition and use it for training. 

The physics of the equation, given by the various coefficients is held constant across the dataset generation throughout as given in \cref{eq: appendix_burgers}. Each data point, as in each simulation is generated with a different initial condition as described above. The parameters of the initial conditions are sampled from within the domain as given in \cref{table: data_generation_burgers}. Each simulation is run for 500-time iterations with a $\Delta t = 0.0025$ across a spatial domain spanning $[0, 2]$,  uniformly discretised into 1000 spatial units in the x and y axes. The temporal domain is subsampled to factor in every $10^{th}$ time instance, while the spatial domain is downsampled to every $5^{th}$ instance.

\begin{table}[h!]
\caption{Domain range of initial condition parameters for the 1D Burgers' equation.}
\label{table: data_generation_burgers}
\vspace{0.5cm}
  \centering
  \begin{tabular}{lll}
  \hline 
  Parameter & Domain & Type \\ 
  \hline\\
    $\alpha$ & $[-3, 3]$ & Continuous  \\
    $\beta$ & $[-3, 3]$ & Continuous \\
    $\gamma$ & $[-3, 3]$ & Continuous \\
  \hline
  \end{tabular}
\end{table}

\subsection{Model and Training}

We train a 1D FNO to map the spatio-temporal evolution of the field variables. We deploy an auto-regressive structure that performs time rollouts allowing us to map the initial distribution recursively up until the $30^{th}$ time instance with a step size of 1.  Each Fourier layer has 8 modes and a width of 32. The FNO architecture can be found in \cref{table: fno_arch_burgers}. We employ a linear range normalisation scheme, placing the field values between -1 and 1. Each model is trained for up to 500 epochs using the Adam optimiser \citep{adam} with a step-decaying learning rate. The learning rate is initially set to 0.005 and scheduled to decrease by half after every 100 epochs. The model was trained using an LP-loss \citep{Gopakumar_2024}. 

\begin{table}[ht]
  \caption{Architecture of the 1D FNO deployed for modelling 1D Burgers' equation}
  \label{table: fno_arch_burgers}
  \resizebox{\columnwidth}{!}{
  \centering
  \begin{tabular}{lll}
    \hline
    Part     & Layer     &  Output Shape \\
    \hline \\
    Input & - & (50, 1, 200, 1) \\    
    Lifting & \texttt{Linear} & (50, 1, 200, 32) \\
    Fourier 1 & \texttt{Fourier2d/Conv1d/Add/GELU} & (50, 1, 32, 200)\\
    Fourier 2 & \texttt{Fourier2d/Conv1d/Add/GELU} & (50, 1, 32, 200)\\
    Fourier 3 & \texttt{Fourier2d/Conv1d/Add/GELU} & (50, 1, 32, 200)\\
    Fourier 4 & \texttt{Fourier2d/Conv1d/Add/GELU} & (50, 1, 32, 200)\\
    Fourier 5 & \texttt{Fourier2d/Conv1d/Add/GELU} & (50, 1, 32, 200)\\
    Fourier 6 & \texttt{Fourier2d/Conv1d/Add/GELU} & (50, 1, 32, 200)\\
    Projection 1 & \texttt{Linear} & (50, 1, 200, 256) \\
    Projection 2 & \texttt{Linear} & (50, 1, 200 1) \\
    \hline
  \end{tabular}
  }
\end{table}

\subsection{Calibration and Validation}
To perform the calibration as outlined in \cref{sec:experiments}, model predictions are obtained using initial conditions sampled from the domain given in \cref{table: data_generation_burgers}. The same bounded domain for the initial condition parameters is used for calibration and validation. 1000 initial conditions are sampled and fed to the model to perform the calibration and 100 samples are gathered for performing the validation. 

\begin{figure*}
    \centering
    \includegraphics[width=0.7\textwidth]{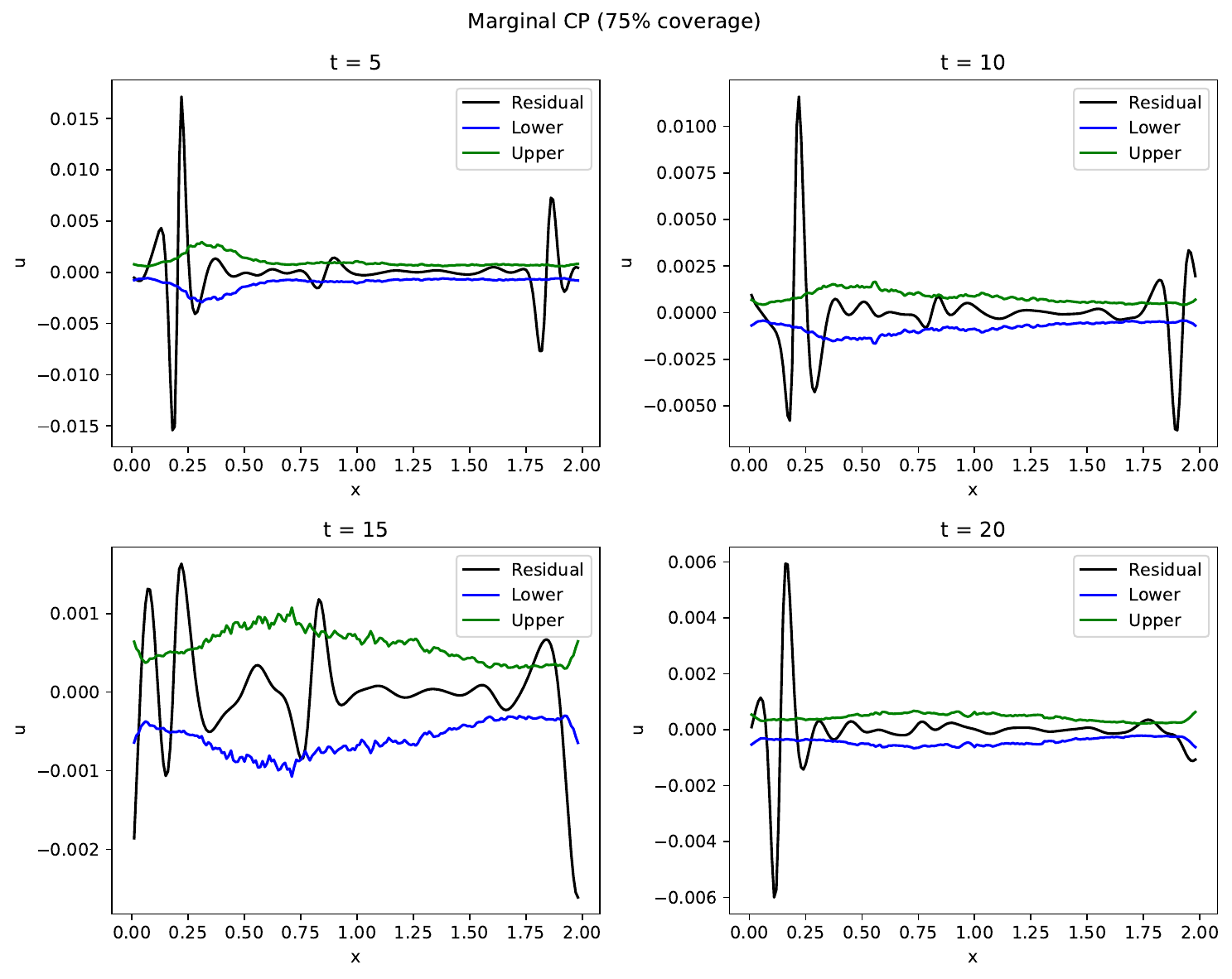}
    \caption{Burgers Equation: Marginal-CP with $\alpha=0.75$}
\end{figure*}

\begin{figure*}
    \centering
    \includegraphics[width=0.7\textwidth]{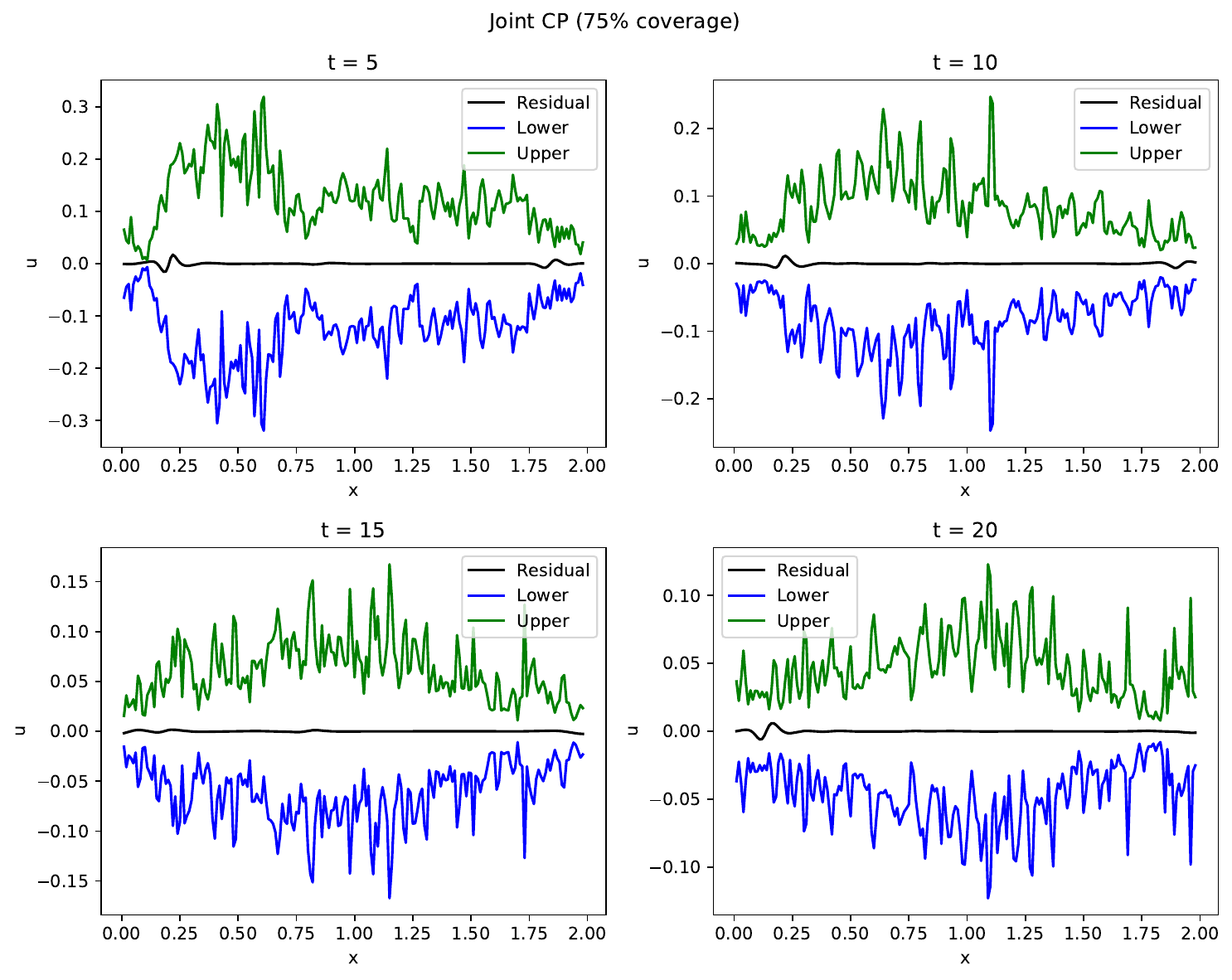}
    \caption{Burgers Equation: joint-CP with $\alpha=0.75$}
\end{figure*}

\clearpage
\section{Utilising CP-PRE as a measure of model quality}
\label{appendix: model_quality}
While evaluating the performance of a neural-PDE, it is important to their fit not just to the data but to the underlying physics. CP-PRE will provide guaranteed coverage irregardless of the quality of the model. It will have considerably wider error bounds when the neural-PDE (whether PINN or a Neural Operator) fails to comply with the physics. However, we believe that this is an advantage of our method. In CP-PRE formulation, the bounds are estimated across the PDE residual, where the ground truth for a well-fit solution should always be near zero. If we get wide error bars further away from the 0 for potentially high coverage estimates, it is a strong indication that statistically the model violates the physics of interest. 

Consider the example with the Advection equation. We have two models, a well-fit (good model) and a poorly fit one (bad model). As shown in fig. \ref{fig:goodbad}, though we obtain guaranteed coverage in the case of both the bad and good models, the width of the error bars indicates the quality of the model. Taken for 90 \% coverage, the width of the coverage bounds obtained over the bad model is substantially larger than that obtained by the good model. 

\begin{figure}[htbp]
    \centering
    % First row
    \subfigure[Marginal advection performance]{
        \includegraphics[width=0.45\columnwidth]{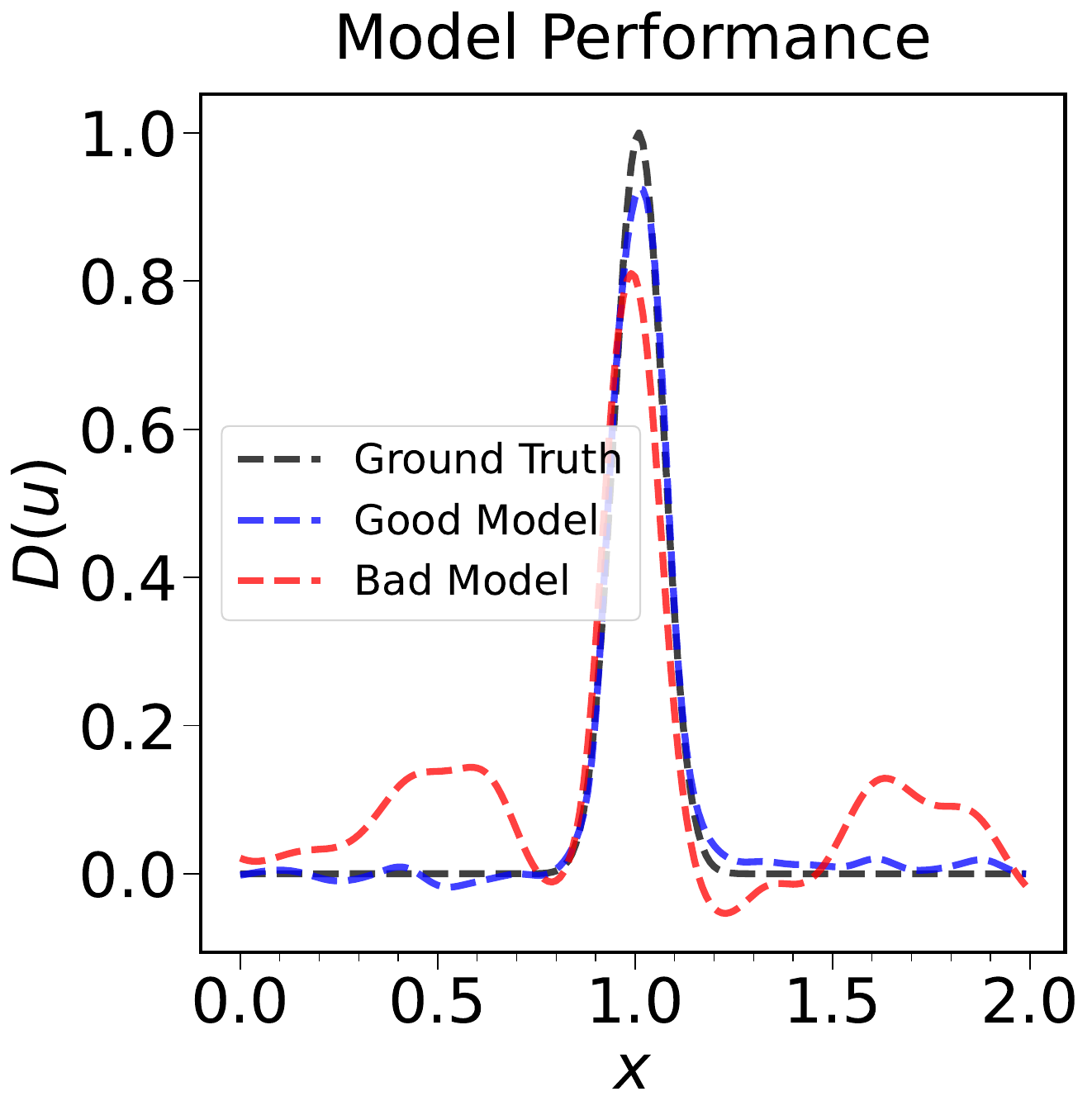}
        \label{fig:adv_perf}
    }
    \hfill
    \subfigure[Good marginal advection]{
        \includegraphics[width=0.55\columnwidth]{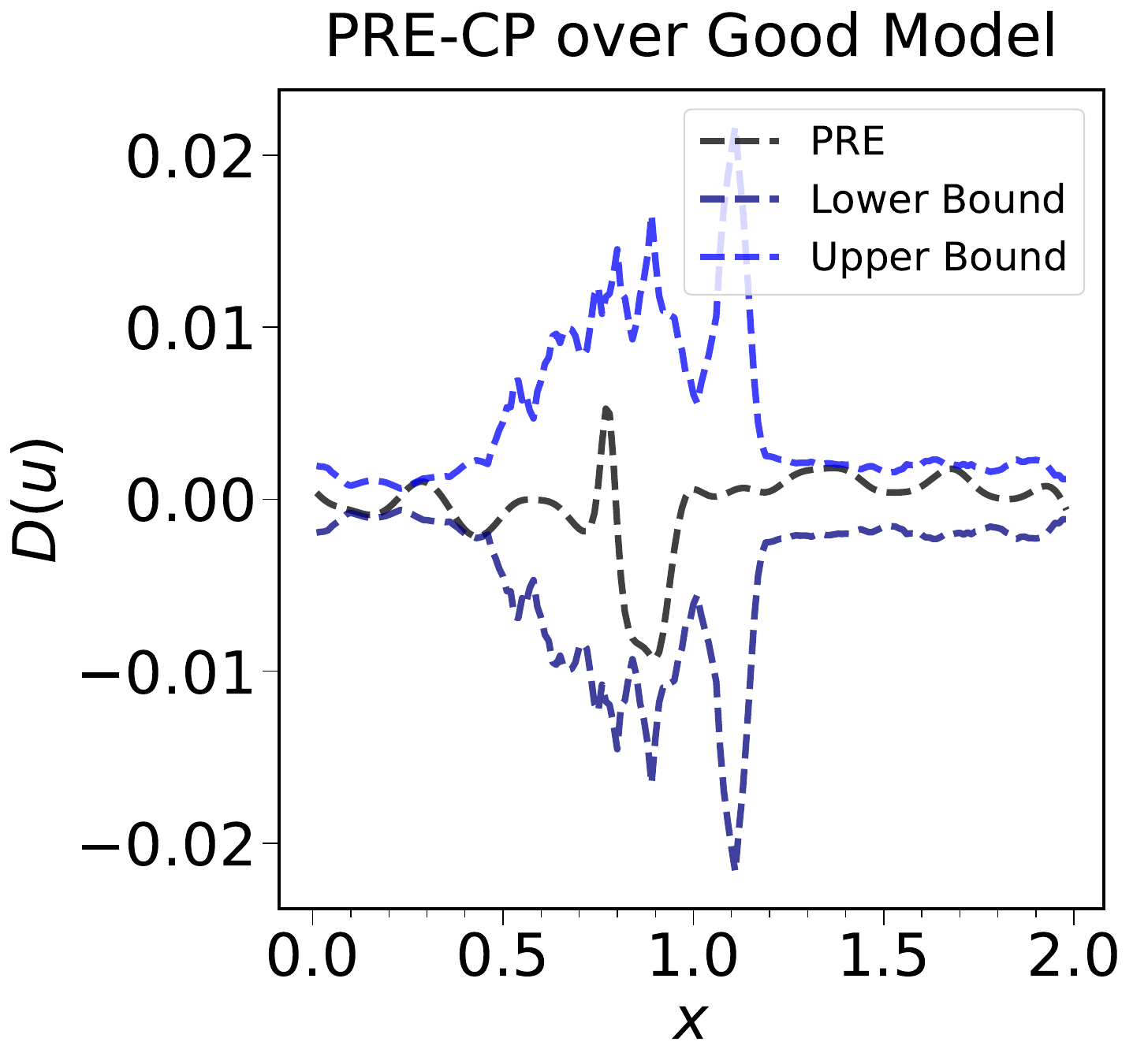}
        \label{fig:bad_model}
    }
    
    % Second row
    \subfigure[Bad marginal advection]{
        \includegraphics[width=0.55\columnwidth]{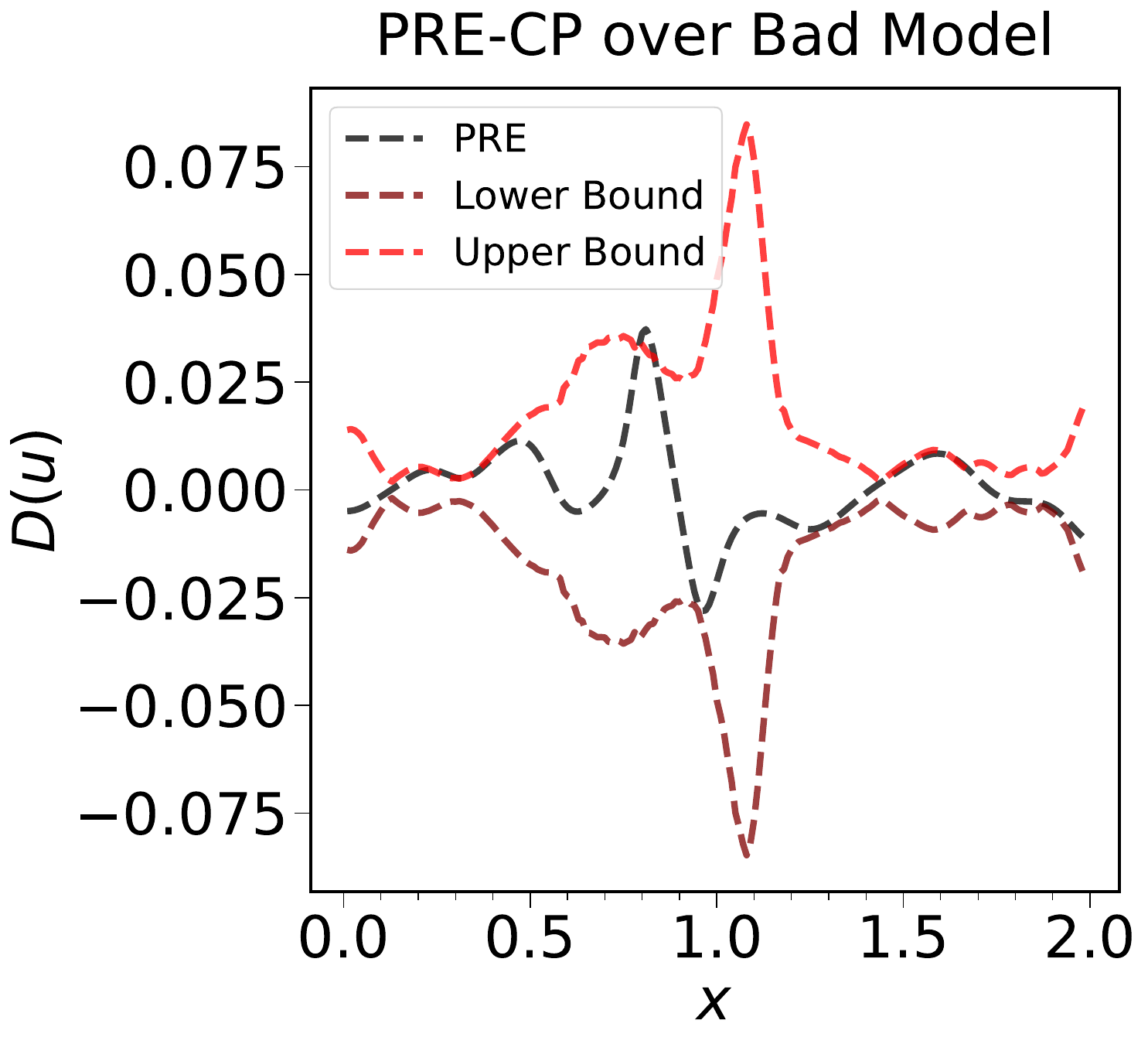}
        \label{fig:good_model}
    }
    \hfill
    \subfigure[Coverage analysis]{
        \includegraphics[width=0.45\columnwidth]{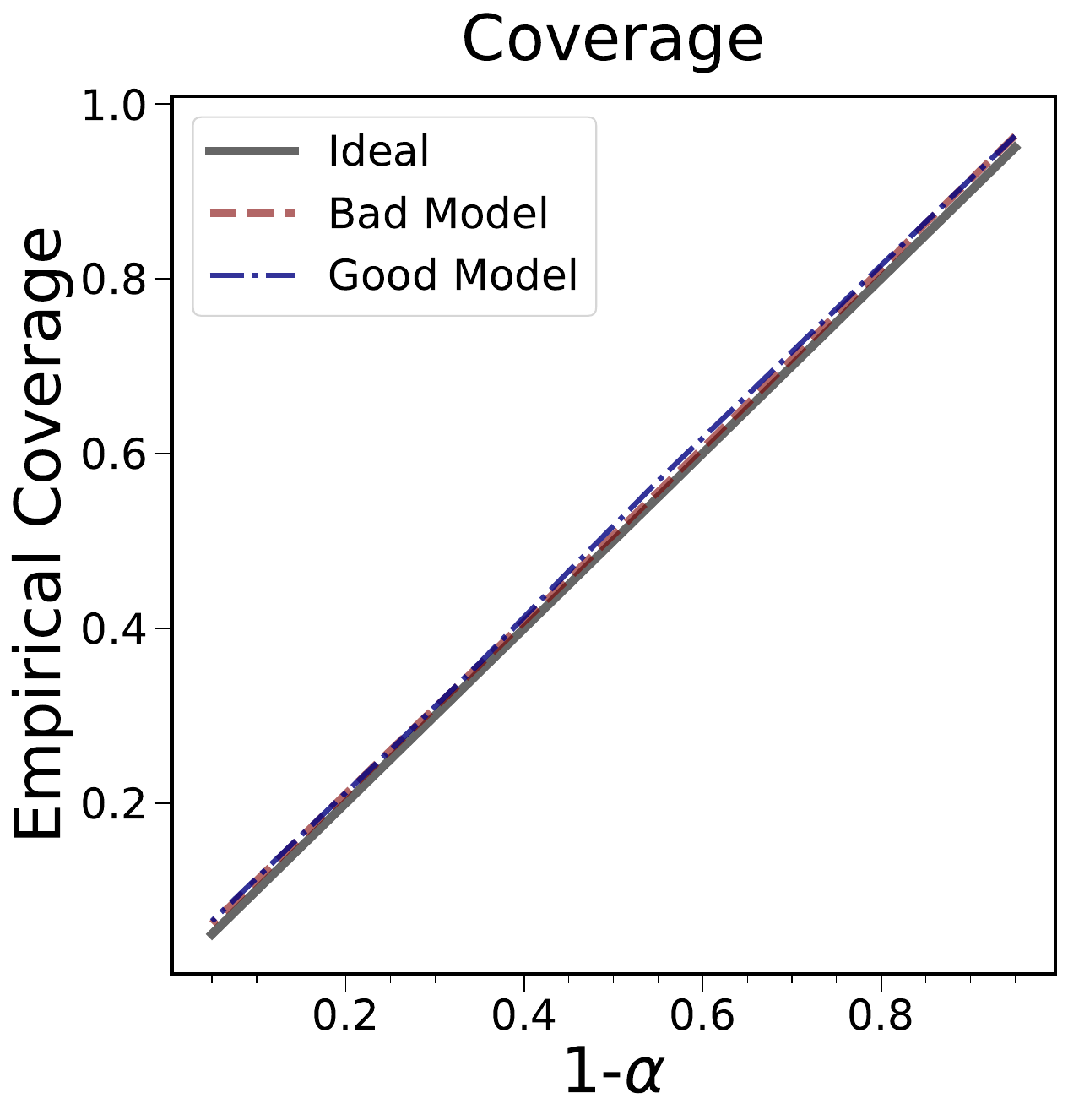}
        \label{fig:goodbad_coverage}
    }
    \caption{CP-PRE provides guaranteed coverage irrespective of the model performance, however, the width of the obtained coverage bounds indicates the accuracy of the model in obeying the underlying physics. Coverage taken for $\alpha=0.1 \sim 90\%$ coverage.}
    \label{fig:goodbad}
\end{figure}

There still could be a concern as to what width can be considered to be within a good range within the residual space. This could be estimated by running the PRE convolution operator(s) across a single numerical simulation of the interested physics, thereby estimating the impact of the operator in estimating the residual. The PRE over the simulation data will allow us to judge what ranges for the coverage width differentiate between a "bad" and a "good" model.

\clearpage
\newpage
\section{2D Wave Equation}
\label{appendix:wave}

\subsection{Physics}
Consider the two-dimensional wave equation:
\begin{align}
\label{eq: wave}
    &\pdv[2]{u}{t} = c^2 \bigg(\pdv[2]{u}{x} + \pdv[2]{u}{y}\bigg) = 0 , \quad x,y \in [-1,1],\ t \in [0, 1] \nonumber,  \\
    &u(x,y,t=0) = e ^{-A\big((x-X)^2 + (y-Y)^2\big)},  \\
    &\pdv{u(x,y,t=0)}{t} = 0, \qquad u(x,y,t) = 0, \quad x,y \ \in\ \partial\Omega,\ t\in [0, 1],
\end{align}
where $u$ defines the field variable, $c$ the wave velocity, $x$ and $y$ the spatial coordinates, $t$ the temporal coordinates. $A, X$ and $Y$ are variables that parameterise the initial condition of the PDE setup. There exists an additional constraint to the PDE setup that initialises the velocity of the wave to 0. The system is bounded periodically within the mentioned domain.

The solution for the wave equation is obtained by deploying a spectral solver that uses a leapfrog method for time discretisation and a Chebyshev spectral method on tensor product grid for spatial discretisation \citep{GOPAKUMAR2023100464}. The dataset is built by performing a Latin hypercube scan across the defined domain for the parameters $A, X, Y$,  which accounts for the amplitude and the location of the 2D Gaussian. We generate 1000 simulation points, each one with its initial condition and use it for training. 

The physics of the equation, given by the various coefficients, is held constant across the dataset generation throughout, as given in  \cref{eq: wave}. Each data point, as in each simulation is generated with a different initial condition as described above. The parameters of the initial conditions are sampled from within the domain as given in \cref{table: data_generation_wave}. Each simulation is run for 150-time iterations with a $\Delta t = 0.00667$ across a spatial domain spanning $[-1,1]^2$,  uniformly discretised into 64  spatial units in the x and y axes. The temporal domain is subsampled to factor in every $5^{th}$ time instance only.

\begin{table}[h!]
\caption{Domain range of initial condition parameters for the 2D wave equation.}
\label{table: data_generation_wave}
\vspace{0.5cm}
  \centering
  \begin{tabular}{lll}
  \hline 
  Parameter & Domain & Type \\ 
  \hline\\
    Amplitude $(A)$ & $[10, 50]$ & Continuous  \\
    X Position $(X)$ & $[0.1, 0.5]$ & Continuous \\
    Y Position $(X)$ & $[0.1, 0.5]$ & Continuous \\
  \hline
  \end{tabular}
\end{table}

\subsection{Model and Training}
 We deploy an auto-regressive FNO that performs time rollouts allowing us to map the initial distribution recursively up until the $20^{th}$ time instance with a step size of 1. Each Fourier layer has 16 modes and a width of 32. The FNO architecture can be found in \cref{table: data_generation_wave}. We employ a linear range normalisation scheme, placing the field values between -1 and 1. Each model is trained for up to 500 epochs using the Adam optimiser \citep{adam} with a step-decaying learning rate. The learning rate is initially set to 0.005 and scheduled to decrease by half after every 100 epochs. The model was trained using an LP-loss \citep{Gopakumar_2024}. The performance of the trained model can be visualised in \cref{fig: wave_comparison_fno}.

\begin{table}[ht]
  \caption{Architecture of the 2D FNO deployed for modelling the 2D wave equation}
  \label{table: fno_arch_wave}
  \resizebox{\columnwidth}{!}{
  \centering
  \begin{tabular}{lll}
    \hline
    Part     & Layer     &  Output Shape \\
    \hline \\
    Input & - & (50, 1, 64, 64, 1) \\    
    Lifting & \texttt{Linear} & (50, 1, 64, 64, 32) \\
    Fourier 1 & \texttt{Fourier2d/Conv2d/Add/GELU} & (50, 1, 32, 64, 64)\\
    Fourier 2 & \texttt{Fourier2d/Conv2d/Add/GELU} & (50, 1, 32, 64, 64)\\
    Fourier 3 & \texttt{Fourier2d/Conv2d/Add/GELU} & (50, 1, 32, 64, 64)\\
    Fourier 4 & \texttt{Fourier2d/Conv2d/Add/GELU} & (50, 1, 32, 64, 64)\\
    Fourier 5 & \texttt{Fourier2d/Conv2d/Add/GELU} & (50, 1, 32, 64, 64)\\
    Fourier 6 & \texttt{Fourier2d/Conv2d/Add/GELU} & (50, 1, 32, 64, 64)\\
    Projection 1 & \texttt{Linear} & (50, 1, 64, 64, 256) \\
    Projection 2 & \texttt{Linear} & (50, 1, 64, 64 1) \\
    \hline
  \end{tabular}
  }
\end{table}

\begin{figure}[!ht]
    \centering
    \includegraphics[width=\columnwidth]{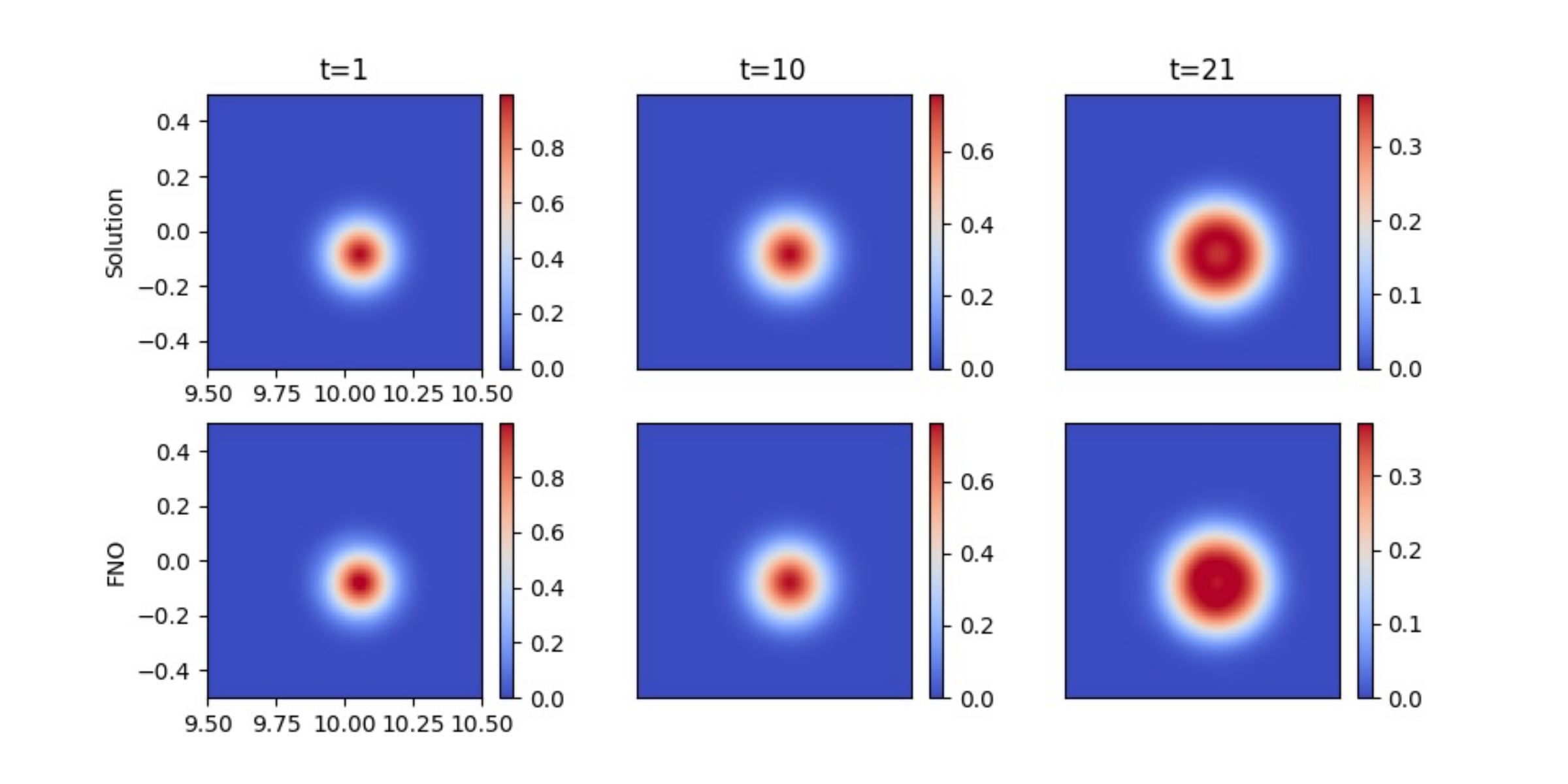}
    \caption{Wave Equation: Temporal evolution of field associated with the wave equation modelled using the numerical spectral solver (top of the figure) and that of the FNO (bottom of the figure). The spatial domain is given in Cartesian geometry.}
    \label{fig: wave_comparison_fno}
\end{figure}

\subsection{Calibration and Validation}
To perform the calibration as outlined in \cref{sec:experiments}, model predictions are obtained using initial conditions sampled from the domain given in \cref{table: data_generation_wave}. The same bounded domain for the initial condition parameters is used for calibration and validation. 1000 initial conditions are sampled and fed to the model to perform the calibration and 100 samples are gathered for performing the validation. 
% \begin{figure*}
%     \centering
%     \begin{subfigure}{0.45\columnwidth}
%         \centering
%         \includegraphics[width=0.4\textwidth]{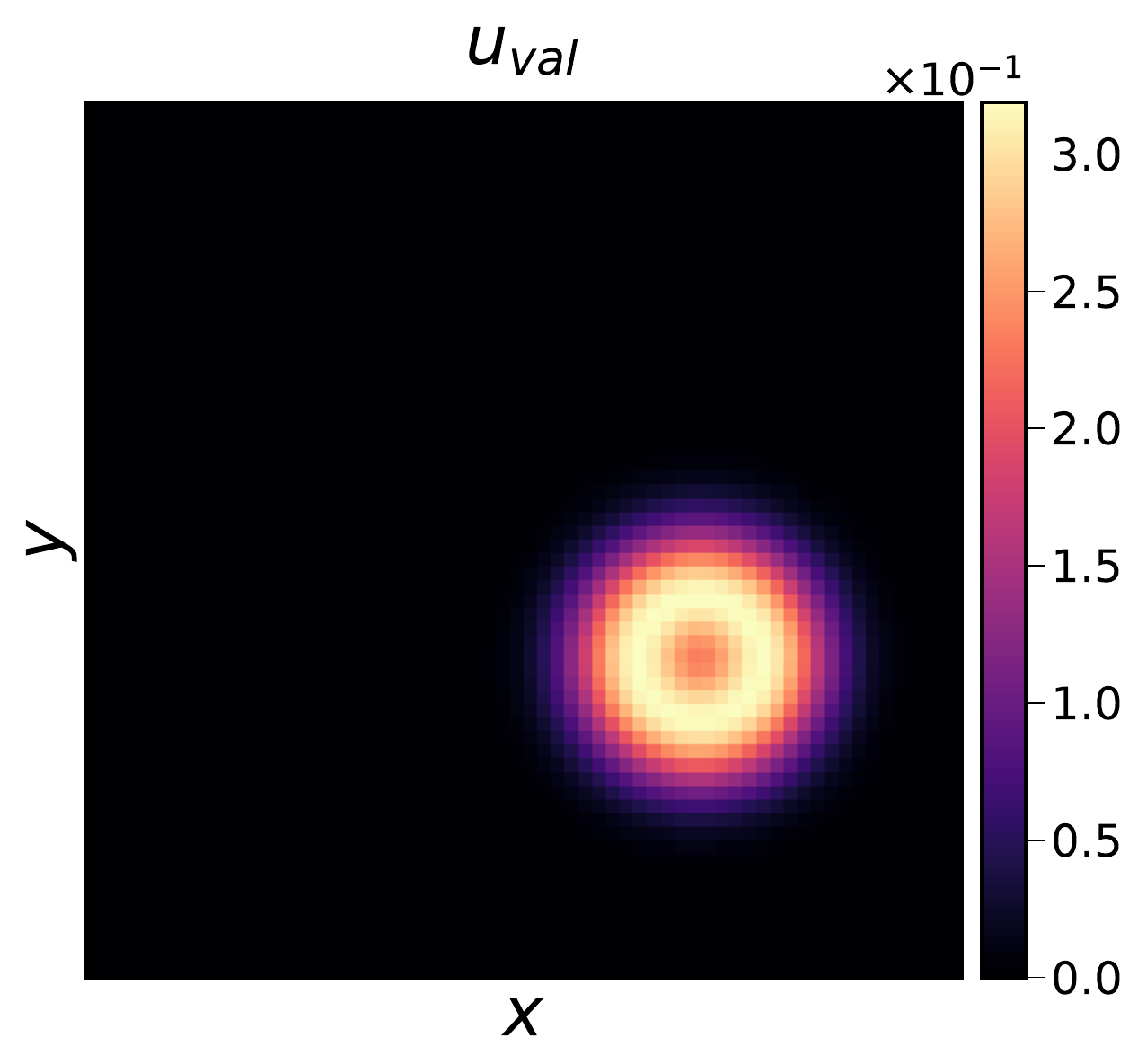}
%         \caption{Ground Truth}
%     \end{subfigure}
%     \quad
%     \begin{subfigure}
%         \centering
%         \includegraphics[width=0.4\columnwidth]{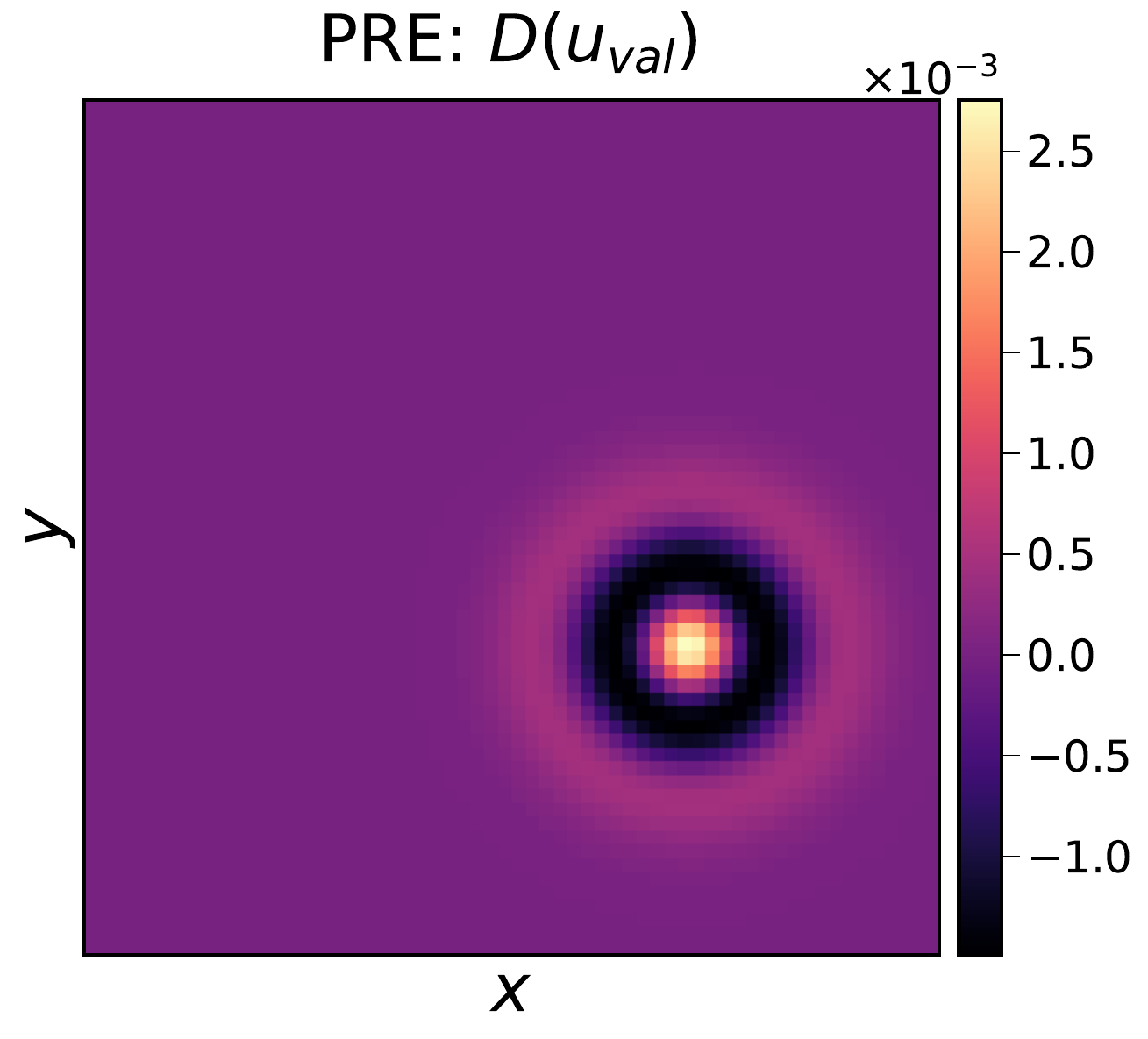}
%         \caption{PRE over Ground Truth}
%     \end{subfigure}
%     \begin{subfigure}
%         \centering
%         \includegraphics[width=0.4\columnwidth]{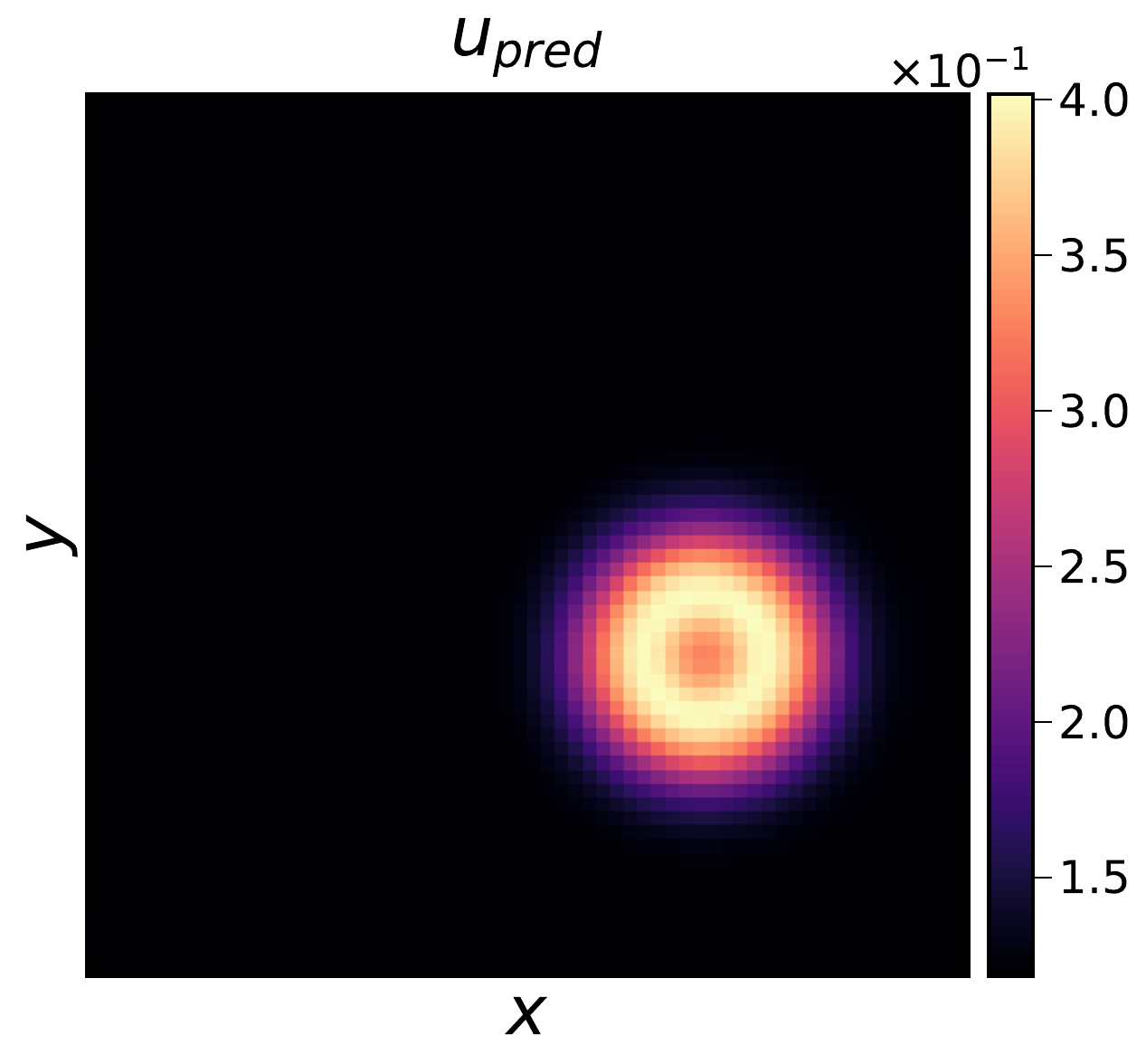}
%         \caption{Prediction from neural PDE Solver}
%     \end{subfigure}
%     \quad
%     \begin{subfigure}
%         \centering
%         \includegraphics[width=0.4\columnwidth]{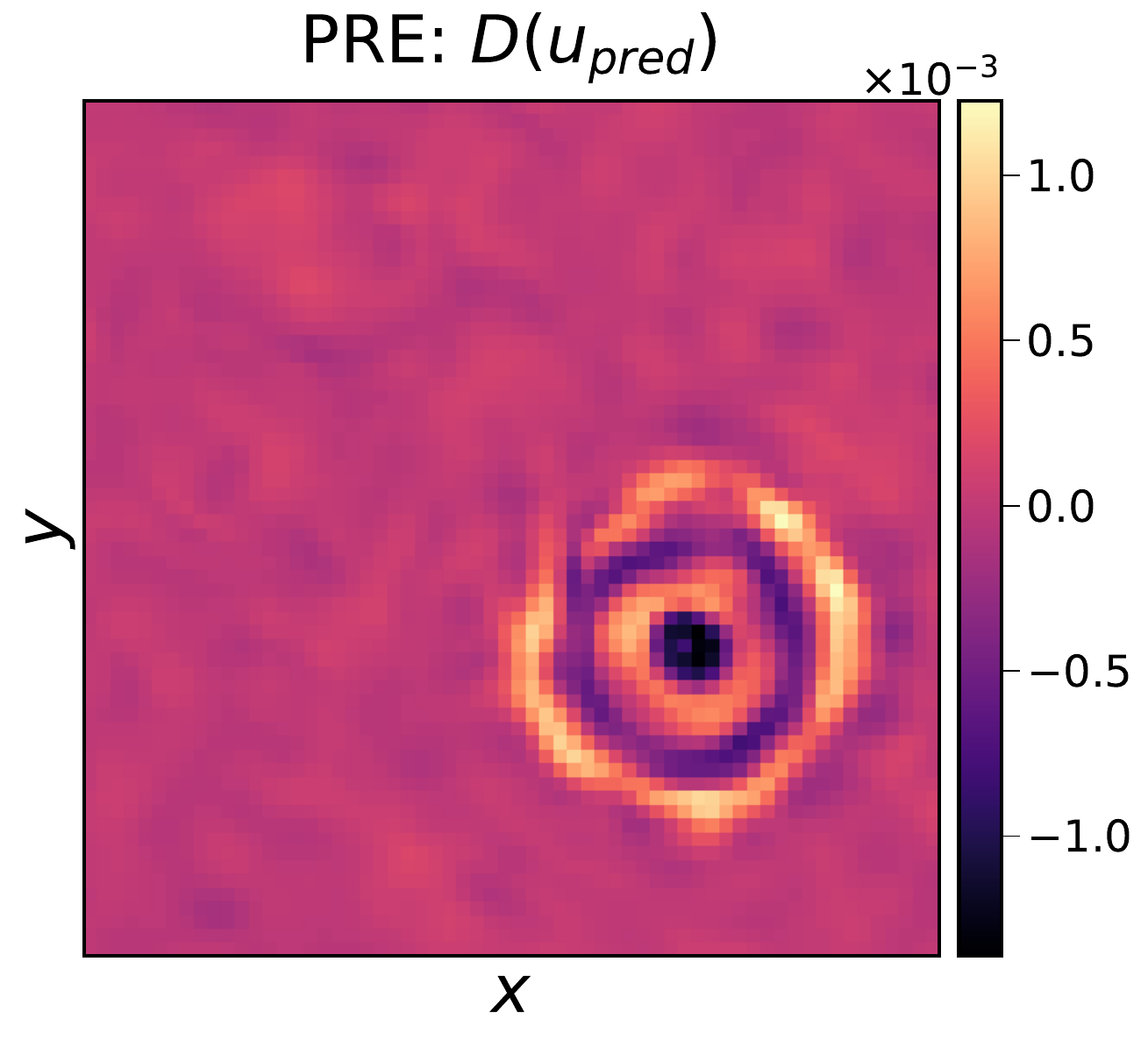}
%         \caption{PRE over Prediction}
%     \end{subfigure}
%     \caption{Analysing the PRE over the ground truth and the prediction. Though the neural PDE solver is capable of learning seemingly indistinguishable emulation of the physics while exploring the PRE over each tells a different story. As opposed to the smooth Laplacian of the PRE over the ground truth, PRE over the prediction indicates a noisy solution, potentially arising due to the stochasticity of the optimisation process.}
%     \label{fig:pre_wave}
% \end{figure*}

\begin{figure*}
    \centering
    \subfigure[Ground Truth]{
        \includegraphics[width=\columnwidth]{Images/wave_val.pdf}
        \label{fig:topleft}
    }
    \subfigure[PRE over Ground Truth]{
        \includegraphics[width=\columnwidth]{Images/wave_val_residual.pdf}
        \label{fig:topright}
    }
    
    \subfigure[Prediction from neural PDE]{
        \includegraphics[width=\columnwidth]{Images/wave_pred.pdf}
        \label{fig:bottomleft}
    }
    \subfigure[PRE over Prediction]{
        \includegraphics[width=\columnwidth]{Images/wave_pred_residual.pdf}
        \label{fig:bottomright}
    }
     \caption{Analysing the PRE over the ground truth and the prediction. Though the neural PDE solver is capable of learning seemingly indistinguishable emulation of the physics while exploring the PRE over each tells a different story. As opposed to the smooth Laplacian of the PRE over the ground truth, PRE over the prediction indicates a noisy solution, potentially arising due to the stochasticity of the optimisation process.}
    \label{fig:pre_wave}
\end{figure*}

\clearpage
\newpage
\section{2D Navier-Stokes Equations}
\label{appendix:ns}

\subsection{Physics}
Consider the two-dimensional Navier-Stokes equations:
\begin{align*}
    \label{eq: appendix_ns}
    \nabla \cdot \va{v} &= 0 ,  \\
    \pdv{\va{v}}{t} + (\va{v} \cdot \nabla) \va{v}  &= \nu \nabla^2 \va{v} - \nabla P, 
\end{align*}

with initial conditions: 

\begin{align}
        u(x,y,t=0) &=  -\sin(2 \pi \alpha y) \quad y \in [-1,1],  \\
        v(x,y,t=0) &=  -\sin(4\pi \beta  x) \quad x \in [-1,1], 
\end{align}

where $u$ defines the x-component of velocity, $v$ defines the y-component of velocity. The Navier-stokes equations solve the flow of an incompressible fluid with a kinematic viscosity $\nu$. The system is bounded with periodic boundary conditions within the domain. The dataset is built by performing a Latin hypercube scan across the defined domain for the parameters $\alpha, \beta$,  which parameterises the initial velocity fields for each simulation. We generate 500 simulation points, each one with its initial condition and use it for training. The solver is built using a spectral method outlined in \href{https://github.com/pmocz/navier-stokes-spectral-python}{Philip Mocz's code}.

Each data point, as in each simulation is generated with a different initial condition as described above. The parameters of the initial conditions are sampled from within the domain as given in \cref{table: data_generation_ns}. Each simulation is run up until wallclock time reaches $0.5$ $\Delta t = 0.001$. The spatial domain is uniformly discretised into 400 spatial units in the x and y axes. The temporal domain is subsampled to factor in every $10^{th}$ time instance, and the spatial domain is downsampled to factor every $4^{th}$ time instance leading to a $100\times 100$ grid for the neural PDE. 

\begin{table}[h!]
\caption{Domain range of initial condition parameters for the 2D Navier-Stokes equations}
\label{table: data_generation_ns}
\vspace{0.5cm}
  \centering
  \begin{tabular}{lll}
  \hline 
  Parameter & Domain & Type \\ 
  \hline\\
    Velocity x-axis  $(u_0)$ & $[0.5, 1.0]$ & Continuous  \\
    Velocity y-axis $(v_0)$ & $[0.5, 1.0]$ & Continuous \\
  \hline
  \end{tabular}
\end{table}

\subsection{Model and Training}

We train a 2D multivariable FNO to map the spatio-temporal evolution of the field variables \citep{Gopakumar_2024}. We deploy an auto-regressive structure that performs time rollouts allowing us to map the initial distribution recursively up until the $20^{th}$ time instance with a step size of 1.  Each Fourier layer has 8 modes and a width of 16. The FNO architecture can be found in \cref{table: fno_arch_ns}. We employ a linear range normalisation scheme, placing the field values between -1 and 1. Each model is trained for up to 500 epochs using the Adam optimiser \citep{adam} with a step-decaying learning rate. The learning rate is initially set to 0.005 and scheduled to decrease by half after every 100 epochs. The model was trained using an LP-loss \citep{Gopakumar_2024}. The performance of the trained model can be visualised in \cref{fig:ns_plots}.

\begin{table}[ht]
  \caption{Architecture of the 2D FNO deployed for modelling 2D Navier-Stokes equations}
  \label{table: fno_arch_ns}
  \resizebox{\columnwidth}{!}{

  \centering
  \begin{tabular}{lll}
    \hline
    Part     & Layer     &  Output Shape \\
    \hline \\
    Input & - & (50, 1, 100, 100, 1) \\    
    Lifting & \texttt{Linear} & (50, 1, 100, 100 16) \\
    Fourier 1 & \texttt{Fourier2d/Conv2d/Add/GELU} & (50, 1, 16, 100, 100)\\
    Fourier 2 & \texttt{Fourier2d/Conv2d/Add/GELU} & (50, 1, 16, 100, 100)\\
    Fourier 3 & \texttt{Fourier2d/Conv2d/Add/GELU} & (50, 1, 16, 100, 100)\\
    Fourier 4 & \texttt{Fourier2d/Conv2d/Add/GELU} & (50, 1, 16, 100, 100)\\
    Fourier 5 & \texttt{Fourier2d/Conv2d/Add/GELU} & (50, 1, 16, 100, 100)\\
    Fourier 6 & \texttt{Fourier2d/Conv2d/Add/GELU} & (50, 1, 16, 100, 100)\\
    Projection 1 & \texttt{Linear} & (50, 1, 100, 100, 256) \\
    Projection 2 & \texttt{Linear} & (50, 1, 100, 100 1) \\
    \hline
  \end{tabular}
  }
\end{table}

\begin{figure*}[htbp]
    \centering
    \subfigure[Spatio-temporal evolution of the horizontal component of velocity $(u)$]{
        \includegraphics[width=0.75\textwidth]{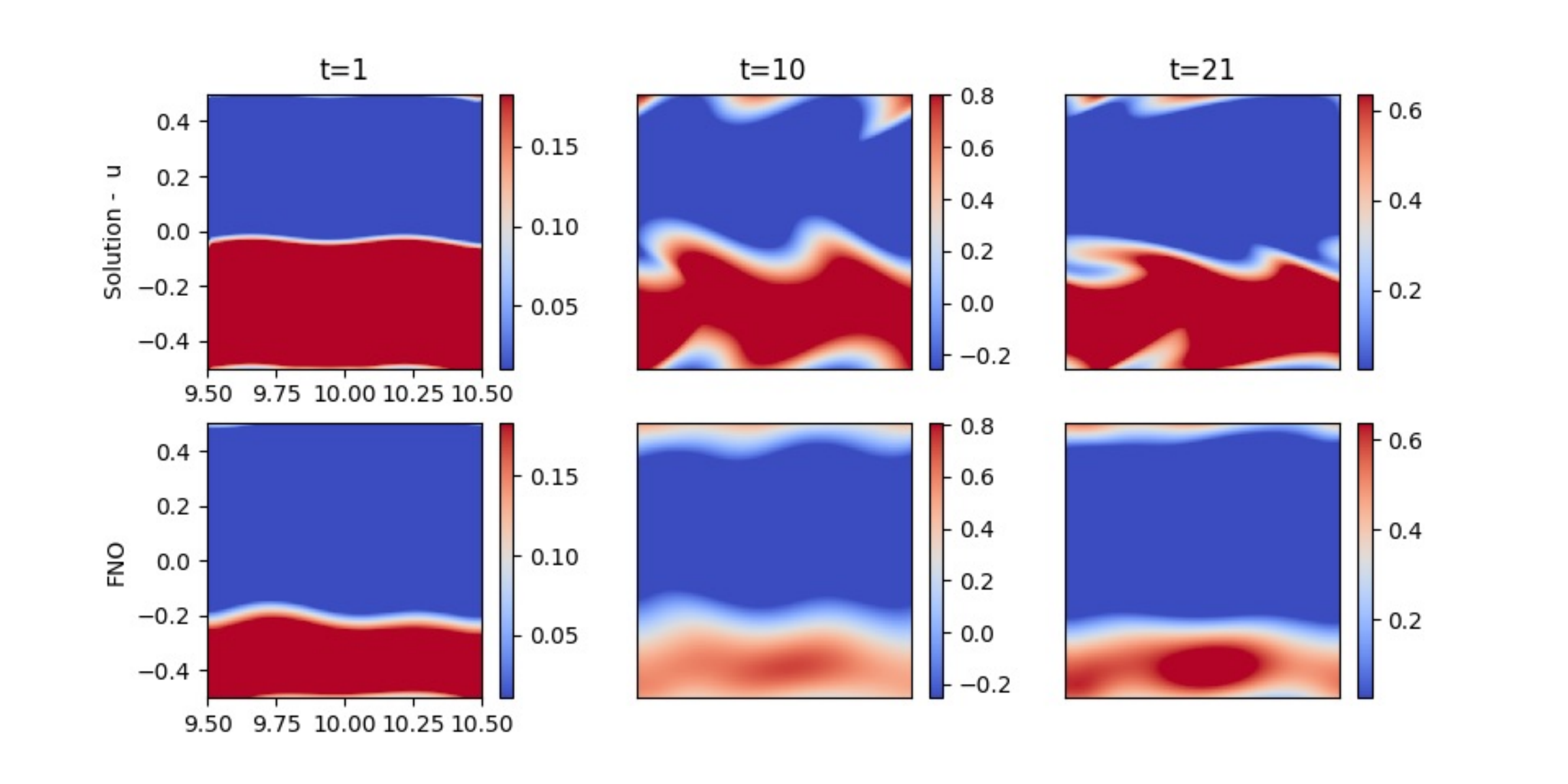}
        \label{fig:ns_u}
    }
    \subfigure[Sptio-temporal evolution of the vertical component of velocity $(v)$]{
        \includegraphics[width=0.75\textwidth]{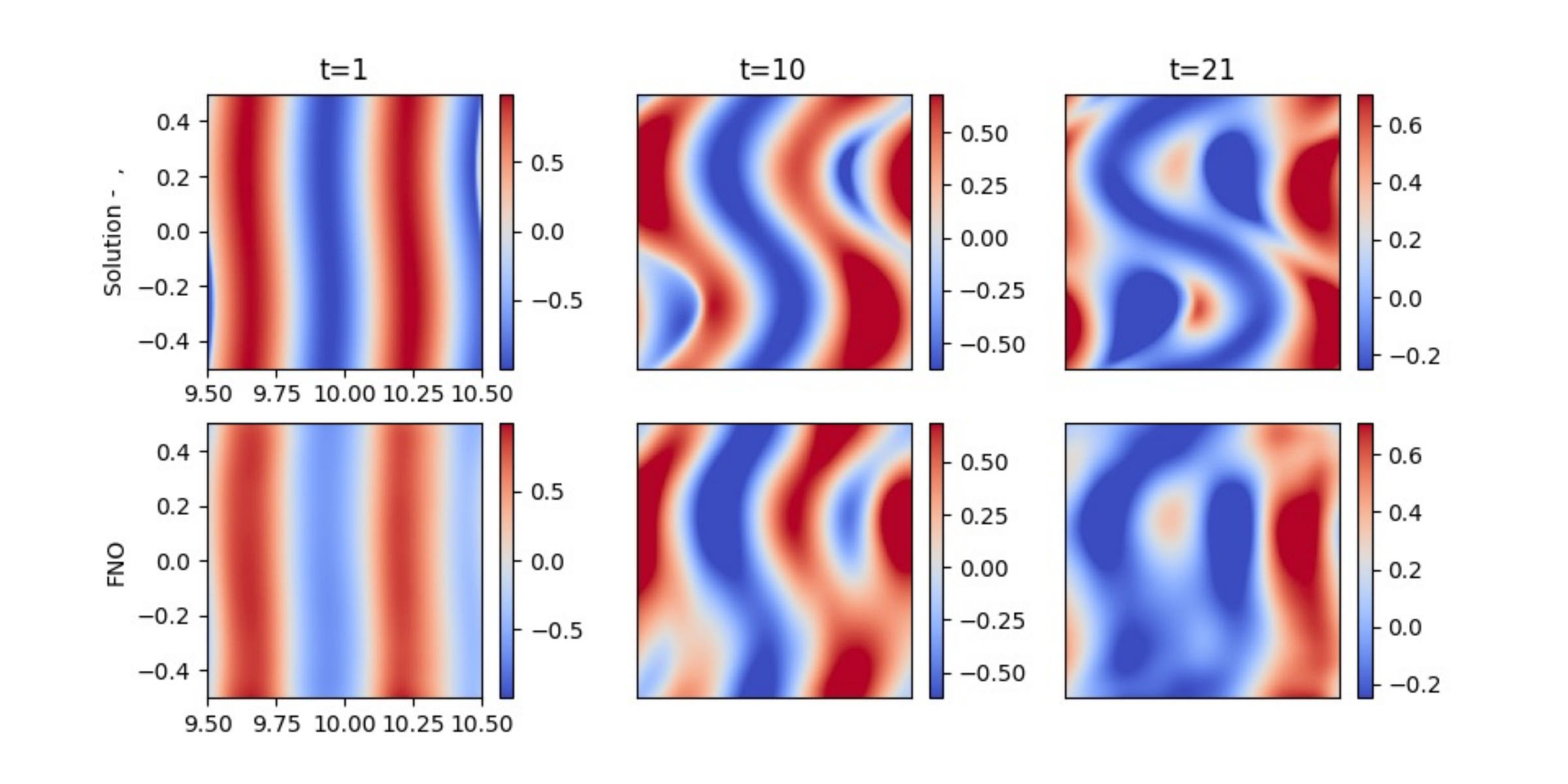}
        \label{fig:ns_v}
    }
    \subfigure[Spatio-temporal evolution of the pressure field $(P)$]{
        \includegraphics[width=0.75\textwidth]{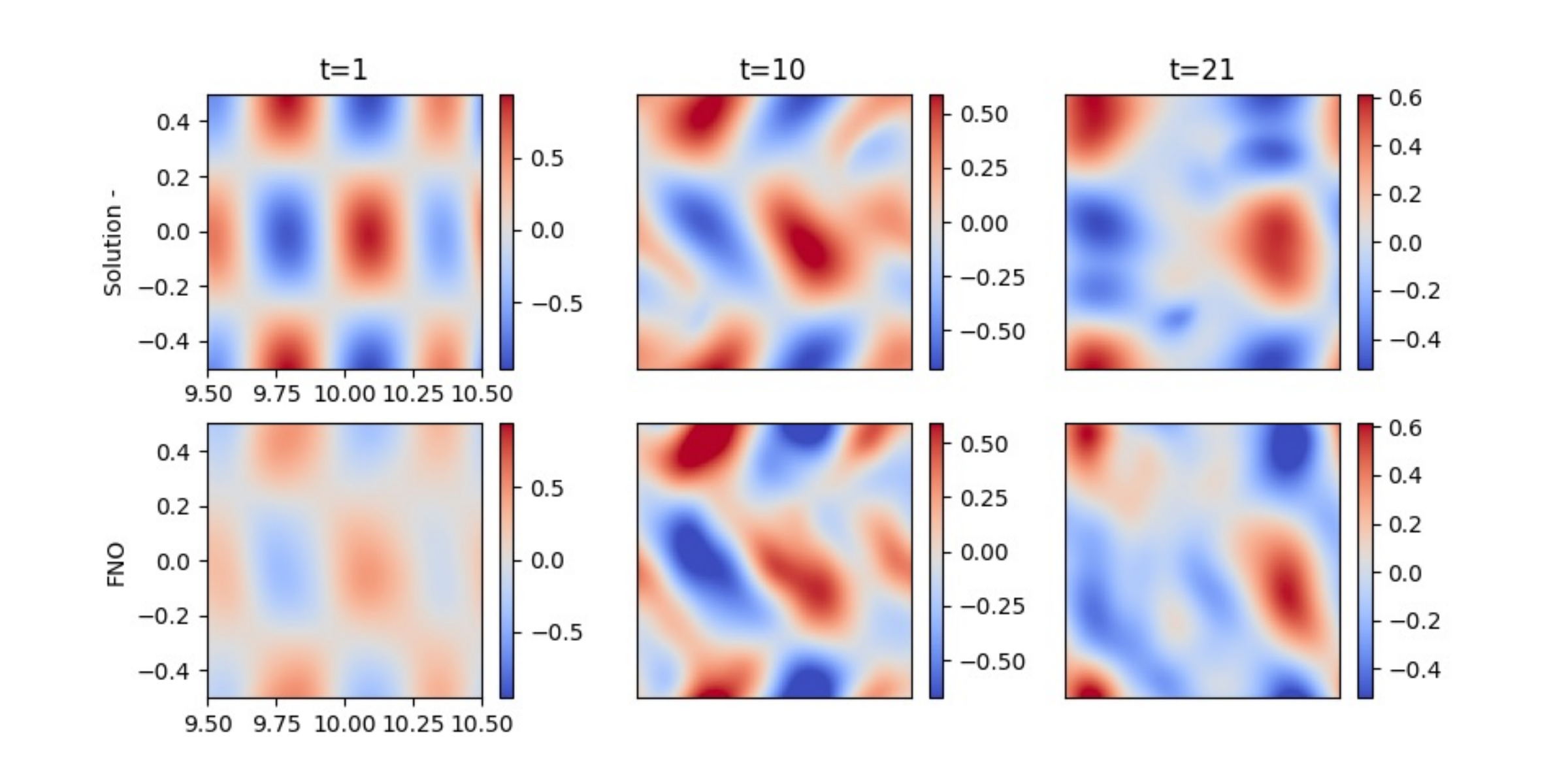}
        \label{fig:ns_p}
    }
    \caption{Navier-Stokes Equations: Temporal evolution of velocity and pressure modelled using the numerical spectral solver (top of the figure) and that of the FNO (bottom of the figure)}
    \label{fig:ns_plots}
\end{figure*}

\subsection{Calibration and Validation}
To perform the calibration as outlined in \cref{sec:experiments}, model predictions are obtained using initial conditions sampled from the domain given in \cref{table: data_generation_ns}. The same bounded domain for the initial condition parameters is used for calibration and validation. 1000 initial conditions are sampled and fed to the model to perform the calibration and 100 samples are gathered for performing the validation.

\begin{figure*}[h!]
    \centering
    \subfigure[PRE: Momentum Equation]{
        \includegraphics[width=0.31\textwidth]{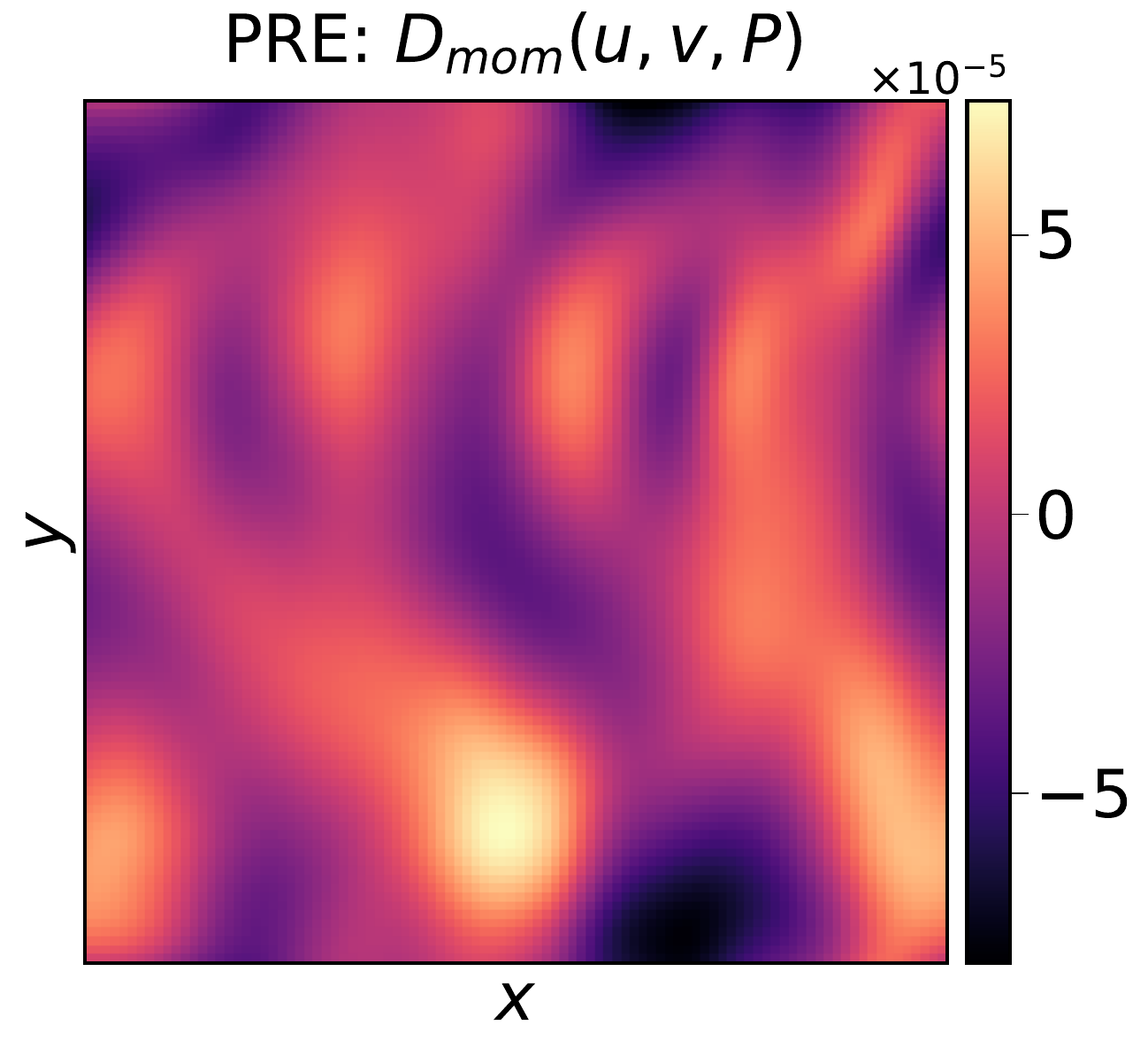}
        \label{fig:res_mom}
    }
    \subfigure[Marginal Bounds]{
        \includegraphics[width=0.3\textwidth]{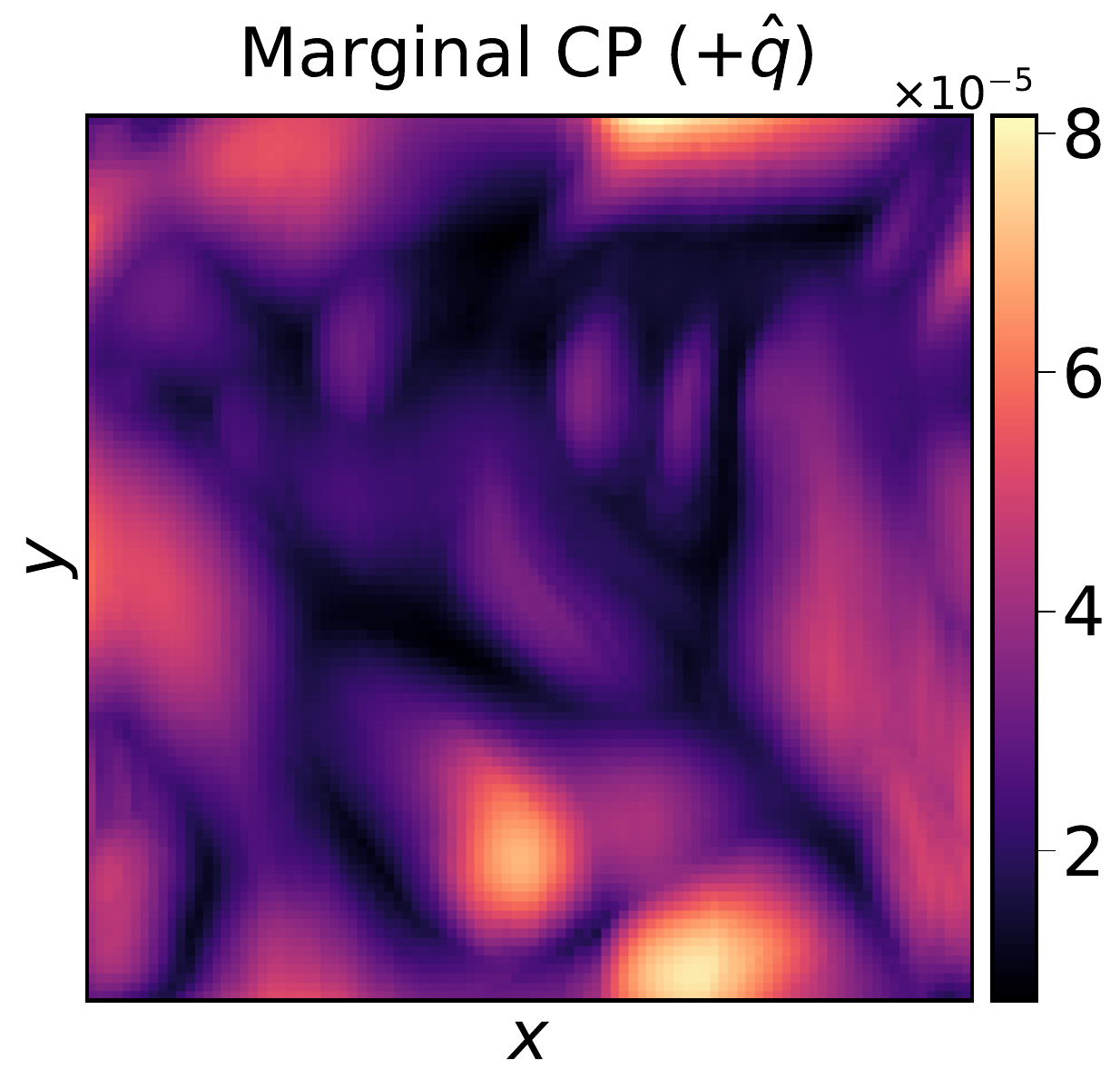}
        \label{fig:marginal_res_mom}
    }
    \subfigure[Joint Bounds]{
        \includegraphics[width=0.31\textwidth]{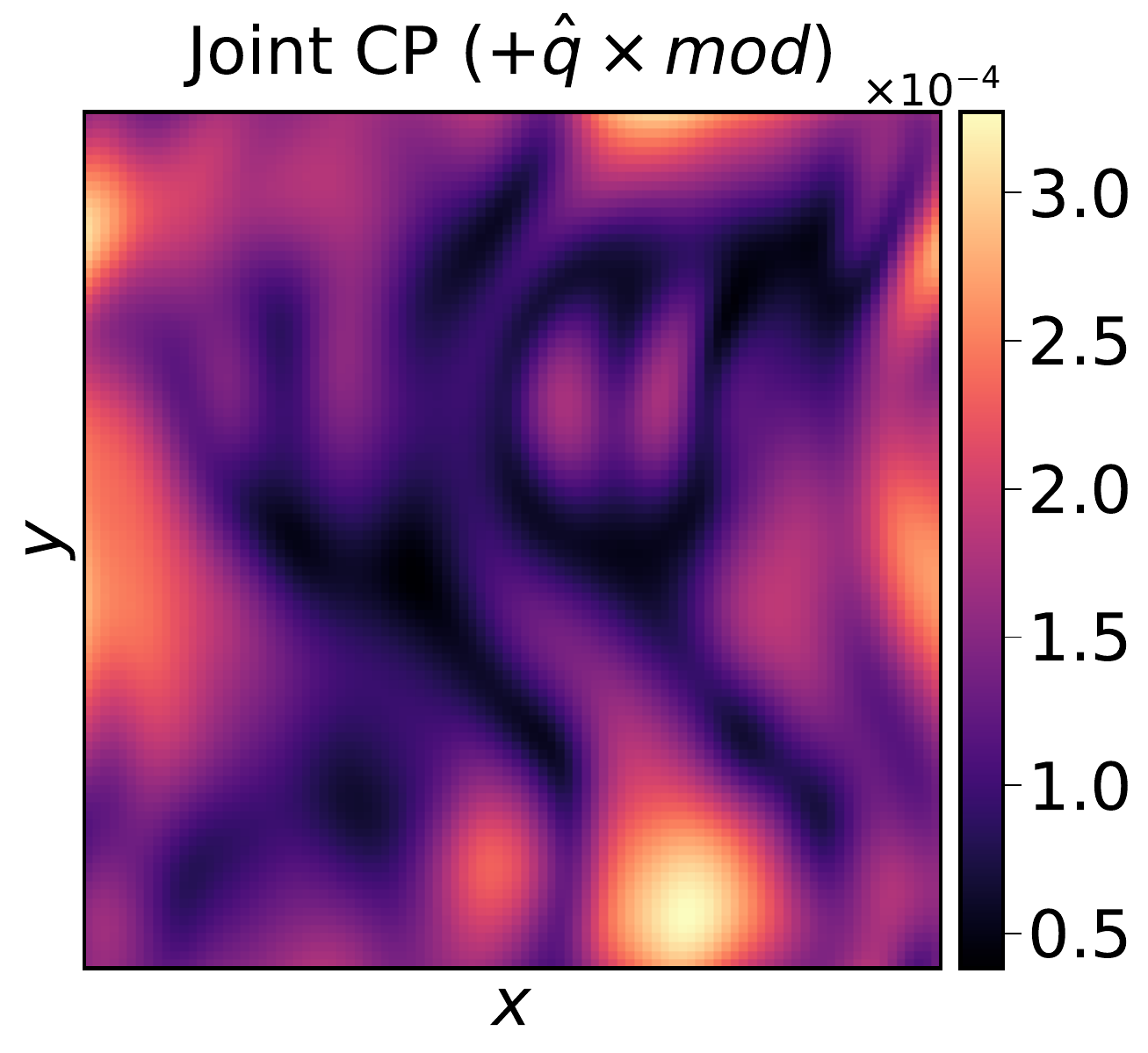}
        \label{fig:joint_res_mom}
    }
    \caption{\textbf{Navier-Stokes:} CP using the Momentum \Cref{eqn:ns_mom} as the PRE for a neural PDE surrogate model trained to model fluid dynamics. \cref{fig:res_mom} depicts the PRE, \cref{fig:marginal_res_mom} depicts the upper error bar, marginal for each cell, while \cref{fig:joint_res_mom} indicates the upper error bar obtained across the entire prediction space. Both are estimated for $90\%$ coverage.}
    \label{fig:cp_ns_mom}
    \vspace{-10pt}
\end{figure*}

\begin{figure*}[htbp]
    \centering
    \subfigure[PRE of the Continuity Equation over the FNO prediction]{
        \includegraphics[width=0.32\textwidth]{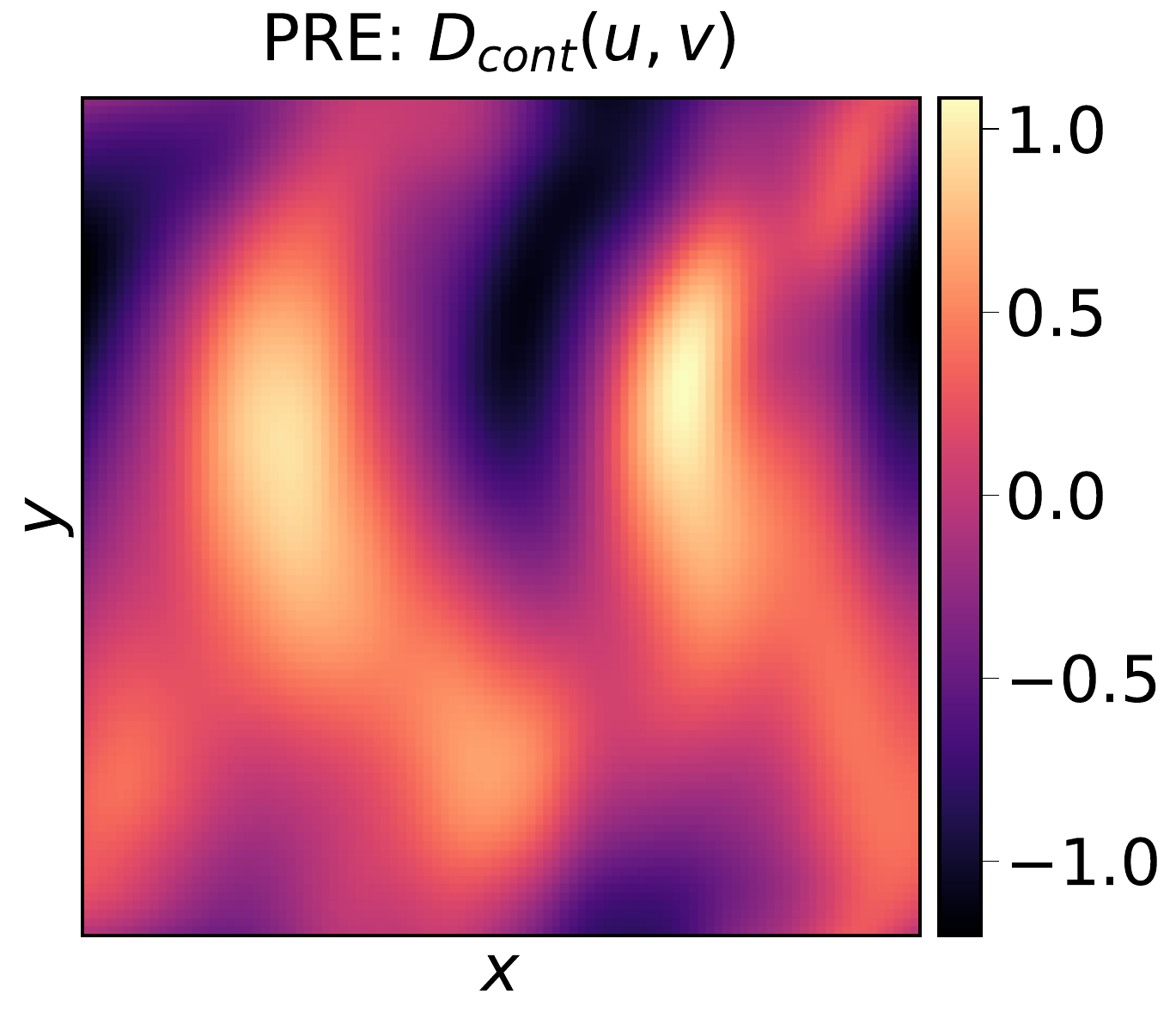}
        \label{fig:ns_res_cont}
    }
    \subfigure[Upper error bar indicating 90\% coverage with marginal-CP]{
        \includegraphics[width=0.305\textwidth]{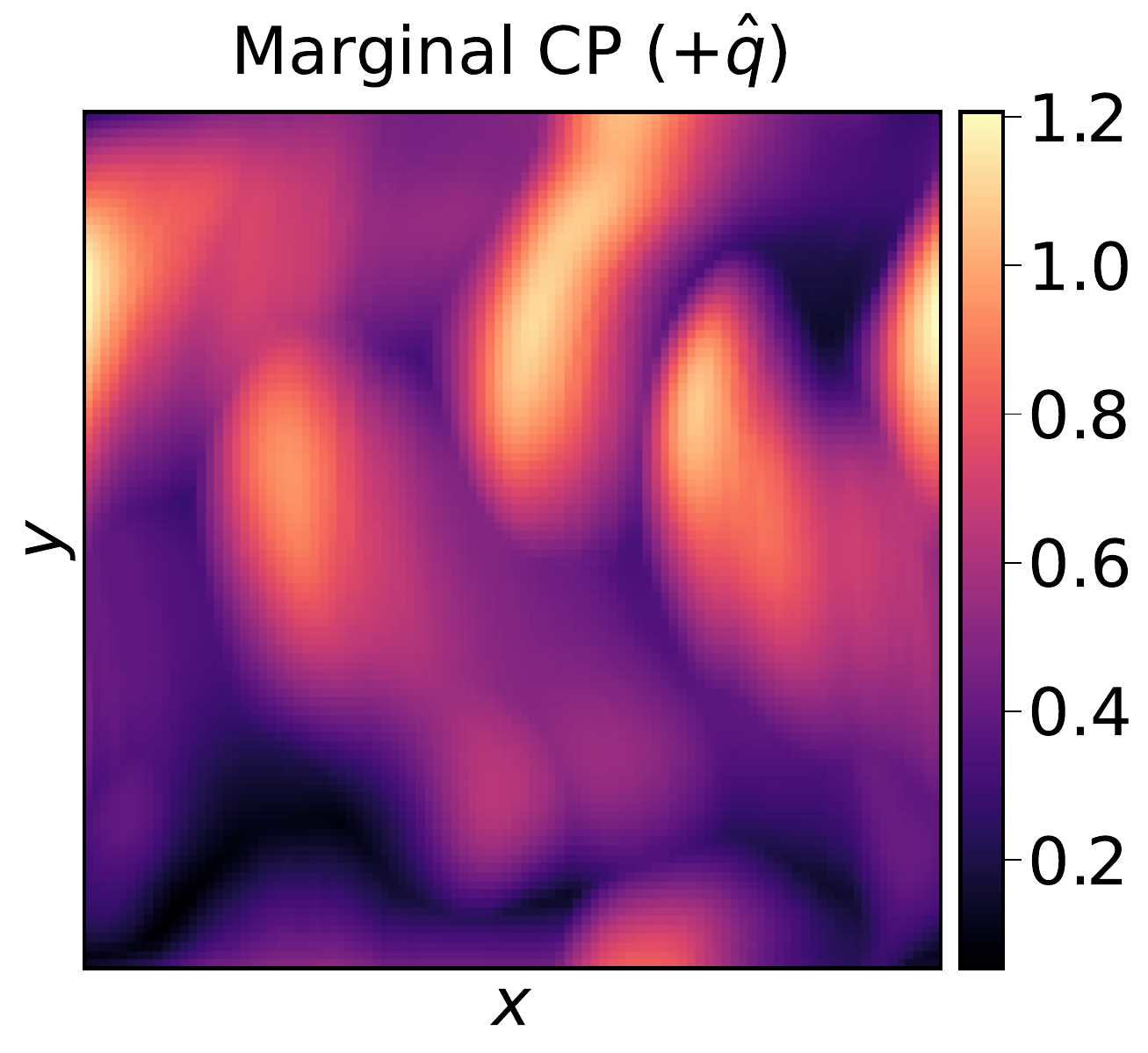}
        \label{fig:ns_marginal_res_cont}
    }
    \subfigure[Upper error bar indicating 90\% coverage with joint-CP]{
        \includegraphics[width=0.31\textwidth]{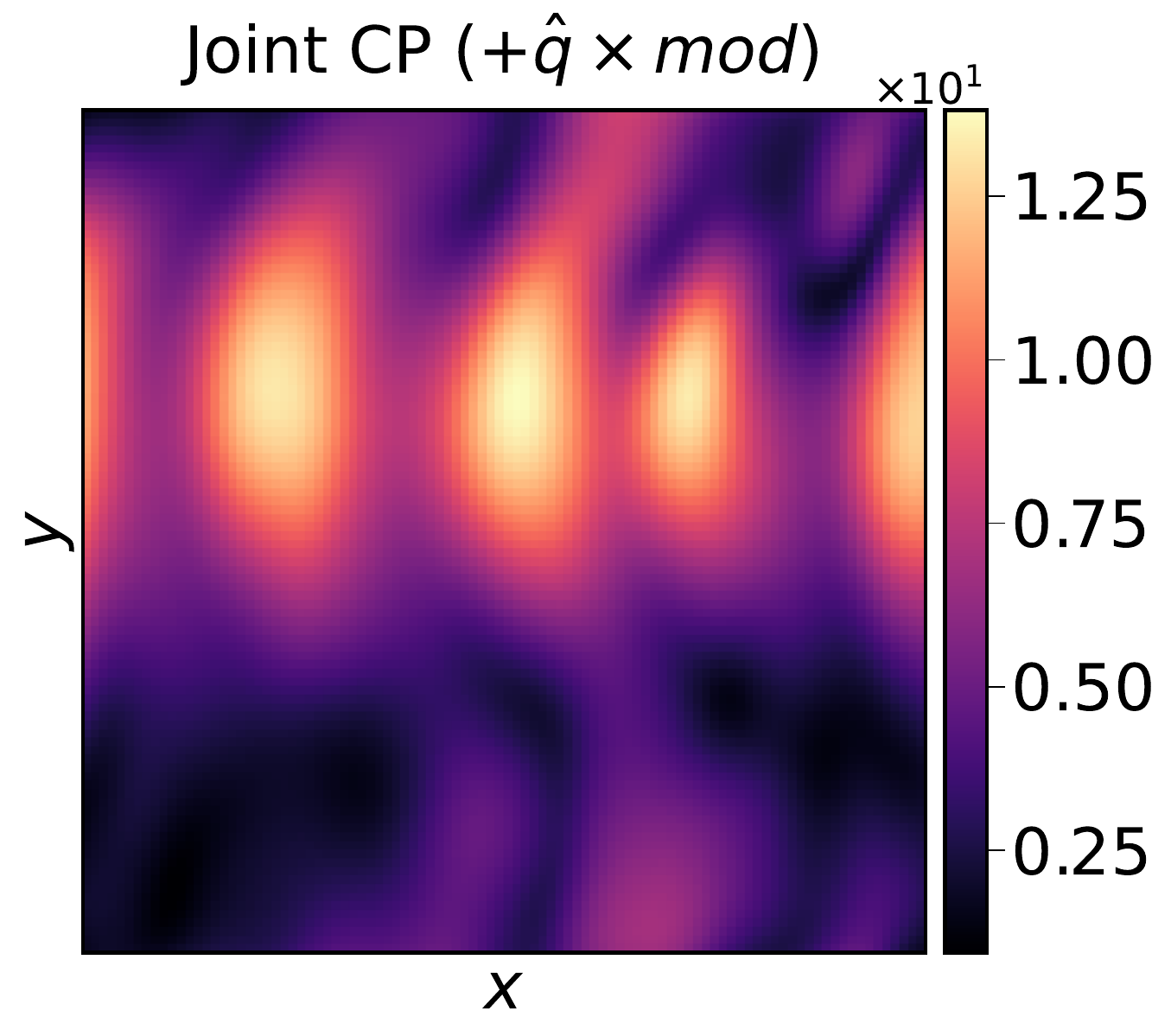}
        \label{fig:ns_joint_res_cont}
    }
    \caption{\textbf{Navier-Stokes:} CP using the Continuity \cref{eqn:ns_cont} as the PRE for a neural PDE surrogate model trained to model fluid dynamics. \cref{fig:res_mom} depicts the PRE, \cref{fig:marginal_res_mom} depicts the upper error bar, marginal for each cell, while \cref{fig:joint_res_mom} indicates the upper error bar obtained across the entire prediction space. Both are estimated for $90\%$ coverage.}
    \label{fig:cp_ns_cont}
\end{figure*}

\clearpage
\section{2D Magnetohydrodynamics} 
\label{appendix:mhd}
Consider the Ideal MHD equations in 2D: 

\begin{align*}
    \pdv{\rho}{t} + \va{\nabla} \cdot (\rho \va{v}) &= 0 ,\\
    \rho \bigg( \pdv{\va{v}}{t} + \va{v} \cdot \nabla \va{v} \bigg )  &= \frac{1}{\mu_0}\va{B} \times (\va{\nabla} \times \va{B}) -  \nabla P,\\
    \dv{t} \Bigg( \frac{P}{\rho^\gamma} \Bigg) &= 0, \\
    \pdv{\va{B}}{t} &= \va{\nabla} \times (\va{v} \times \va{B}), \\
    \va{\nabla} \cdot \va{B} &= 0 ,
\end{align*}

with initial conditions:

\begin{align}
    u &= -sin(2a \pi Y), \\
    v &= sin(2b \pi X), \\
    P &= \frac{\gamma}{4c\pi},
\end{align}

where the density $(\rho)$, velocity field $(\va{v}=[u,v])$ and the pressure of plasma is modelled under a magnetic field $(\va{B} = [B_x, B_y])$ across a spatio-temporal domain $x,y \in [0,1]^2, \; t \in [0,5]$. $\mu_0$ is taken to be the magnetic permeability of free space. The system is bounded with periodic boundary conditions within the domain. The dataset is built by performing a Latin hypercube scan across the defined domain for the parameters $a, b, c$,  which parameterises the initial velocity fields for each simulation. We generate 500 simulation points, each one with its initial condition and use it for training. The solver is built using a finite volume method outlined in \href{https://github.com/pmocz/constrainedtransport-python}{Philip Mocz's code}.

Each data point, as in each simulation is generated with a different initial condition as described above. The parameters of the initial conditions are sampled from within the domain as given in \cref{table: data_generation_ns}. Each simulation is run up until wallclock time reaches $0.5$ with a varying temporal discretisation. The spatial domain is uniformly discretised into 128 spatial units in the x and y axes. The temporal domain is downsampled to factor in every $25^{th}$ time instance. 

\begin{table}[h!]
\caption{Domain range of initial condition parameters for the 2D MHD equations}
\label{table: data_generation_mhd}
\vspace{0.5cm}
  \centering
  \begin{tabular}{lll}
  \hline 
  Parameter & Domain & Type \\ 
  \hline\\
    Velocity x-axis  $(a)$ & $[0.5, 1.0]$ & Continuous  \\
    Velocity y-axis $(b)$ & $[0.5, 1.0]$ & Continuous \\
    Pressure $(c)$ & $[0.5, 1.0]$ & Continuous \\

  \hline
  \end{tabular}
\end{table}

\subsection{Model and Training}

We train a 2D multi-variable FNO to map the spatio-temporal evolution of the 6 field variables collectively. We deploy an auto-regressive structure that performs time rollouts allowing us to map the initial distribution recursively up until the $20^{th}$ time instance with a step size of 1.  Each Fourier layer has 8 modes and a width of 16. The FNO architecture can be found in \cref{table: fno_arch_mhd}. We employ a linear range normalisation scheme, placing the field values between -1 and 1. Each model is trained for up to 500 epochs using the Adam optimiser \citep{adam} with a step-decaying learning rate. The learning rate is initially set to 0.005 and scheduled to decrease by half after every 100 epochs. The model was trained using an LP-loss \citep{Gopakumar_2024}. The performance of the trained model can be visualised in \cref{fig:mhd_plots_a,fig:mhd_plots_b}.

\begin{table}[ht]
  \caption{Architecture of the 2D FNO deployed for modelling 2D MHD equations}
  \label{table: fno_arch_mhd}
  \resizebox{\columnwidth}{!}{
  \centering
  \begin{tabular}{lll}
    \hline
    Part     & Layer     &  Output Shape \\
    \hline \\
    Input & - & (50, 1, 128, 128, 1) \\    
    Lifting & \texttt{Linear} & (50, 1, 128, 128 16) \\
    Fourier 1 & \texttt{Fourier2d/Conv2d/Add/GELU} & (50, 1, 16, 128, 128)\\
    Fourier 2 & \texttt{Fourier2d/Conv2d/Add/GELU} & (50, 1, 16, 128, 128)\\
    Fourier 3 & \texttt{Fourier2d/Conv2d/Add/GELU} & (50, 1, 16, 128, 128)\\
    Fourier 4 & \texttt{Fourier2d/Conv2d/Add/GELU} & (50, 1, 16, 128, 128)\\
    Fourier 5 & \texttt{Fourier2d/Conv2d/Add/GELU} & (50, 1, 16, 128, 128)\\
    Fourier 6 & \texttt{Fourier2d/Conv2d/Add/GELU} & (50, 1, 16, 128, 128)\\
    Projection 1 & \texttt{Linear} & (50, 1, 128, 128, 256) \\
    Projection 2 & \texttt{Linear} & (50, 1, 128, 128 1) \\
    \hline
  \end{tabular}
  }
\end{table}
\begin{figure}[htbp]
    \centering
    \subfigure[Spatio-temporal evolution of density $(\rho)$]{
        \includegraphics[width=\columnwidth]{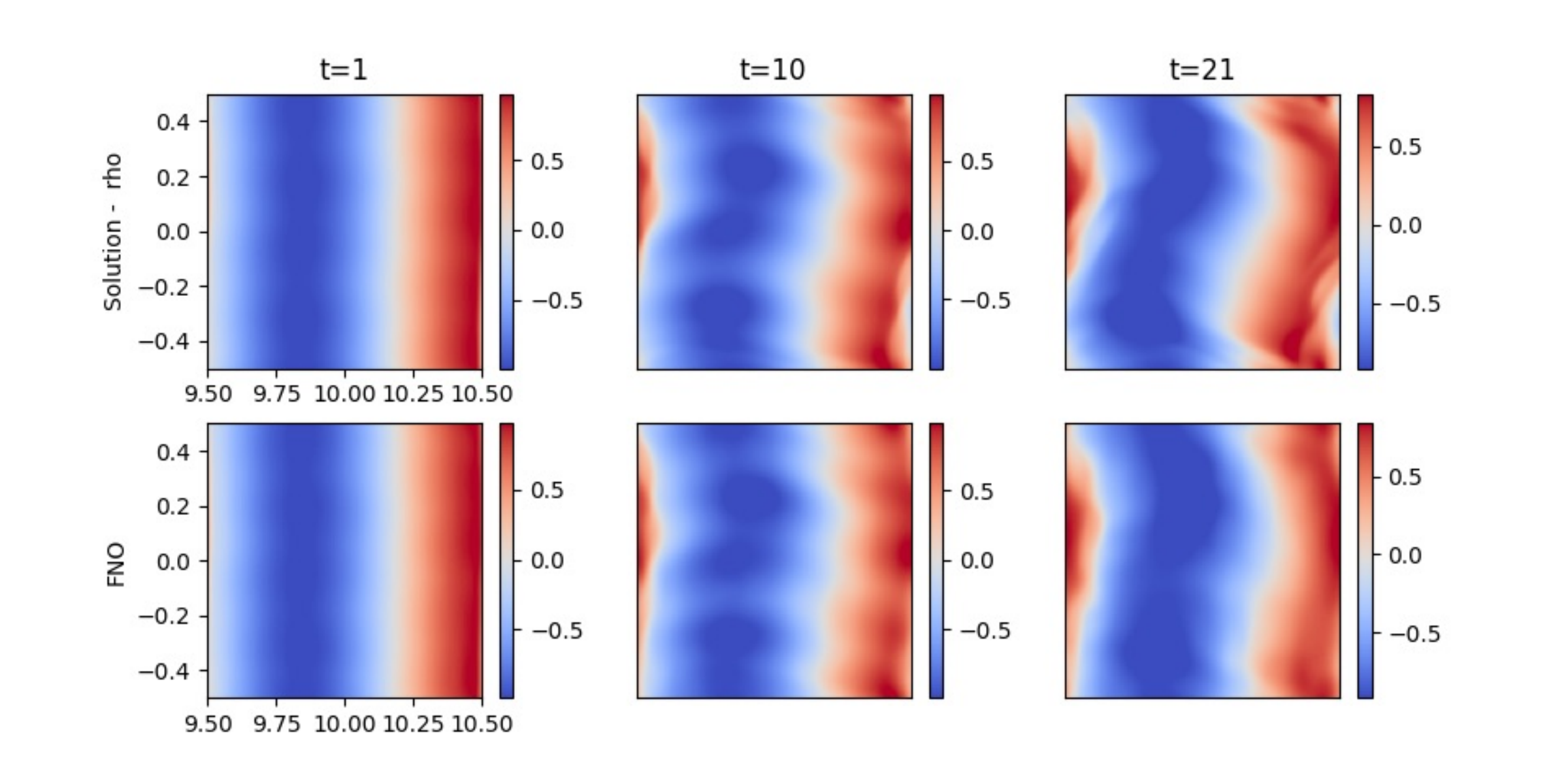}
        \label{fig:mhd_rho}
    }
    \subfigure[Spatio-temporal evolution of the horizontal component of velocity $(u)$]{
        \includegraphics[width=\columnwidth]{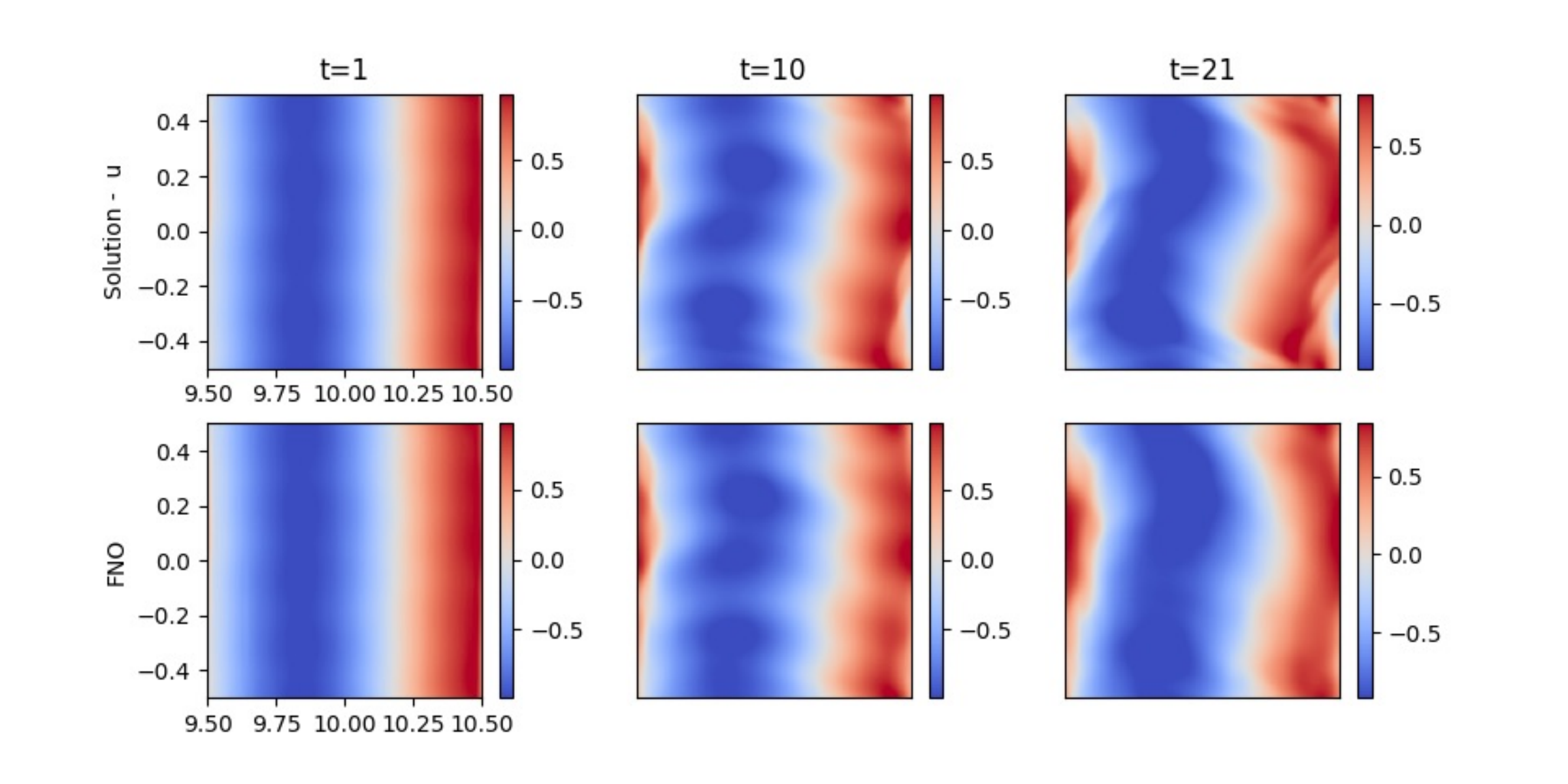}
        \label{fig:mhd_u}
    }
    \subfigure[Spatio-temporal evolution of the vertical component of velocity $(v)$]{
        \includegraphics[width=\columnwidth]{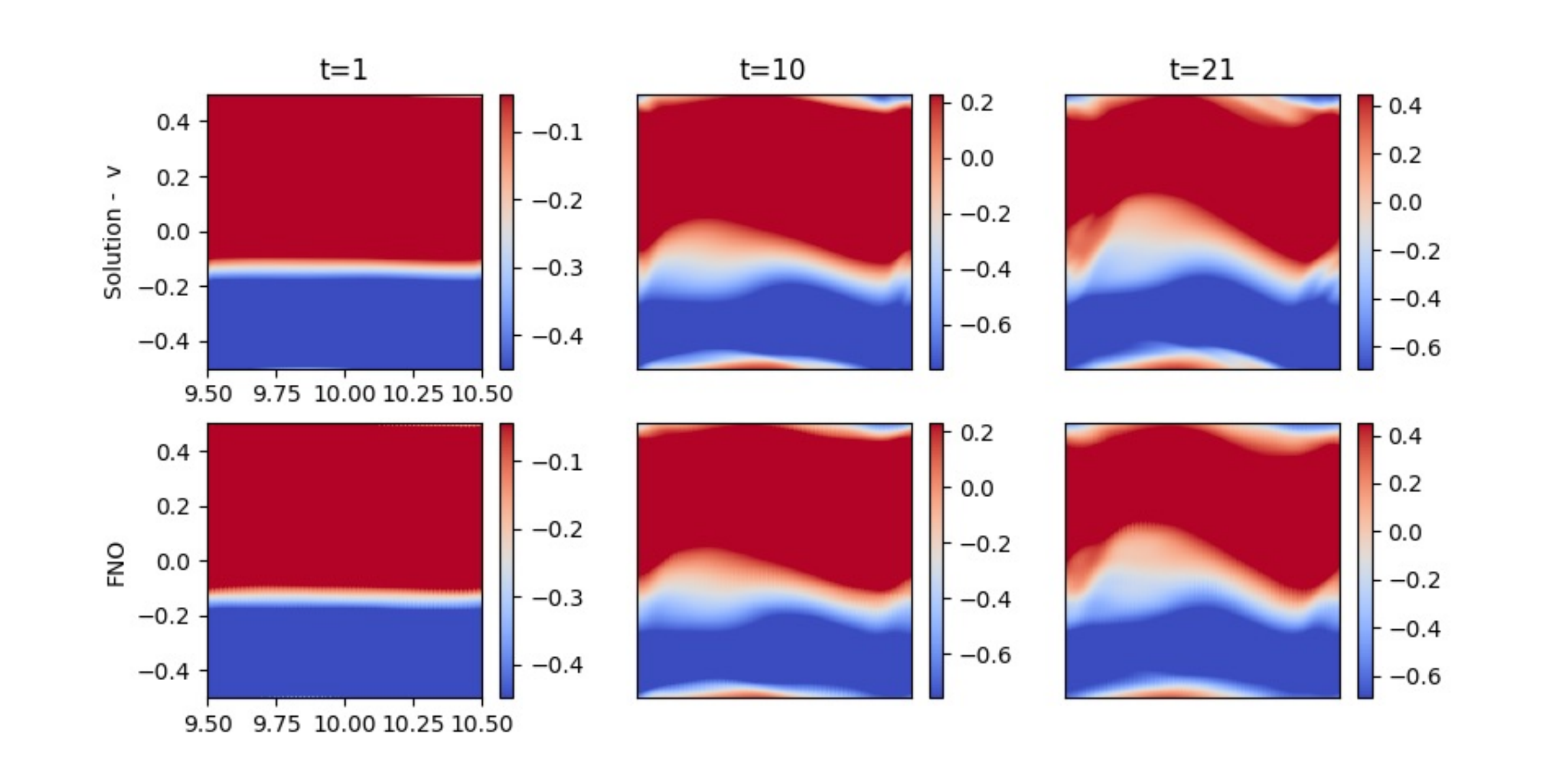}
        \label{fig:mhd_v}
    }
    \caption{MHD Equations: Temporal evolution of velocity and pressure modelled using the numerical solver (top of the figure) and that of the FNO (bottom of the figure). (Continued on next page)}
    \label{fig:mhd_plots_a}
\end{figure}

% Second page
\begin{figure}[htbp]
    \centering
    \subfigure[Spatio-temporal evolution of the pressure field $(P)$]{
        \includegraphics[width=\columnwidth]{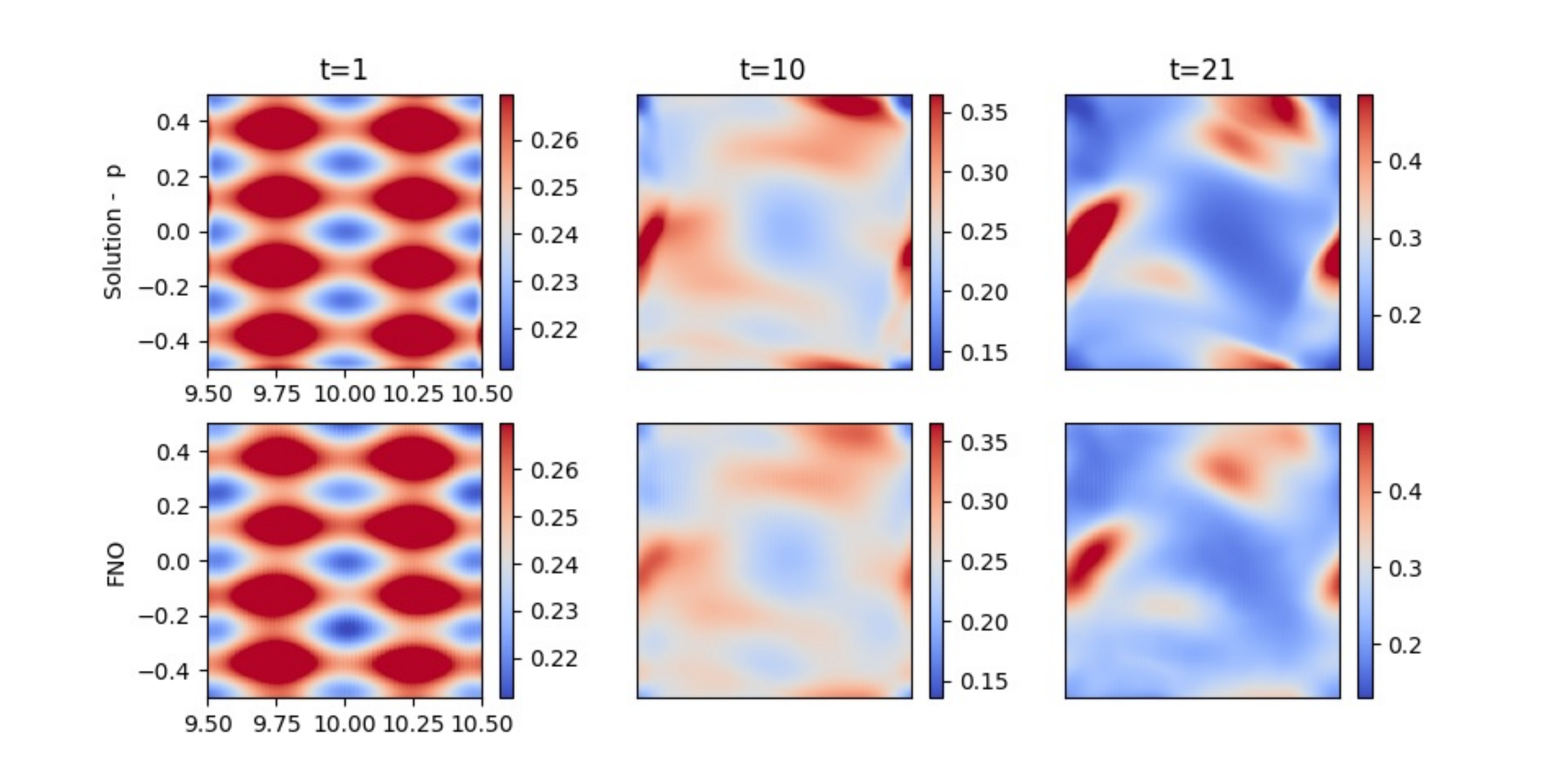}
        \label{fig:mhd_p}
    }
    \subfigure[Spatio-temporal evolution of the horizontal component of the magnetic field $(B_x)$]{
        \includegraphics[width=\columnwidth]{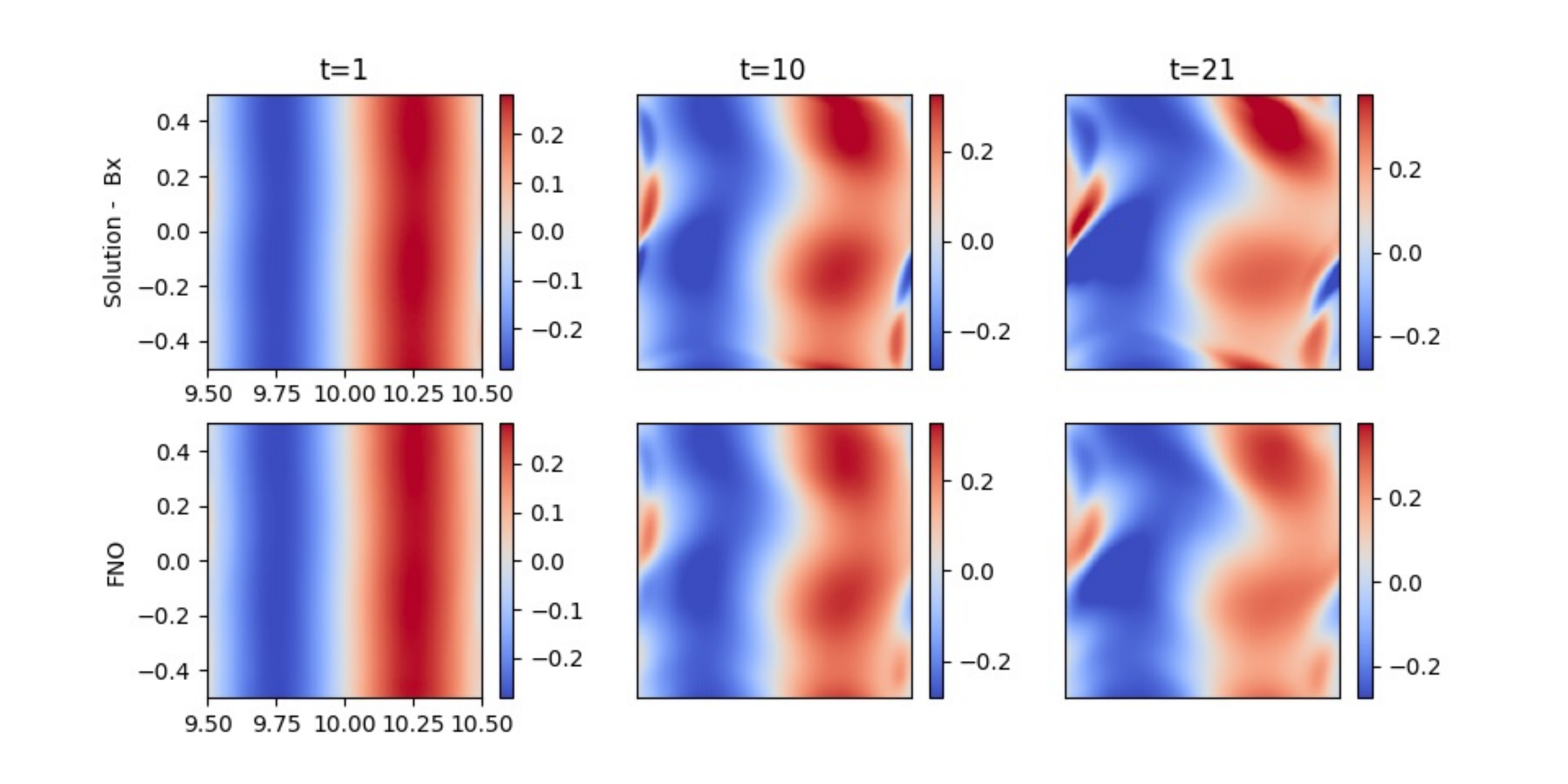}
        \label{fig:mhd_bx}
    }
    \subfigure[Spatio-temporal evolution of the vertical component of the magnetic field $(B_y)$]{
        \includegraphics[width=\columnwidth]{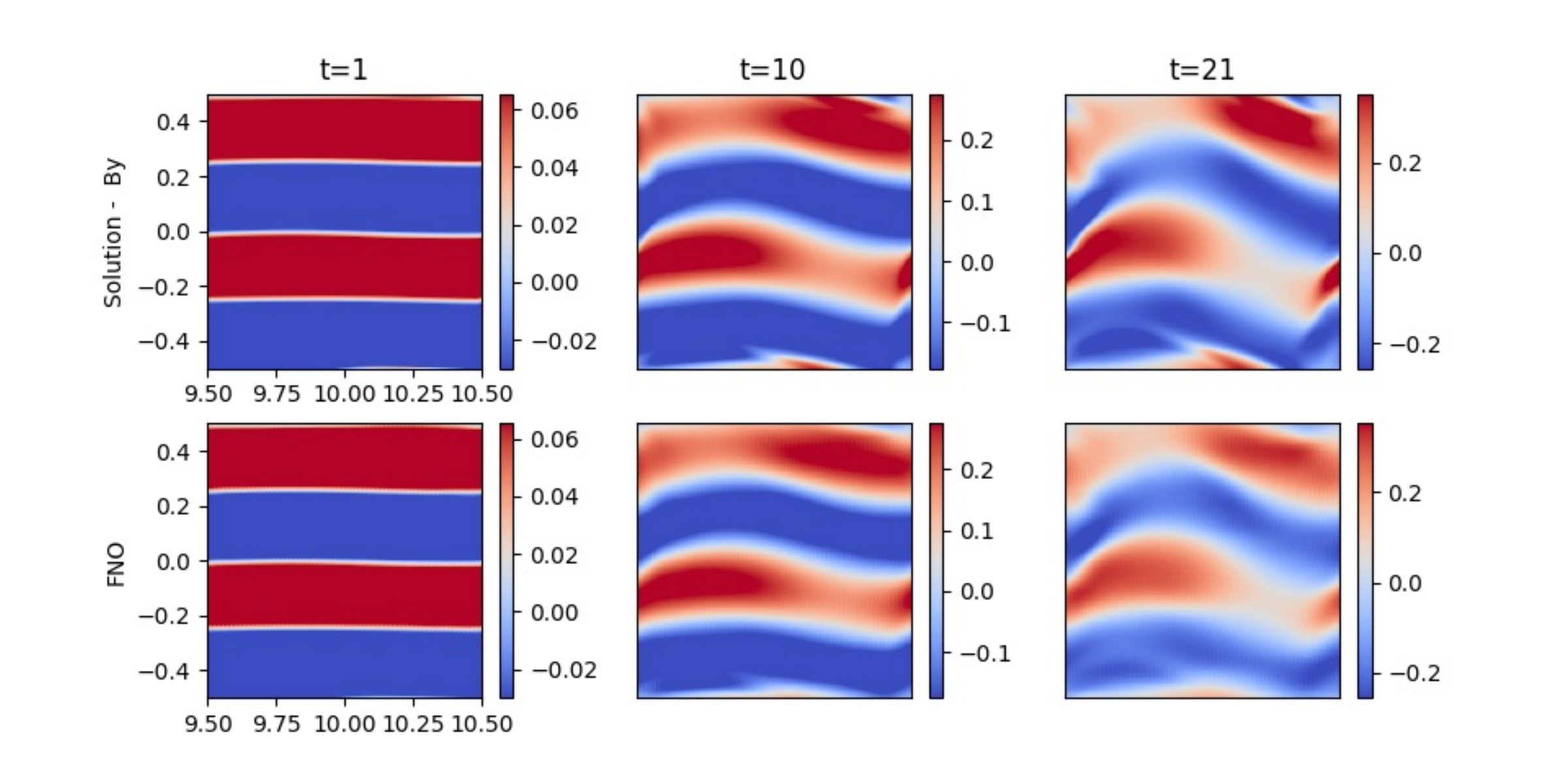}
        \label{fig:mhd_by}
    }
    \caption{MHD Equations: Temporal evolution of velocity and pressure modelled using the numerical solver (top of the figure) and that of the FNO (bottom of the figure). (Continued from previous page)}
    \label{fig:mhd_plots_b}
\end{figure}

\subsection{Calibration and Validation}
To perform the calibration as outlined in \cref{sec:experiments}, model predictions are obtained using initial conditions sampled from the domain given in \cref{table: data_generation_ns}. The same bounded domain for the initial condition parameters is used for calibration and validation. 100 initial conditions are sampled and fed to the model to perform the calibration and 100 samples are gathered for validation.

\begin{figure*}
    \centering
    \includegraphics[width=\textwidth]{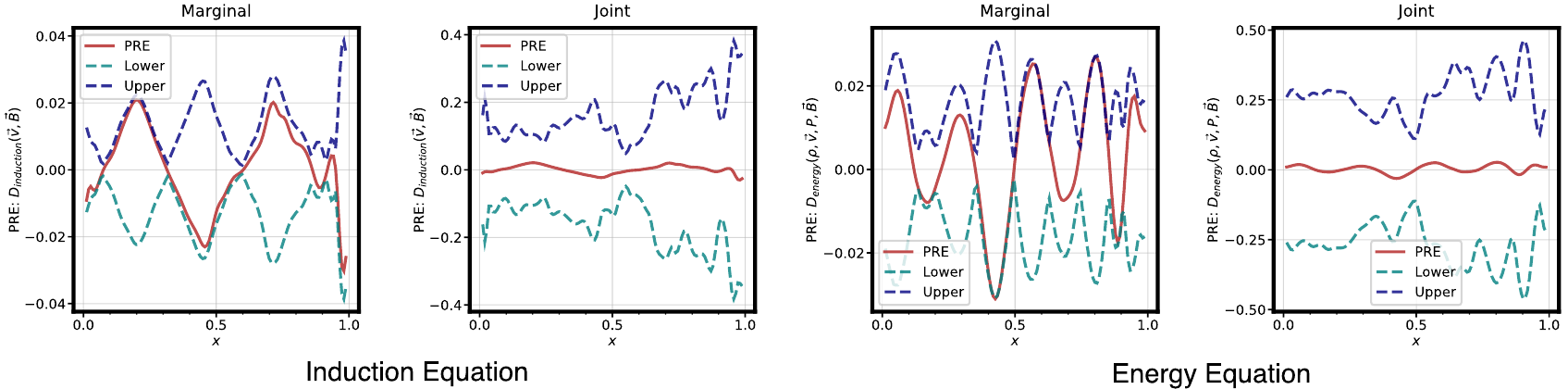}
    \caption{\textbf{MHD:} Slice plots along the x-axis (sliced at y = 0.5m) indicating the marginal and joint coverage (90\%) obtained over the neural PDE modelling the MHD equations using the induction equation \cref{eqn:induction} (on the left) and the energy equation \cref{eqn:energy} (on the right). Marginal coverage, evaluated cell-wise, generates tight bounds to the PRE, whereas joint coverage spanning across the spatio-temporal domain introduces wider bounds.}
    
    % (From left to right) Marginal PDE (FNO) prediction at the last time instance, physics residual error of the prediction, Upper error bars obtained by performing marginal-CP and joint-CP respectively (90\% coverage). For brevity, we have only shown the upper error bars of the symmetric prediction sets. $mod$ represents the modulation function in \cref{eq:ncf_pre}.}
    \label{fig:cp_mhd}
\end{figure*}

\begin{figure*}[htbp]
    \centering
    \subfigure[PRE of the Induction Equation \cref{eqn:induction} over the FNO prediction]{
        \includegraphics[width=0.31\textwidth]{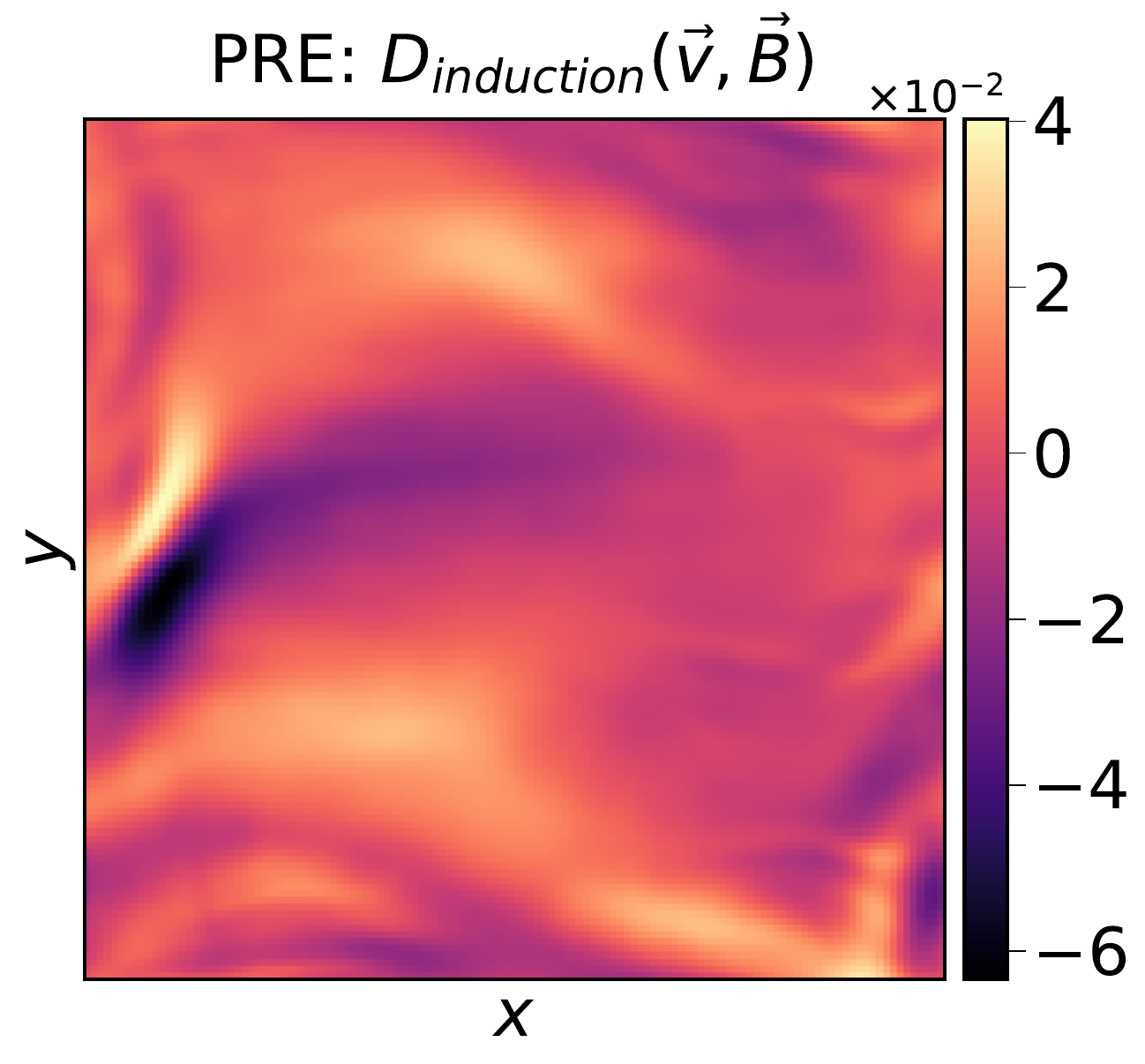}
        \label{fig:res_induction}
    }
    \subfigure[Upper error bar indicating 90\% coverage with marginal-CP]{
        \includegraphics[width=0.30\textwidth]{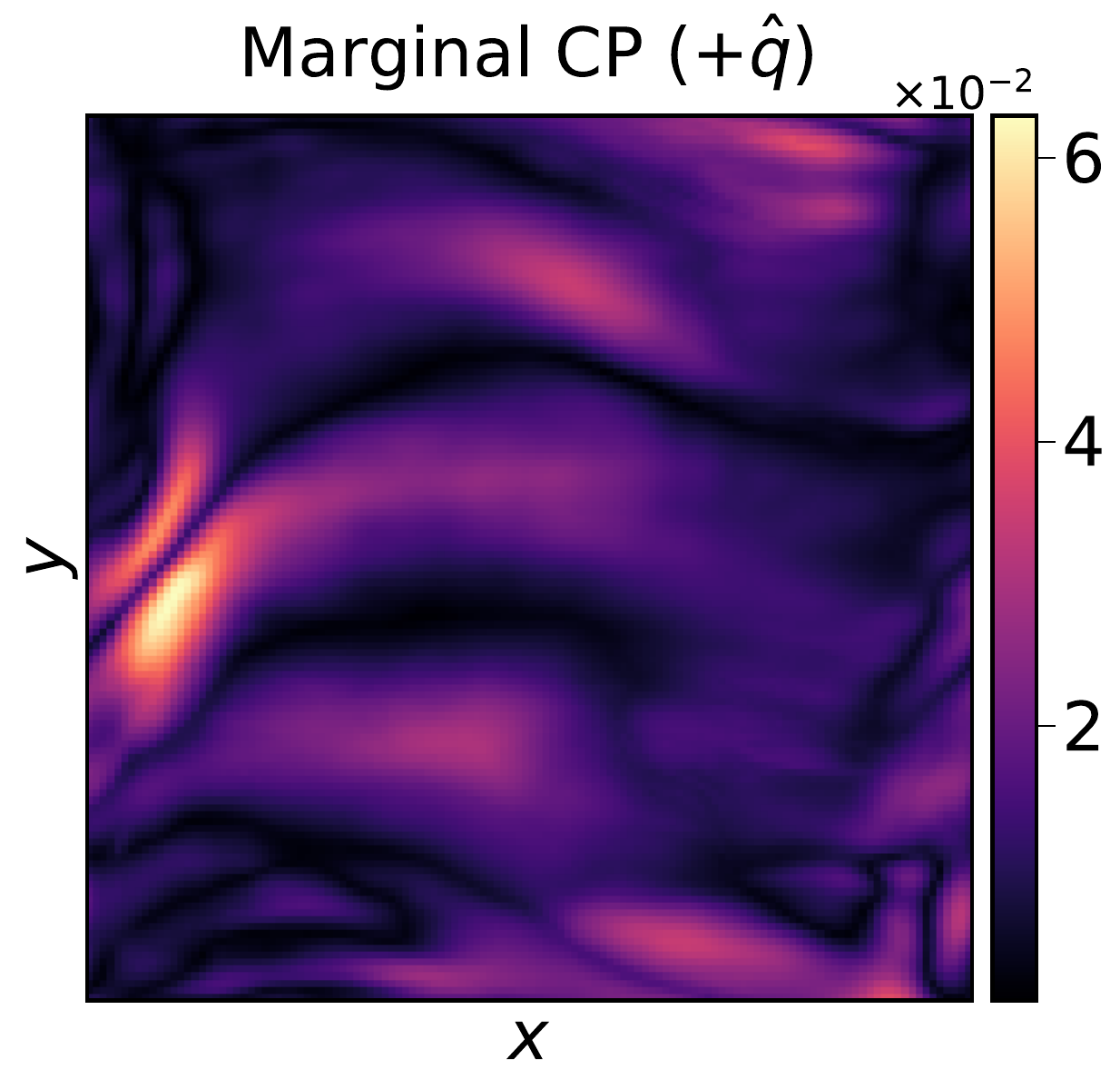}
        \label{fig:marginal_res_ind}
    }
    \subfigure[Upper error bar indicating 90\% coverage with joint-CP]{
        \includegraphics[width=0.33\textwidth]{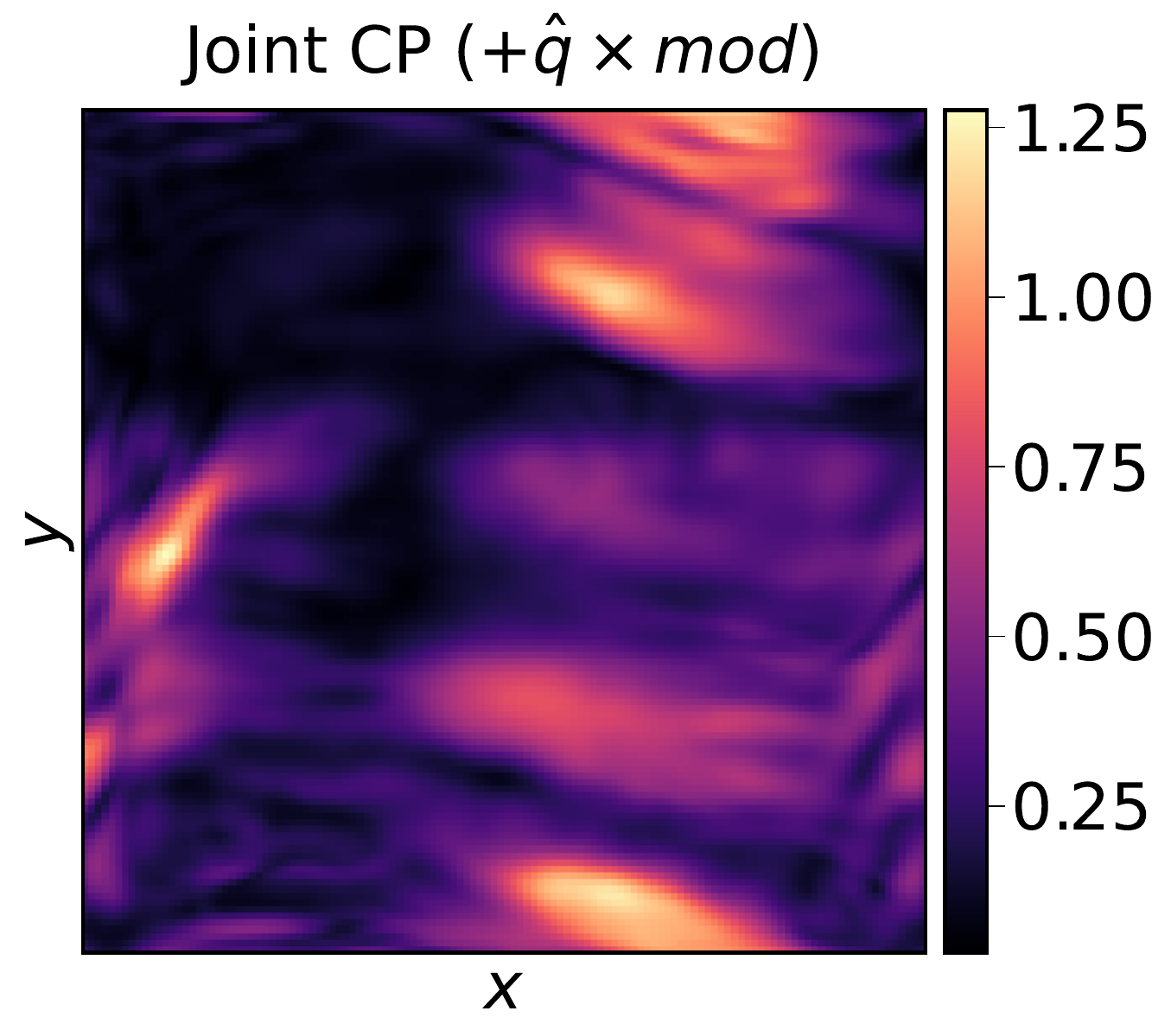}
        \label{fig:joint_res_ind}
    }
    \caption{\textbf{MHD:} CP using the Induction \cref{eqn:induction} as the PRE for a neural PDE surrogate model solving the Ideal MHD equations. The last time instance of the prediction is shown.}
    \label{fig:cp_mhd_ind}
\end{figure*}

\begin{figure*}[htbp]
    \centering
    \subfigure[PRE of the Energy Equation \cref{eqn:energy} over the FNO prediction]{
        \includegraphics[width=0.31\textwidth]{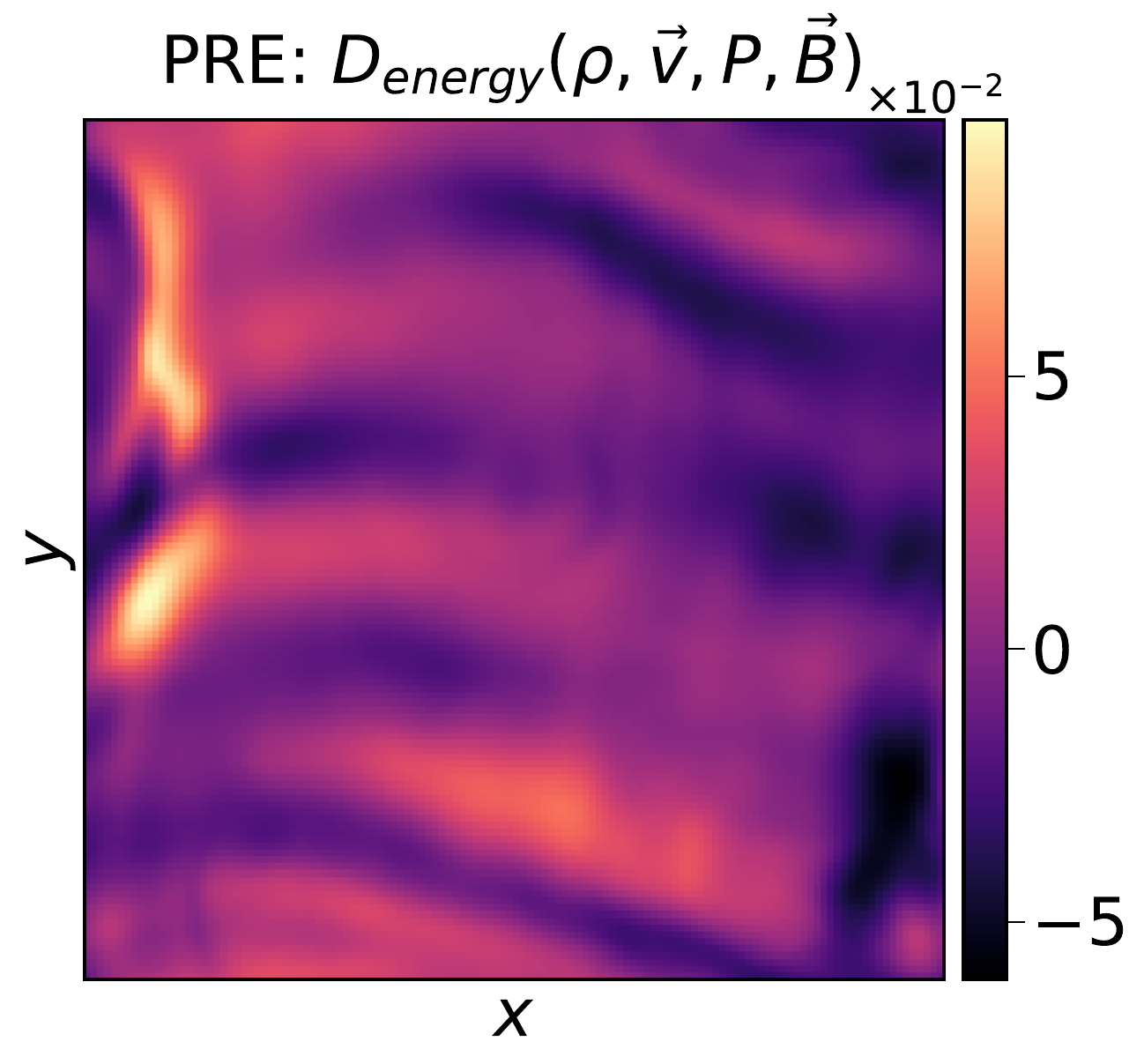}
        \label{fig:res_energy}
    }
    \subfigure[Upper error bar indicating 90\% coverage with marginal-CP]{
        \includegraphics[width=0.31\textwidth]{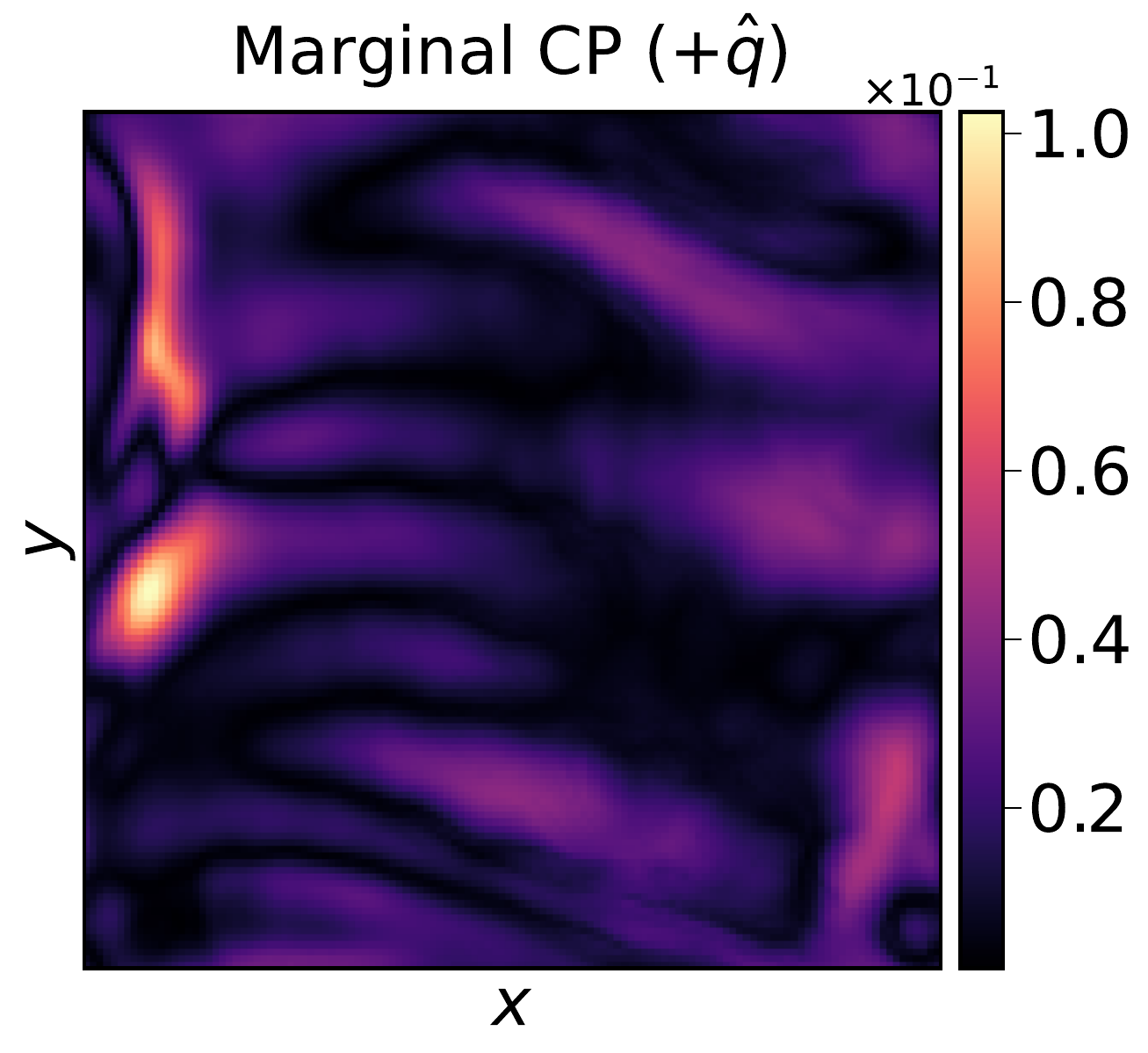}
        \label{fig:marginal_res_energy}
    }
    \subfigure[Upper error bar indicating 90\% coverage with joint-CP]{
        \includegraphics[width=0.31\textwidth]{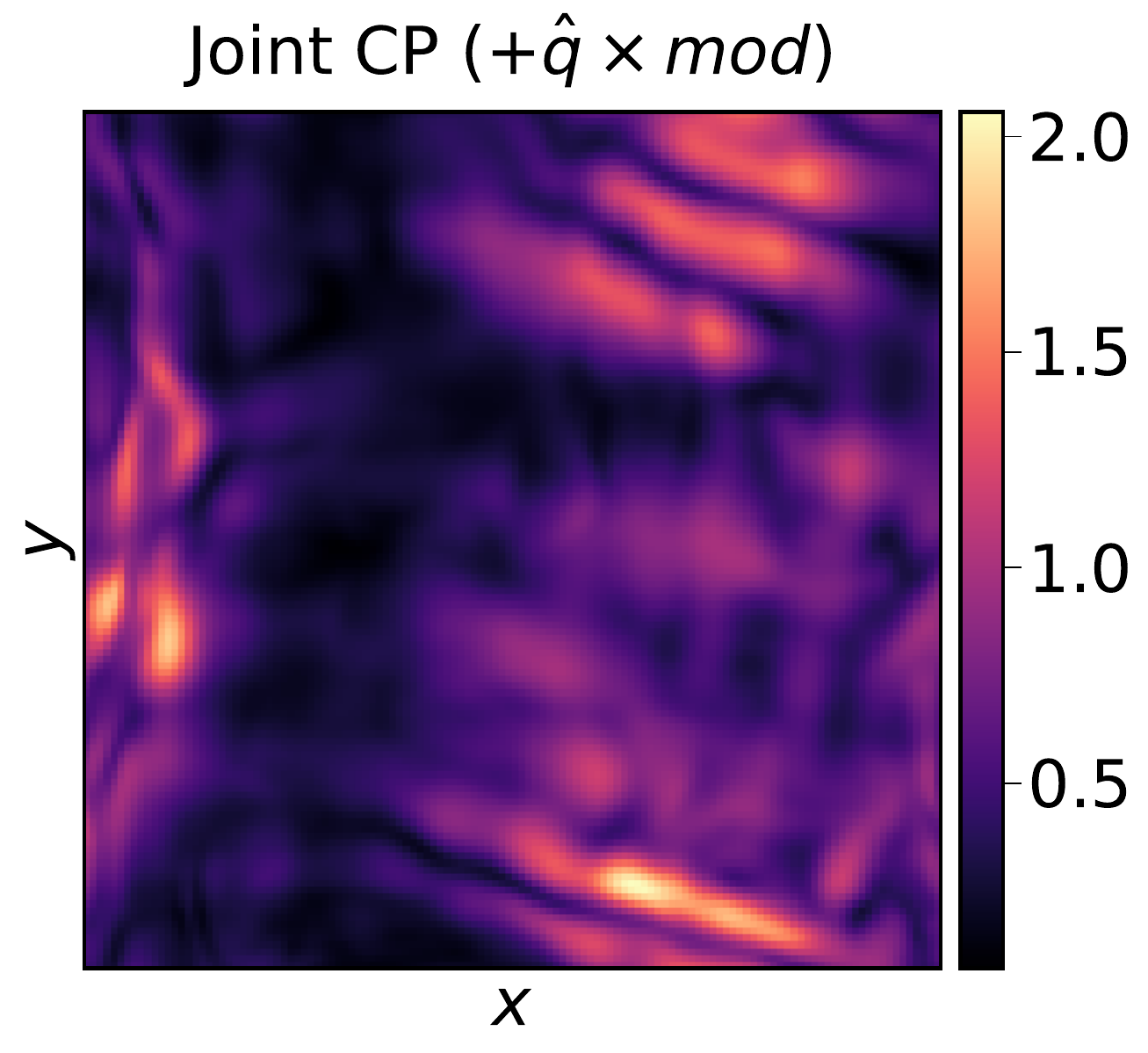}
        \label{fig:joint_res_energy}
    }
    \caption{\textbf{MHD:} CP using the Energy \Cref{eqn:energy} as the PRE for a neural PDE surrogate model solving the Ideal MHD equations. The last time instance of the prediction is shown. Error bars obtained using joint CP are an order of magnitude higher than that obtained by marginal CP as it is measured across the entire spatio-temporal domain rather than for each cell.}
    \label{fig:cp_mhd_energy}
\end{figure*}

\begin{figure*}[!ht]
    \centering
    \subfigure[PRE of the Continuity Equation \cref{eqn:mass_cont} over the FNO prediction]{
        \includegraphics[width=0.3\textwidth]{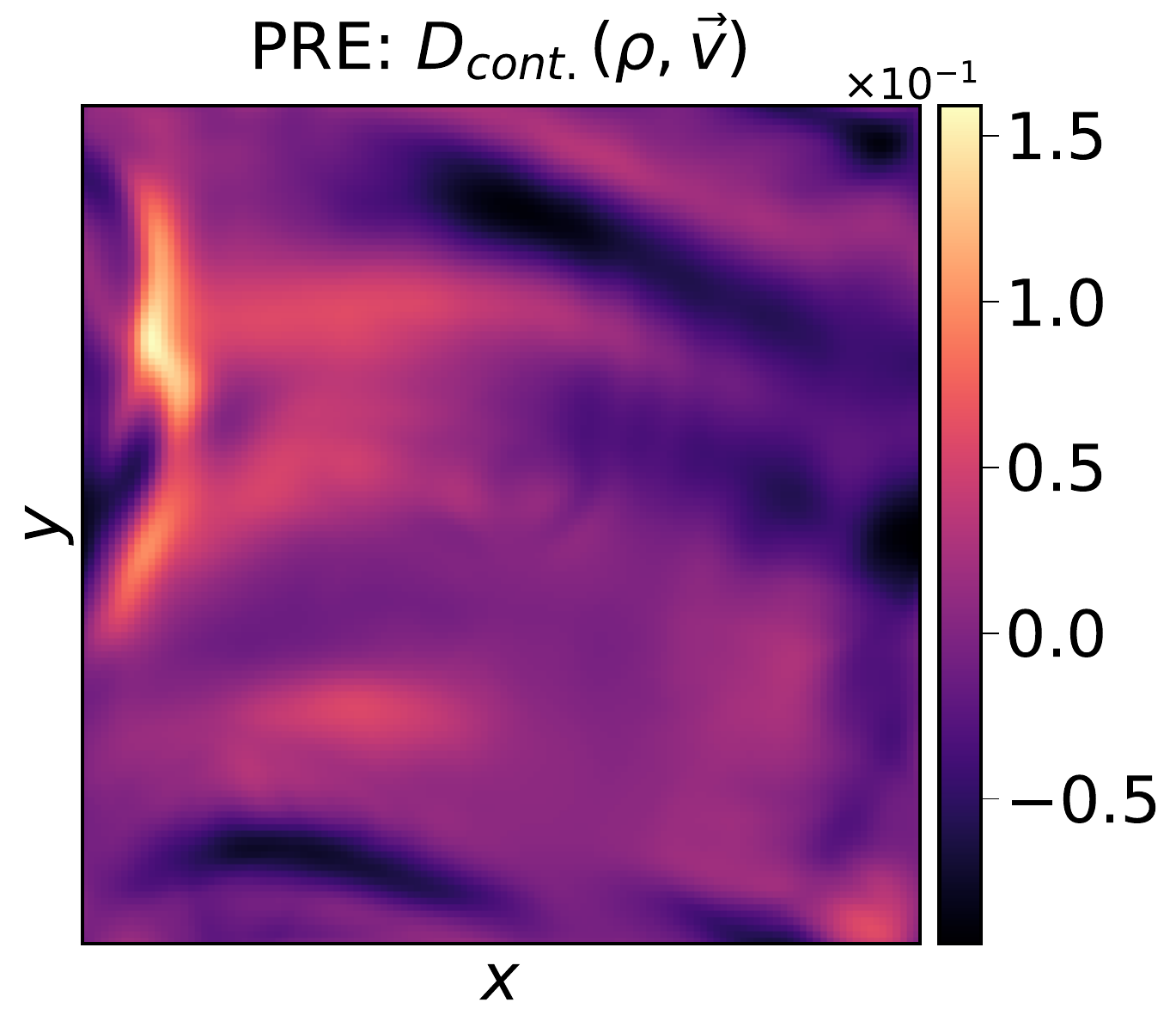}
        \label{fig:res_continuity}
    }
    \subfigure[Upper error bar indicating 90\% coverage with marginal-CP]{
        \includegraphics[width=0.3\textwidth]{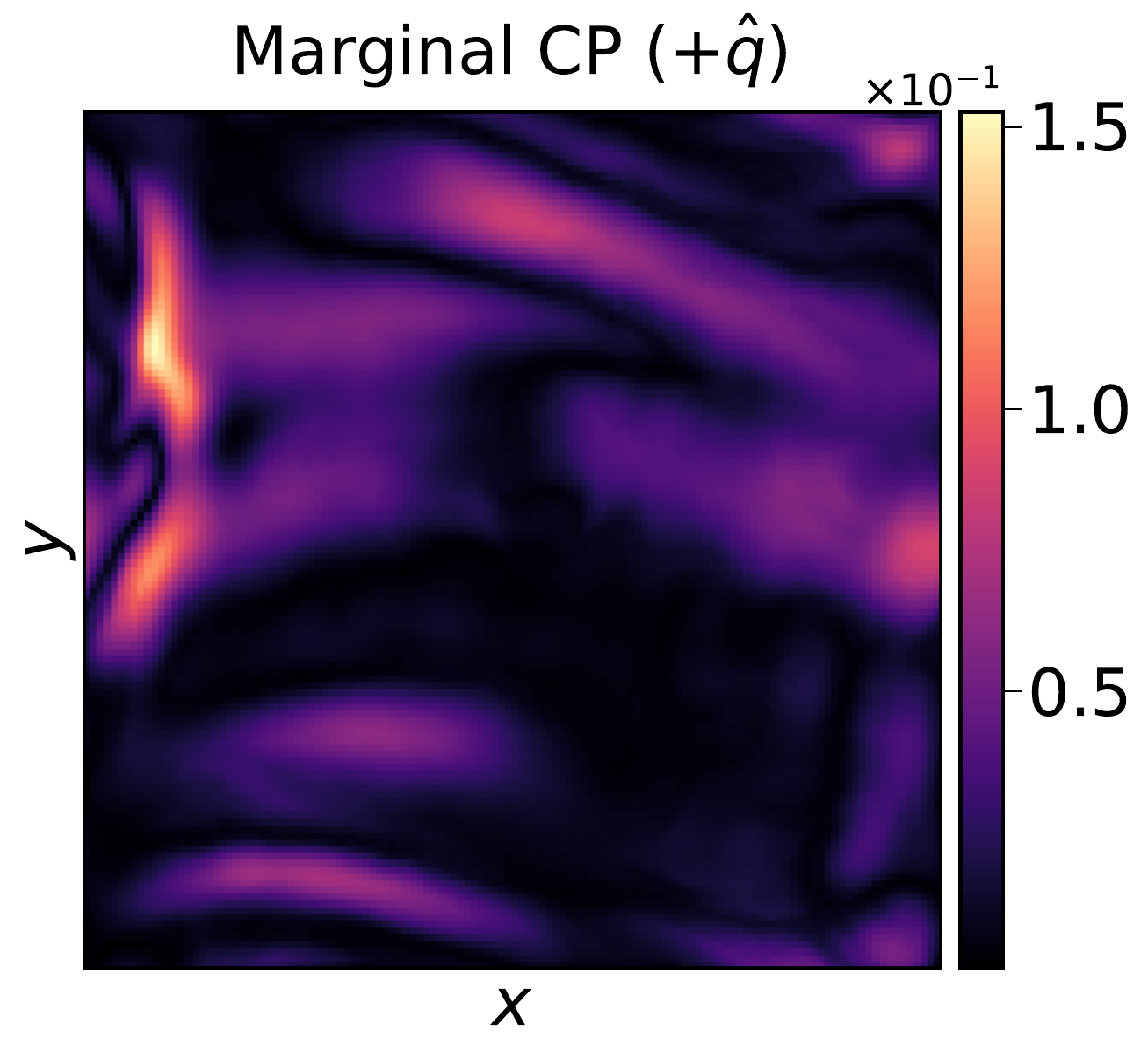}
        \label{fig:marginal_res_cont}
    }
    \subfigure[Upper error bar indicating 90\% coverage with joint-CP]{
        \includegraphics[width=0.3\textwidth]{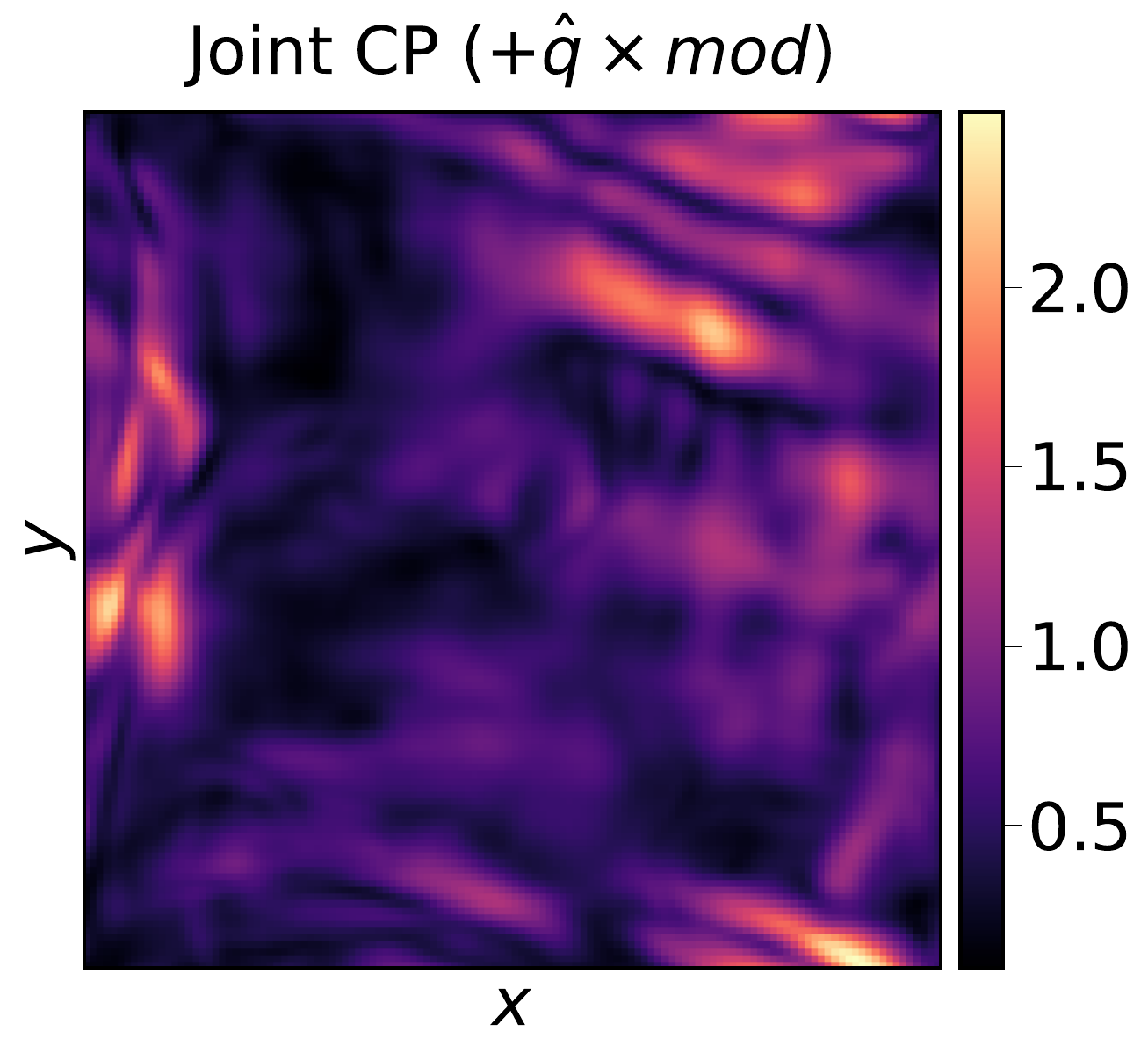}
        \label{fig:joint_res_cont}
    }
    \caption{\textbf{MHD: }CP using the Continuity \Cref{eqn:mass_cont} as the PRE for a neural PDE surrogate model solving the Ideal MHD equations.}
    \label{fig:cp_mhd_cont}
\end{figure*}

\begin{figure*}[!ht]
    \centering
    \subfigure[PRE of the Divergence Equation \cref{eqn:divB} over the FNO prediction]{
        \includegraphics[width=0.3\textwidth]{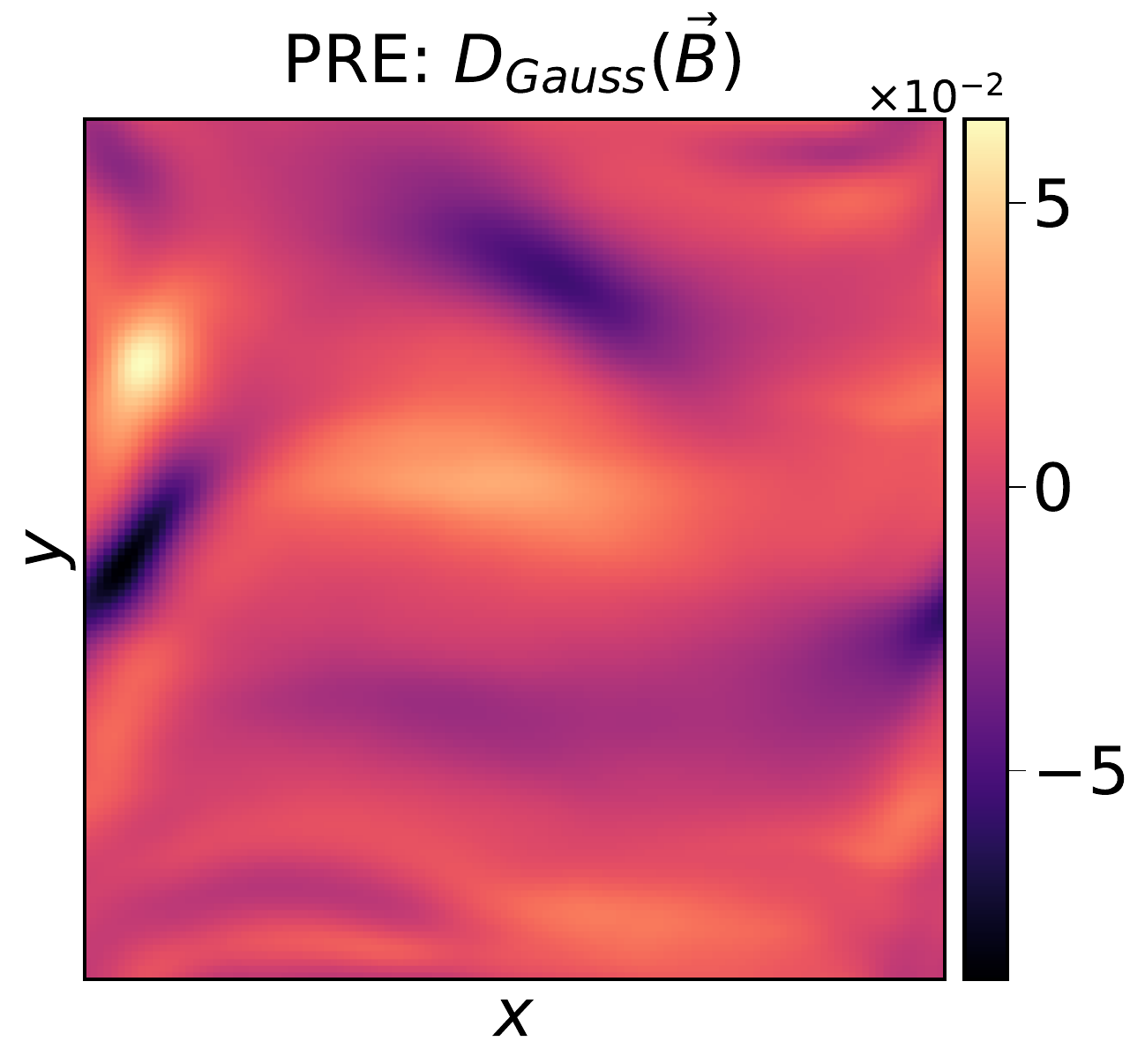}
        \label{fig:res_divergence}
    }
    \subfigure[Upper error bar indicating 90\% coverage with marginal-CP]{
        \includegraphics[width=0.3\textwidth]{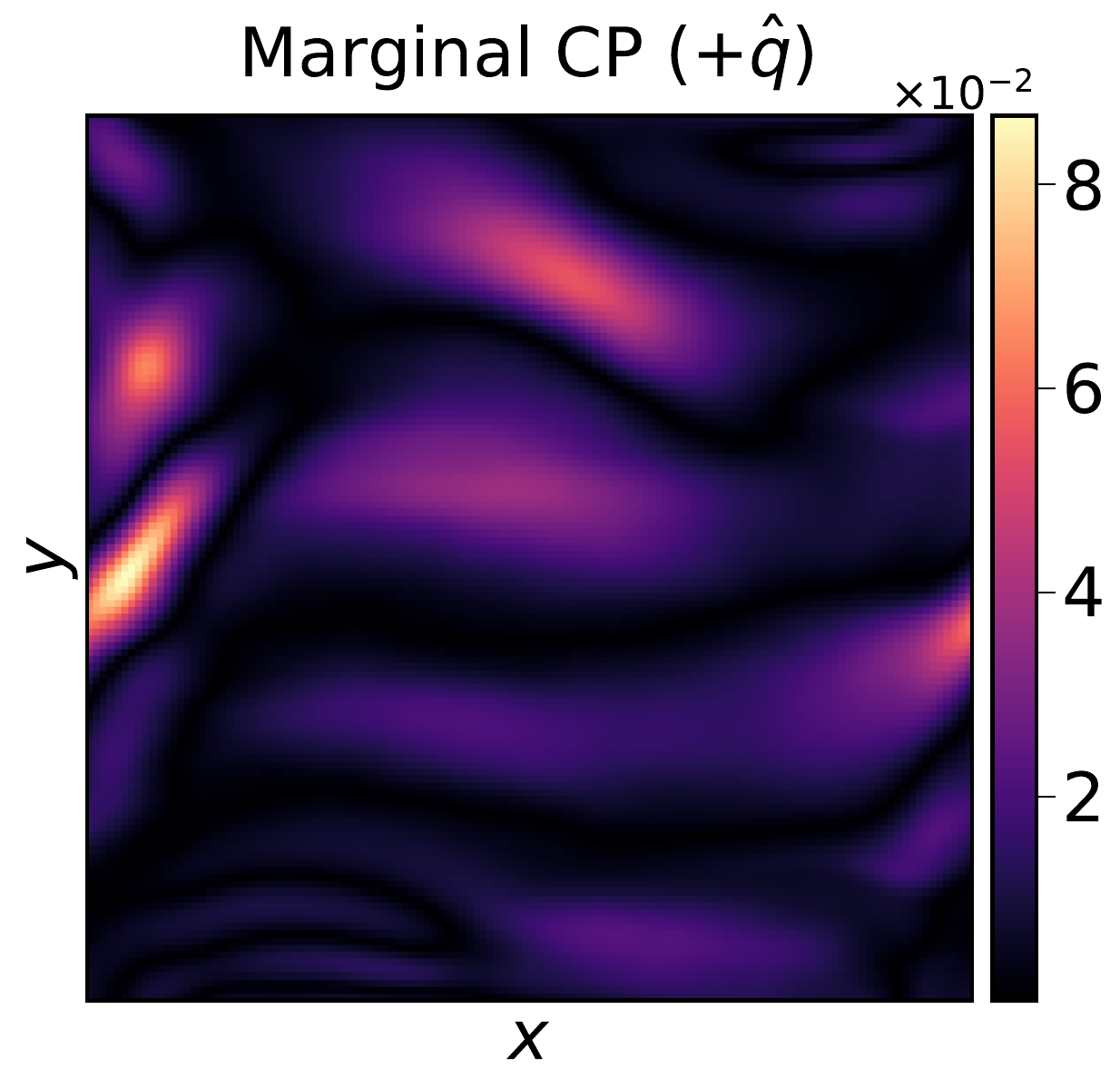}
        \label{fig:marginal_res_div}
    }
    \subfigure[Upper error bar indicating 90\% coverage with joint-CP]{
        \includegraphics[width=0.3\textwidth]{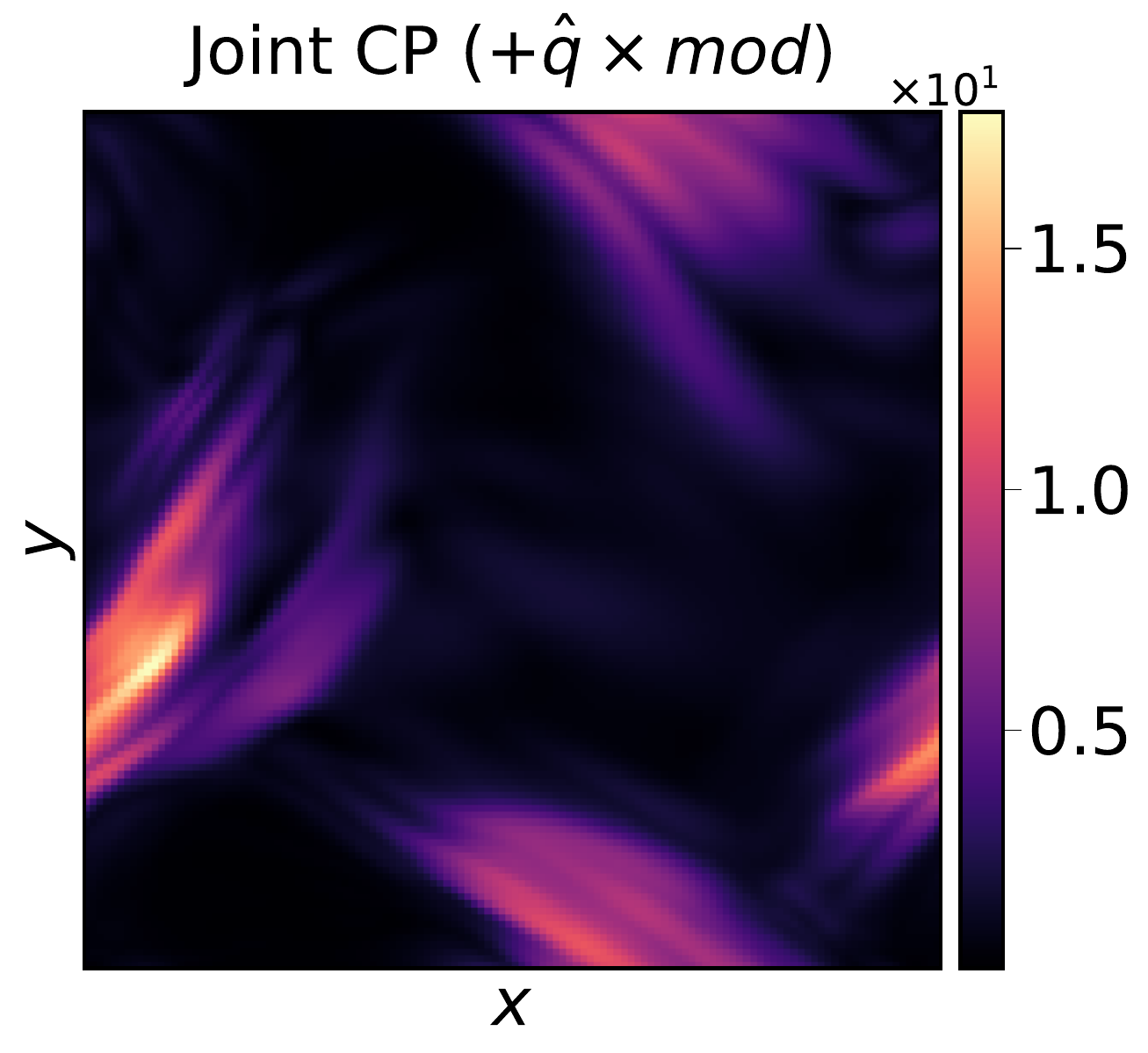}
        \label{fig:joint_res_div}
    }
    \caption{\textbf{MHD:} CP using the Gauss's law for magnetism \cref{eqn:divB} as the PRE for a neural PDE surrogate model solving the Ideal MHD equations.}
    \label{fig:cp_mhd_div}
\end{figure*}

\clearpage
\section{Plasma Modelling within a Tokamak}
\label{appendix: plasma}
\subsection{Physics and Problem Setting}

We evaluate our uncertainty quantification framework on a simplified Magneto-hydrodynamics (MHD) model in toroidal geometry, leveraging the dataset and neural architecture from \citep{gopakumar2023fourierneuraloperatorplasma}. The dataset is generated using the JOREK code \citep{Hoelzl2021jorek} with a physics model similar to the Reduced-MHD model but with electrostatic and isothermal constraints. In this setup, only the density $\rho$, electric potential $\Phi$, and temperature $T$ fields are evolved.

The physical system models the dynamics of a toroidally axisymmetric density blob initialized on top of a low background density. Without plasma current to maintain confinement, the pressure gradient in the momentum equation acts as a buoyancy force, causing radial blob motion. This simplified scenario serves as a proxy for studying the evolution of plasma filaments and Edge-Localized Modes (ELMs) in tokamak devices.

\subsection{Dataset Generation}

The dataset consists of 120 simulations (100 training, 20 testing) generated by solving the reduced MHD equations using JOREK with periodic boundary conditions. The initial conditions vary in the blob's position, width, and amplitude. Each simulation is performed within the toroidal domain with 1000 timesteps, downsampled to 100 slices and 200×200 bi-cubic finite-element spatial grid, downsampled to 100×100.

\subsection{Model Architecture and Training}

We employ the Fourier Neural Operator (FNO) architecture from \citep{gopakumar2023fourierneuraloperatorplasma} with the following specifications:

\begin{itemize}
    \item Input: 20 time instances of field values on 100×100 grid
    \item Autoregressive prediction up to 70 timesteps
    \item Output: Next 5 time instances
    \item 4 Fourier layers with width 32 and 16 modes
    \item Physics Normalisation followed by standard linear range normalization to [-1,1]
\end{itemize}

The training was conducted on a single A100 GPU with the Adam optimizer having an initial learning rate of 0.001 with halving every 100 epochs for 500 epochs total using a relative LP loss function. This model achieves a normalised MSE of $\sim4e-5$ on each field variable while providing predictions 6 orders of magnitude faster than the numerical solver. For a complete description of the MHD system and additional experimental details, we refer readers to \citep{gopakumar2023fourierneuraloperatorplasma}.

\subsection{Calibration and Validation}

For this system, we compute physics residual errors (PRE) for the temperature equation (equation 3 from \citep{Gopakumar_2024}), as it comprises all the variables modelled by the neural PDE. Empirical coverage obtained by performing CP-PRE over the JOREK FNO is shown in \cref{fig:jorek_coverage}.

\begin{figure}
    \centering
    \includegraphics[width=\columnwidth]{}
    \caption{Empirical coverage obtained by performing CP-PRE over JOREK-FNO}
    \label{fig:jorek_coverage}
\end{figure}

\newpage
\section{Magnetic Equilibrium in a Tokamak}
\label{appendix: mag_eqbm}

\subsection{Physics and Problem Setting}
A tokamak uses magnetic fields to confine and shape a plasma for nuclear fusion. While the main toroidal field (TF) running the long way around the torus provides the primary confinement, the poloidal field (PF) coils running in loops around the cross-section are crucial for plasma stability and performance. These coils serve multiple functions: they induce the plasma current that generates an additional poloidal field component, shape the plasma cross-section, and provide vertical stability control. Without these carefully controlled poloidal fields, the plasma would be unstable and quickly lose confinement. The structure of the tokamak with emphasis on the magnetic fields and the coils are given in \cref{fig:tokamak_structure}. 

The shape of the plasma cross-section significantly impacts performance, with modern tokamaks typically using a "D-shaped" plasma with strong elongation and triangularity. This shape, controlled actively by varying currents in different poloidal field coils, is superior to a simple circular cross-section as it allows for higher plasma current at a given magnetic field strength, improving confinement. The D-shape with triangularity also provides better stability against plasma instabilities, particularly edge localized modes, and enables better access to high-confinement operation. Maintaining this shape requires sophisticated feedback control systems since the plasma shape is naturally unstable and needs continuous adjustment, especially for maintaining the separatrix - the magnetic surface that defines the plasma boundary and creates the X-point for the divertor. A sample magnetic equilibrium across the poloidal cross-section is showcased in \cref{fig:mag_eqbm_poloidal}, with the equilibrium shown in the contour plots and the poloidal field coils indicated with the grey block. The structure of the tokamak and the separatrix are indicated in black and red respectively. 

Considering the impact of the PF coils on the equilibrium and, hence on the plasma efficiency, its placement within the structure is an open design problem for future tokamaks. Traditionally, the design space is explored by obtaining the equilibrium associated with a certain PF coil configuration, which involves solving the Grad-Shafranov equation using numerical solvers, rendering them computationally expensive. As an alternative, we construct a surrogate model to explore the design space significantly faster.

\begin{figure}[h]
    \centering
    \includegraphics[width=0.8\columnwidth]{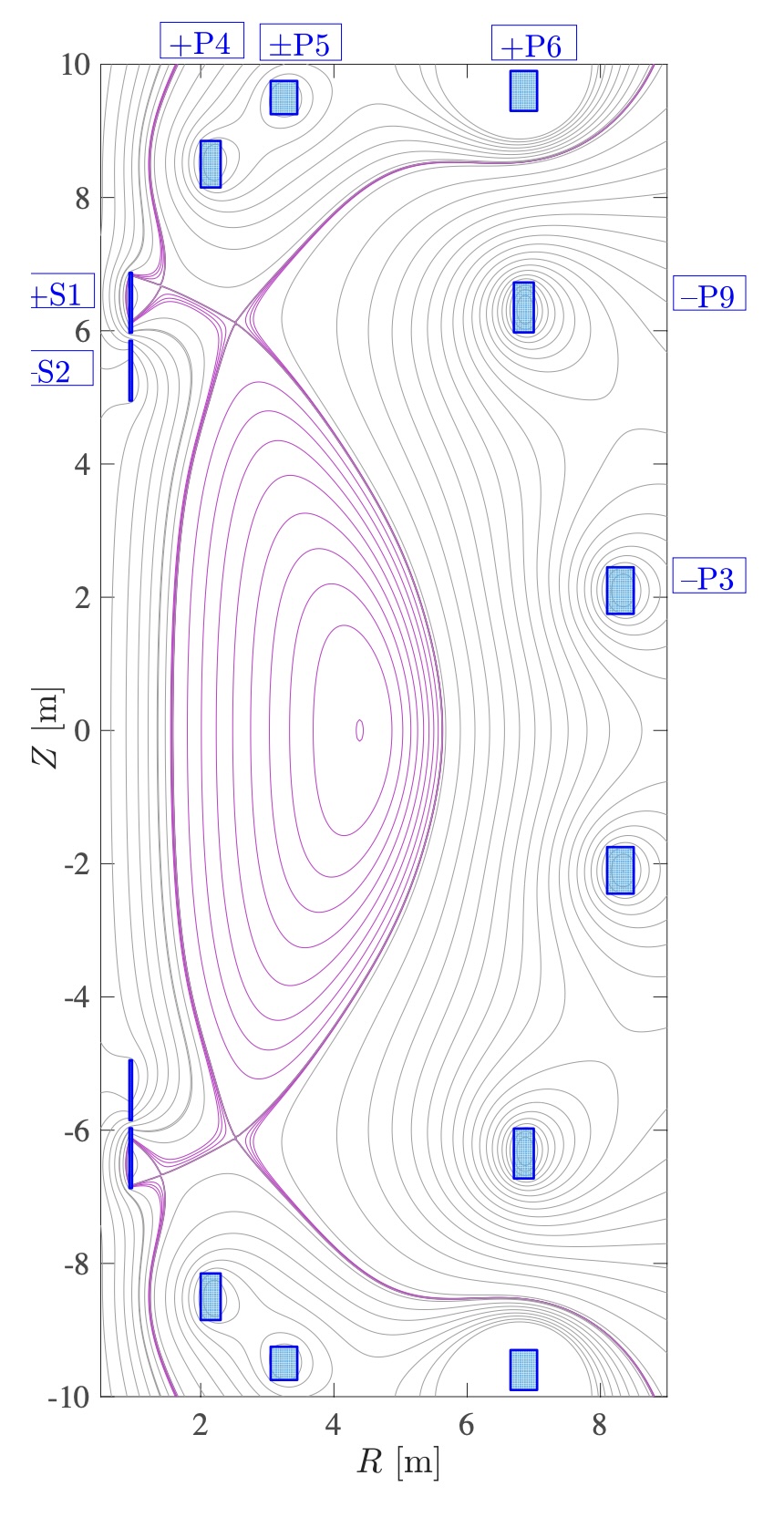}
    \caption{Sample Equilibrium plot showcasing the magnetic equilibrium as the contour plots observed for a given poloidal field (PF) configuration (PF coil locations are indicated in blue blocks). The poloidal cross-section of the tokamak is shown here, with the structural boundary in black and the closed flux surfaces in pink \citep{Hendrick2024PlasmaBurn}. Our problem looks at mapping the equilibrium across the spatial domain for a given PF coil location.}
    \label{fig:mag_eqbm_poloidal}
\end{figure}

\begin{figure*}[h]
    \centering
    \includegraphics[angle=-90, width=0.65\textwidth]{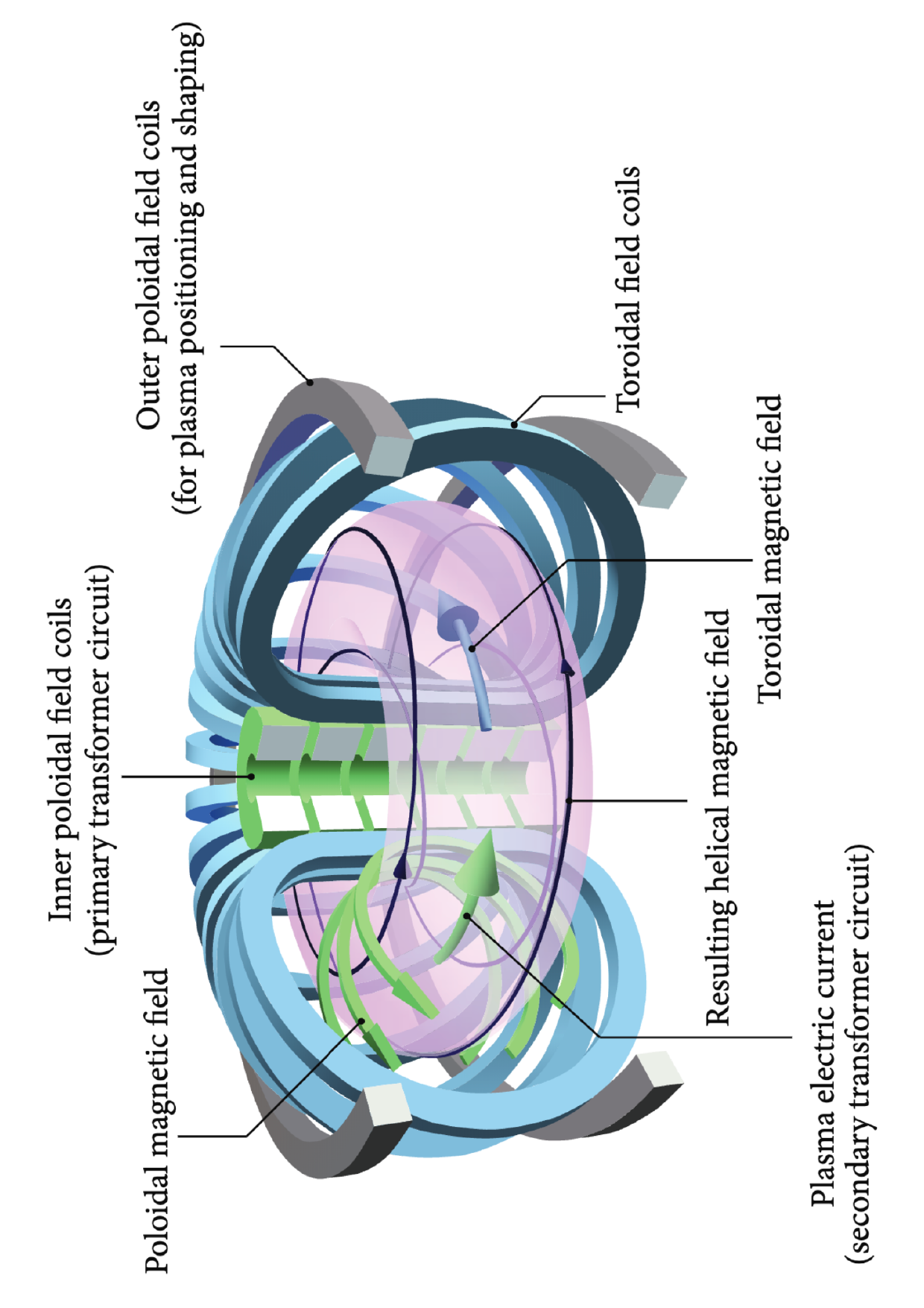}
    \caption{A simple schematic diagram of a generic tokamak with all of the main magnetic components and fields shown \citep{optimal_tracking_tokamak}. The poloidal field coil magnets (grey) are that which this work aims to optimise.}
    \label{fig:tokamak_structure}
\end{figure*}

\subsection{Surrogate Model}

A conditional auto-encoder, as given in \cref{fig: cae}, is trained as a surrogate model that can approximate the magnetic equilibrium for a given configuration of PF coils. The auto-encoder takes in as input the spatial domain of the tokamak as the input and outputs the magnetic equilibrium while being conditioned on the locations of the PF coils in the latent space. The performance of the model can be found in \cref{fig:gs_pred}. The model was trained on 512 simulations of the Grad-Shafranov simulations obtained using the FreeGSNKE code \citep{Amorisco2024}. The coil currents are kept constant as we are looking to identify the ideal coil configuration for a steady-state plasma. 

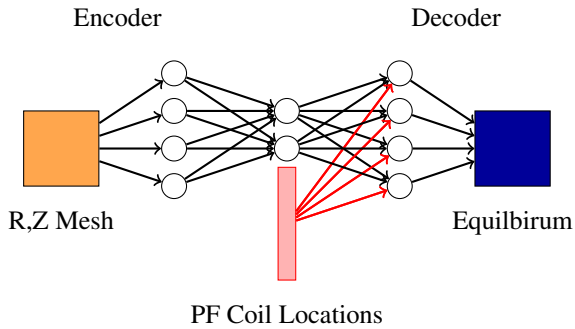
\begin{figure}
    \centering
    \begin{tikzpicture}[
        scale=0.5, 
        neuron/.style={circle, draw=black, minimum size=0.3cm},
        input image/.style={rectangle, draw=black, minimum width=1cm, minimum height=1cm, fill=orange!70},
        output image/.style={rectangle, draw=black, minimum width=1cm, minimum height=1cm, fill=blue!60!black},
        condition vector/.style={rectangle, draw=red, minimum width=0.2cm, minimum height=1.5cm, fill=red!30},
        arrow/.style={->, thick}
    ]
    
    % Input image
    \node[input image] (input) at (-6,0) {};
    \node[below=0.2cm of input] {R,Z Mesh};
    
    % Encoder layers
    \foreach \x in {1,...,4} {
        \node[neuron] (e\x) at (-3,3-\x) {};
    }
    
    % Latent space
    \foreach \x in {1,...,2} {
        \node[neuron] (l\x) at (0,2-\x) {};
    }
    
    % Conditioning vector
    \node[condition vector] (cond) at (0,-2) {};
    \node[below=0.2cm of cond] {PF Coil Locations};
    
    % Decoder layers
    \foreach \x in {1,...,4} {
        \node[neuron] (d\x) at (3,3-\x) {};
    }
    
    % Output image
    \node[output image] (output) at (6,0) {};
    \node[below=0.2cm of output] {Equilbirum};
    
    % Connections
    % Input to encoder
    \foreach \i in {1,...,4} {
        \draw[arrow] (input) -- (e\i);
    }
    
    % Encoder to latent
    \foreach \i in {1,...,4} {
        \foreach \j in {1,...,2} {
            \draw[arrow] (e\i) -- (l\j);
        }
    }
    
    % Latent and condition to decoder
    \foreach \i in {1,...,2} {
        \foreach \j in {1,...,4} {
            \draw[arrow] (l\i) -- (d\j);
            \draw[arrow, red] (cond) -- (d\j);
        }
    }
    
    % Decoder to output
    \foreach \i in {1,...,4} {
        \draw[arrow] (d\i) -- (output);
    }
    
    % Add labels for components
    \node[align=center] at (-4.5,3.5) {Encoder};
    % \node[align=center] at (0,3.5) {PF Coil\\Locations};
    \node[align=center] at (4.5,3.5) {Decoder};
    
    \end{tikzpicture}
    \caption{Conditional auto-encoder developed as a surrogate model for mapping the poloidal field coil locations to the corresponding magnetic equilibrium under constant coil currents.}
    \label{fig: cae}
\end{figure}

\begin{figure}[h]
    \centering
    \includegraphics[width=\columnwidth]{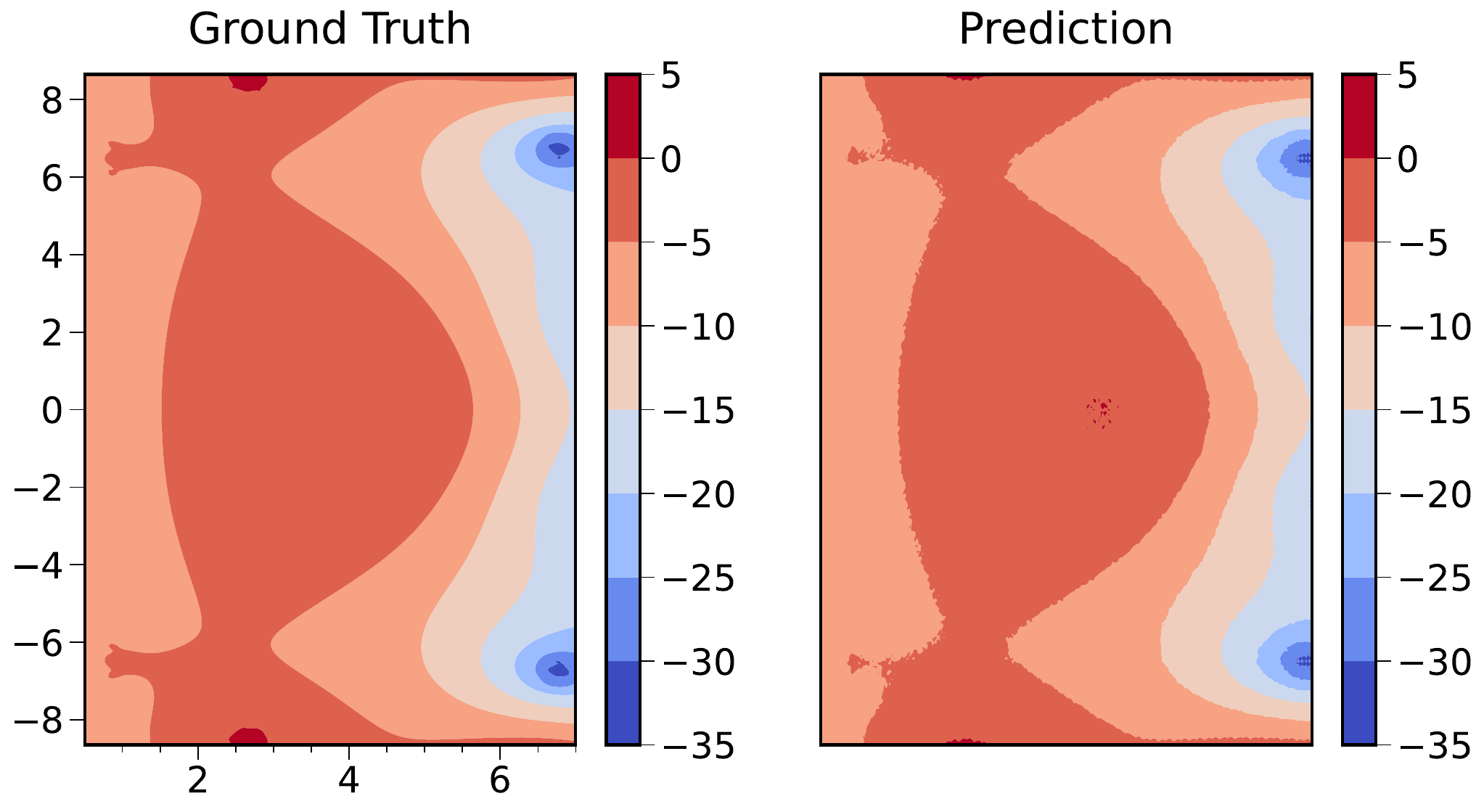}
    \caption{Comparing the ground truth with the prediction from the surrogate model.}
    \label{fig:gs_pred}
\end{figure}

\subsection{Calibration and Validation}
CP-PRE is performed by evaluating the Grad-Shafranov equation over the solution space of the surrogate model, with inputs generated by sampling within the bounds of the PF coil locations. By using the GS equation, we can identify the predictions from the surrogate model that generate untenable equilibria. This allows us to explore the design space quickly while adding trustworthiness to your surrogate model. Both marginal and coverage obtained using CP-PRE with GS equations over the surrogate model is indicated in \cref{fig:gs_coverage}. Marginal-CP shows smooth coverage as it represents the coverage averaged across each cell.

\begin{figure}[h]
    \centering
    \includegraphics[width=0.6\columnwidth]{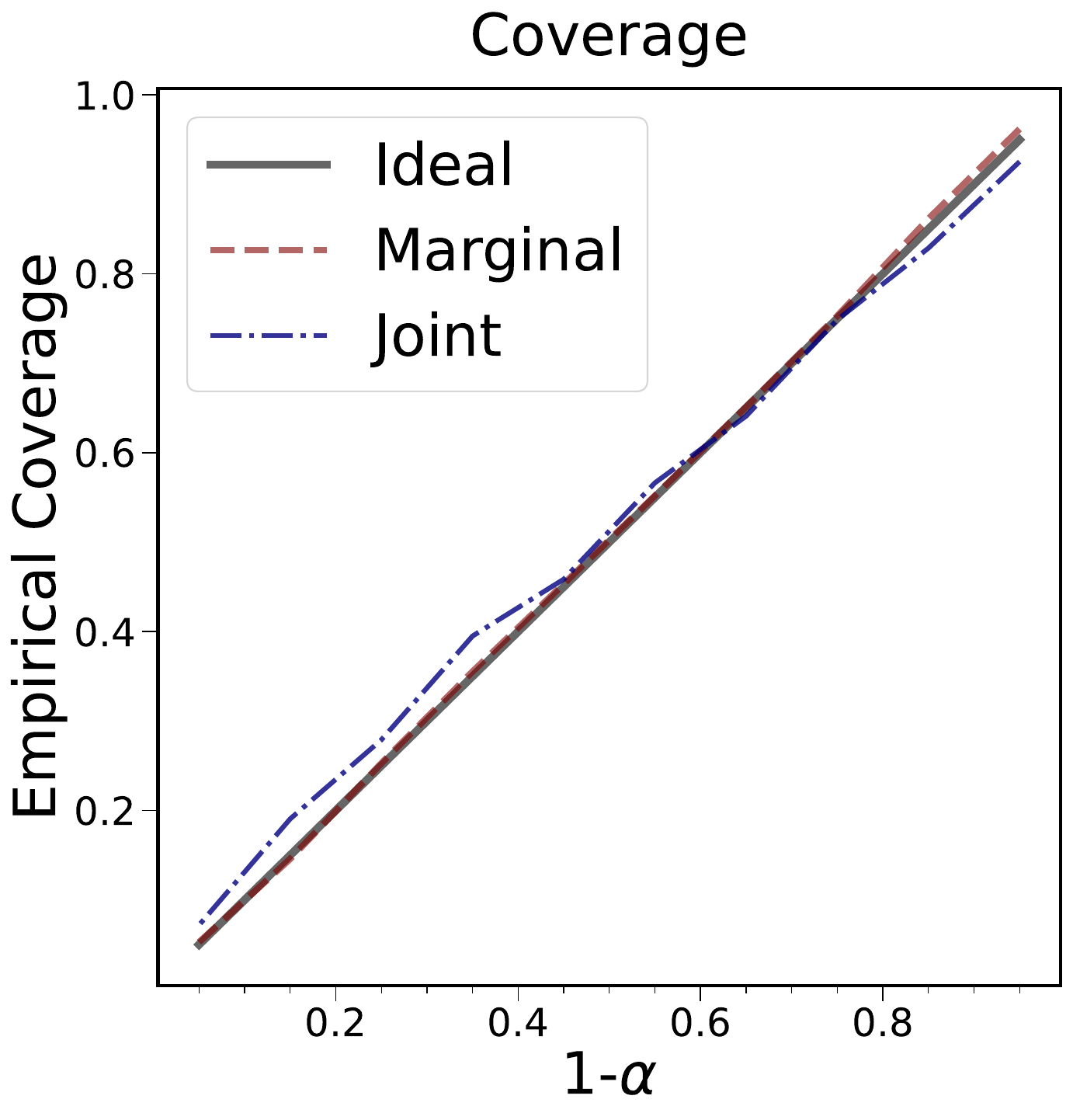}
    \caption{Marginal and joint empirical coverage obtained by performing CP-PRE over the Grad-Shafranov equation}
    \label{fig:gs_coverage}
\end{figure}

\end{document}